\renewcommand{\star}{*}
\newcommand{\ecpsi}{e-cPSI}
\newcommand{\cpsi}{cPSI}
\newcommand{\omp}{W}%{\varphi}
\let\underbrace\LaTeXunderbrace
\DeclarePairedDelimiterX\norm[1]\lVert\rVert{
\ifblank{#1}{\:\cdot\:}{#1}
}
\providecommand\given{}
\DeclarePairedDelimiterX\Set[1]\{\}{%
\renewcommand\given{\SetSymbol[\delimsize]}
#1
}
\DeclarePairedDelimiterXPP\lnorm[1]{}\lVert\rVert{_2}{\ifblank{#1}{\:\cdot\:}{#1}}
\DeclarePairedDelimiterXPP\pnorm[2]{}\lVert\rVert{_{#1}}{\ifblank{#2}{\:\cdot\:}{#2}}
\DeclarePairedDelimiterXPP\Prob[1]{\mathbb{P}}(){}{
\renewcommand\given{\nonscript\:\delimsize\vert\nonscript\:\mathopen{}}
\definecolor{Bleu}{RGB}{90,144,245}
\definecolor{Red}{HTML}{FF617B}
\newcommand{\closure}[0]{\mathrm{cl}}
\theoremstyle{plain}
\newtheorem{theorem}{Theorem}[section]
\newtheorem{proposition}[theorem]{Proposition}
\newtheorem{lemma}[theorem]{Lemma}
\theoremstyle{definition}
\newtheorem{definition}[theorem]{Definition}
\theoremstyle{remark}
\newtheorem{remark}[theorem]{Remark}
\theoremstyle{example}
\newcommand{\opt}{\ensuremath{O^*}}
\newcommand{\C}{\subset}
\newcommand{\T}{\intercal}
\DeclareMathOperator{\m}{m}
\DeclareMathOperator{\M}{M}
\newcommand{\bP}{\mathbb{P}}
\newcommand{\bE}{\mathbb{E}}
\newcommand{\bR}{\mathbb{R}}
\newcommand{\cN}{\mathcal{N}}
\newcommand{\cM}{\mathcal{M}}
\newcommand{\cX}{\mathcal{X}}
\DeclareMathOperator{\KL}{KL}
\DeclareMathOperator{\kl}{kl}
\newcommand{\cA}{\mathcal{A}}
\newcommand{\cB}{\mathcal{B}}
\newcommand{\cL}{\mathcal{L}}
\newcommand{\cF}{\mathcal{F}}
\newcommand{\cD}{\mathcal{D}}
\newcommand{\cO}{{O}}
\newcommand{\cI}{\mathcal{I}}
\newcommand{\ind}{\mathbb{1}}
\newcommand{\cE}{\mathcal{E}}
\newcommand{\veps}{\varepsilon}
\newcommand{\iid}{{ \it i.i.d }}
\newcommand{\impl}{\Longrightarrow}
\DeclareMathOperator*{\argmax}{argmax}
\DeclareMathOperator*{\argmin}{argmin}
\newcommand{\equi}{\Longleftrightarrow}
\DeclareMathOperator{\mh}{{{m}}}
\DeclareMathOperator{\Mh}{{{M}}}
\DeclareMathOperator{\alt}{\mathrm{Alt}}
\DeclareMathOperator{\pto}{\mathrm{Par}}
\DeclareMathOperator{\dist}{\mathrm{dist}}
\newcommand{\muh}{\ensuremath{{\hat{\mu}}}} % Temp 
\newcommand{\vmu}{\ensuremath{{\mu}}} % Temp 
\DeclarePairedDelimiterX\innerp[2]{\langle}{\rangle}{#1,#2}
\newcommand{\vmuh}{\ensuremath{{\hat{\mu}}}} % Temp 
\newcommand{\lp}{\ensuremath{\left(}}
\newcommand{\rp}{\ensuremath{\right)}}
\newcommand{\lb}{\ensuremath{\left\{}}
\newcommand{\rb}{\ensuremath{\right\}}}
\newcommand{\wt}{\ensuremath{\widetilde}}
\newcommand{\wh}{\ensuremath{\widehat}}
\renewcommand{\complement}{\mathsf{c}}
\let\leq\leqslant
\let\geq\geqslant
\newcommand{\simplex}{\mathcal{S}_{K}}
\newcommand{\eqdef}{ := }
\newcommand{\ceq}{\ensuremath{\coloneqq}}
\newcommand{\poly}{\ensuremath{P}}
\newcommand{\subopt}{\ensuremath{\mathrm{SubOpt}}}
\newcommand{\feas}{\ensuremath{\mathrm{F}}}
\newcommand{\W}{\widehat}
\newcommand{\myeqref}[1]{Equation~\eqref{#1}}
\newcommand{\myfig}[1]{Figure~\ref{#1}}
\newcommand{\ptoe}{\mathrm{Par}}
\newcommand{\Int}[1]{\ensuremath{\mathrm{int}(#1)}}
\tikzstyle{every picture}+=[remember picture]
\tikzstyle{na} = [baseline=-.5ex]
\definecolor{yellow2}{rgb}{0.8862745 , 0.84313726, 0.}
\definecolor{red2}{rgb}{0.9607843 , 0.3137255 , 0.07450981}
\definecolor{green3}{rgb}{0.        , 0.43529412, 0.52156866}
\definecolor{blue}{rgb}{0.08627451211214066, 0.125490203499794, 0.23529411852359772}
\icmltitlerunning{Constrained Pareto Set Identification with Bandit Feedback}
\begin{document}

\twocolumn[
\icmltitle{Constrained Pareto Set Identification with Bandit Feedback}
%\icmltitle{Near-Optimal Identification of Pareto Sets under Feasibility Constraints}
% It is OKAY to include author information, even for blind
% submissions: the style file will automatically remove it for you
% unless you've provided the [accepted] option to the icml2024
% package.

% List of affiliations: The first argument should be a (short)
% identifier you will use later to specify author affiliations
% Academic affiliations should list Department, University, City, Region, Country
% Industry affiliations should list Company, City, Region, Country

% You can specify symbols, otherwise they are numbered in order.
% Ideally, you should not use this facility. Affiliations will be numbered
% in order of appearance and this is the preferred way.
\icmlsetsymbol{equal}{*}

\begin{icmlauthorlist}
\icmlauthor{Cyrille Kone}{yyy}
\icmlauthor{Emilie Kaufmann}{yyy}
\icmlauthor{Laura Richert}{comp}
%\icmlauthor{Firstname4 Lastname4}{sch}
%\icmlauthor{Firstname5 Lastname5}{yyy}
%\icmlauthor{Firstname6 Lastname6}{sch,yyy,comp}
%\icmlauthor{Firstname7 Lastname7}{comp}
%\icmlauthor{}{sch}
%\icmlauthor{Firstname8 Lastname8}{sch}
%\icmlauthor{Firstname8 Lastname8}{yyy,comp}
%\icmlauthor{}{sch}
%\icmlauthor{}{sch}
\end{icmlauthorlist}

\icmlaffiliation{yyy}{Univ. Lille, Inria, CNRS, Centrale Lille, UMR 9198-CRIStAL, F-59000 Lille, France}
\icmlaffiliation{comp}{Univ. Bordeaux, Inserm, Inria, BPH, U1219, Sistm, F-33000 Bordeaux, France}
%\icmlaffiliation{sch}{School of ZZZ, Institute of WWW, Location, Country}

\icmlcorrespondingauthor{Cyrille Kone}{cyrille.kone@inria.fr}
%\icmlcorrespondingauthor{Firstname2 Lastname2}{first2.last2@www.uk}

% You may provide any keywords that you
% find helpful for describing your paper; these are used to populate
% the "keywords" metadata in the PDF but will not be shown in the document
\icmlkeywords{bandit, pure exploration, multi-objective optimization, Pareto Set identification}

\vskip 0.3in
]

% this must go after the closing bracket ] following \twocolumn[ ...

% This command actually creates the footnote in the first column
% listing the affiliations and the copyright notice.
% The command takes one argument, which is text to display at the start of the footnote.
% The \icmlEqualContribution command is standard text for equal contribution.
% Remove it (just {}) if you do not need this facility.

%\printAffiliationsAndNotice{}  % leave blank if no need to mention equal contribution
\printAffiliationsAndNotice{} % otherwise use the standard text.

\begin{abstract} 
In this paper, we address the problem of identifying the Pareto Set under feasibility constraints in a multivariate bandit setting. Specifically, given a $K$-armed bandit with unknown means $\mu_1, \dots, \mu_K \in \mathbb{R}^d$, the goal is to identify the set of arms whose mean is not uniformly worse than that of another arm (i.e., not smaller for all objectives), while satisfying some known set of linear constraints, expressing, for example, some minimal performance on each objective. Our focus lies in fixed-confidence identification, for which we introduce an algorithm that significantly outperforms racing-like algorithms and the intuitive two-stage approach that first identifies feasible arms and then their Pareto Set. We further prove an information-theoretic lower bound on the sample complexity of any algorithm for constrained Pareto Set identification, showing that the sample complexity of our approach is near-optimal. Our theoretical results are supported by an extensive empirical evaluation on a series of benchmarks. 
\end{abstract}

\section{Introduction}
\label{sec:intro} 

Pareto Set Identification (PSI) is a fundamental active testing problem in which a learner sequentially queries samples from a multi-variate multi-armed bandit to identify -- as efficiently as possible -- the arms whose average return satisfies optimal trade-offs among possibly competing objectives \citep{zuluaga_active_2013, auer_pareto_2016}. In this framework, a $K$-armed bandit is denoted $\nu \ceq (\nu_1,\dots, \nu_K)$, where each arm $\nu_k$ is a multi-variate distribution with mean $\mu_k \in \bR^d$ and $d\geq 1$. Given a bandit model with means $\mu \ceq (\mu_1,\dots,\mu_K)$, its Pareto Set is the set of arms $k$ such that for any other arm $k'$, there exists an objective (or a marginal) $c$ for which arm $k$ is in expectation preferable to $k'$ (i.e. $\mu_k^c \geq \mu_{k'}^c$). Each arm in the Pareto Set satisfies an optimal trade-off between different objectives in the sense that no other arm can improve over it for all objectives. This setting extends the well-studied \emph{Best Arm Identification} framework (e.g., \citet{even-dar_action_nodate,audibert_best_2010,jamieson_best-arm_2014}) to problems where multiple metrics need to be optimized simultaneously with no clear preference.

 PSI has diverse applications, including recommender systems where multiple performance metrics are considered \citep{optim_ecom, educ_recom,spotify}, or hardware or software design optimization \citep{almer,zuluaga_active_2013}, which frequently involves trade-offs between metrics such as energy consumption and performance. We are particularly interested in potential applications to clinical trials, in which multiple biological markers may be tracked to assess the efficacy of a treatment \citep{munro_safety_2021}, or in which efficacy and safety may be jointly monitored \citep{Jennison1993GroupST}.

In the literature on PSI, no particular constraints are imposed on the arms in the Pareto Set, allowing for situations where an arm may exhibit very low expected performance on one objective but still be in the Pareto Set due to its good performance on another. This lack of constraints may be undesirable in practice. For instance, in clinical trials, it may be important to identify good candidate drugs having some minimal performance level on some indicator of efficacy (e.g., a minimal antibody response in the case of vaccine development) or avoiding exceeding some maximal toxicity threshold \citep{secukinumab_safety_2013}. Similarly, in multi-objective recommender systems or material design optimization, one might be interested in finding the Pareto Set of items that satisfy some minimal performance metrics or dimensions constraints \citep{KUMAR2021100961, AFSHARI2019105631}. Existing PSI algorithms are unable to handle such constraints. To fill this gap, we introduce a novel active testing problem termed \emph{constrained Pareto Set Identification}. In this setting, the learner is given a polyhedron $\poly \ceq  \Set{x \in \bR^d \given \bm Ax \leq b }$ describing solutions to a set of linear constraints with $\bm A\in \bR^{m, d}$ and $b\in \bR^m$ (i.e., $m$ linear constraints). Any arm $k$ such that $\mu_k \in \poly$ is called feasible. The goal of the learner is to identify the Pareto Set of the feasible arms.    

We study the constrained PSI problem in the fixed-confidence setting. In this framework, the learner is allowed an unrestricted exploration phase and aims to identify the Pareto Set of the feasible arms with high confidence, while minimizing the expected length of the exploration phase, called the \emph{sample complexity}. 

 \subsection{Setting} 
 
 We denote by $K$ the number of arms,  $d\geq 1$ the number of objectives, and $[K] \ceq \Set{1, \dots, K}$ the set of arms.
 We let  $\cD$ be a family of distributions in $\bR^d$. Given a multivariate bandit instance $\nu \ceq (\nu_1, \dots, \nu_K) \in \cD^K$, we denote the collection of its mean vectors by $\bm \mu \ceq (\mu_1, \dots, \mu_K) \in \cI^K$, where $\cI \subset \bR^d$ and $\mu_k \ceq \bE_{X \sim \nu_k}[X]$. We shall use $\bm\mu$ to represent the distribution $\nu$. 
  In the sequel $\cD$ will be either a set of subGaussian distributions\footnote{Letting $(X_i^{c})_{c\leq d}$ be a realization of arm $i$, it is marginally $\sigma$-subGaussian if for all $c \leq d$, $\forall \lambda\in \bR, \bE[e^{\lambda(X_i^{c} - \mu_{i}^{c})}] \leq e^{\frac{\lambda^2\sigma^2}{2}}$ and norm-$\sigma_u$-subGaussian if $\forall z \in \bR^d$ such that $\|z\|_2=1, X_i^\T z$ is $\sigma_u$-subGaussian.} (for algorithms) or a parametric set of multivariate Gaussian distribution (for which we will derive lower bounds).

At each time $t=1,2,\dots$, the agent chooses an arm $k_t \in [K]$ (based on past observations) and observes an outcome $X_t \sim \eta_{k_t}$ independent of past actions and observations. With $\cF_t \ceq \sigma(k_1, X_1,\dots, k_t, X_t, k_{t+1})$ the $\sigma$-algebra representing the history collected up to time $t$, we let $\cF \ceq \Set{\cF_t, t\geq 1}$ be the natural filtration associated to the sequential process. 
   An algorithm for a multi-objective {pure exploration} problem --that, in general, aims at finding the answer to some question about the means $\bm\mu$ -- is made of (1) a \emph{sampling rule} to choose $k_t$ that is predictable i.e $k_{t+1}$ is $\cF_t$-measurable; (2) a \emph{recommendation rule} $R_t$ that gives an $\cF_t$-measurable guess of the answer and finally (3) 
a \emph{stopping rule} that determines a time $\tau$ to stop the data acquisition process, which is a stopping time w.r.t. to $\cF$. The output of the algorithm is the recommendation made upon stopping, $R_{\tau}$. Given some algorithm, we denote by $\bP_{\nu}$ the law of the stochastic process $\Set{k_t, X_t: t\geq 1}$ and by $\bE_{\nu}[\cdot]$, the expectation under $\bP_{\nu}$. The subscript may be omitted when $\nu$ is clear from the context. 

In the particular pure-exploration problems that are our focus in the paper, the agent is given a convex polyhedron $\poly \ceq \Set{ x \in \bR^d \given \bm A x \leq b }$ where $\bm A\in \bR^{m\times d}$ is matrix, $b\in\bR^{m}$ and $m$ is the number of constraints. Given this polyhedron, we consider two learning tasks in this work.  

{\bfseries Constrained PSI (cPSI)} The learner should identify $\opt$, the Pareto Set of the arms that belong to $\poly$.
\\\newline {\bfseries Constrained PSI with explainability (e-cPSI)} The learner should identify $\opt$ but in addition she is asked to provide a reason for rejecting any arm that does not belong to $\opt$. Every arm outside of $\opt$ should be classified as either infeasible (does not satisfy the constraints) or dominated by a feasible arm. Note that for arms that are both infeasible and dominated, both explanations are valid.

While \cpsi{} may be adequate for certain applications, \ecpsi{} becomes particularly relevant in scenarios like clinical trials, where the experimenter needs a clear justification for rejecting each arm. This need for explainability makes \cpsi{} especially important, and as a result, developing an efficient algorithm for it will be our primary focus. 

We remark that \citet{katz-samuels_top_2019} studied a similar ``$\delta$-PAC-EXPLANATORY'' framework for the identification of a feasible arm that maximizes some linear combination of the objectives.

\subsection{Contributions}
 \label{ssec:contributions}

 A natural approach to constrained PSI consists in applying first an algorithm for identifying the feasible set (e.g., the one of \citet{katz-samuels_feasible_2018}) and then applying an algorithm to identify the Pareto set of the arms that were returned as feasible by the previous algorithm (e.g., that of \citet{crepon24a}). We will see in Section~\ref{sec:algo} that such a naive two-staged approach can be arbitrary inefficient. This highlights the necessity to propose algorithms specifically tailored to constrained Pareto Set Identification, be it with or without the explainability component. 
 
 Our contributions are the following. First, we present in Section~\ref{sec:optimality} sample complexity lower bounds for both the cPSI and e-cPSI problems. For cPSI, leveraging a gradient-ascent approach, we propose an algorithm matching the lower bound in the asymptotic regime in which the error probability goes to zero (see Algorithm~\ref{alg:safe_psi_exact} in Appendix~\ref{sec:unxpl}), with a complexity that remains polynomial in the number of arms. Yet such an approach cannot be used for the more challenging e-cPSI and we devote the rest of our effort to proposing and analyzing an efficient algorithm for this setting. In Section~\ref{sec:algo}, we present e-cAPE, an algorithm leveraging upper and lower confidence bounds to efficiently balance exploration between feasibility detection and Pareto Set Identification. We provide an upper bound on its sample complexity and prove that it can be arbitrarily smaller than that of the two-stage baseline discussed above, and can be near-optimal for some classes of instances. Finally, in Section~\ref{sec:expe} we validate our algorithms through an empirical evaluation on two real-world datasets from clinical trials. Additional experiments are provided in Appendix~\ref{sec:impl}.
 
\paragraph{Notation} 
  Given  $x\in \bR^d$, let $x^c$ denotes its $c$-th component and $\dist(x, \cX) \ceq \inf\Set{\lnorm{x-y} : y \in \cX}$ the distance to $\cX$, for $\cX \C \bR^d$. $\closure(P)$ denotes the closure of $\poly$, $\Int{\poly}$ its interior and $\partial\poly \ceq \closure(\poly)\backslash \Int{\poly}$, the boundary of $\poly$.  For $a,b \in \bR$, $(a)_+ \ceq \max(a, 0)$, $a\land b = \min(a, b)$ and $a\lor b = \max(a, b)$. We use bold symbols for matrices. A glossary of notation is provided in Appendix~\ref{sec:outline_notation}.

\section{Related Work} Best Arm Identification (BAI) can be seen as a special case of PSI when $d=1$. In recent years, significant attention has been given to developing asymptotically optimal algorithms for fixed-confidence BAI when the arms distribution comes from a parametric family. These algorithms satisfy strong optimality guarantees on the sample complexity in the regime where the error probability $\delta$ is small. Notable contributions to this line of work include \cite{garivier_optimal_2016, degenne_pure_2019, NEURIPS2021_2dffbc47,you23a}. Another line of research has developed efficient BAI algorithms using upper and lower confidence bounds \citep{kalyanakrishnan_pac_2012, jamieson_best-arm_2014}, whose sample complexity bounds are expressed in terms of sub-optimality gaps, but is not matching the existing lower bound when $\delta$ goes to zero.  

Extending these techniques to multi-variate bandits induces additional complexity due to the lack of a total ordering among arms. Recent work has successfully addressed these challenges by proposing efficient algorithms for PSI. 

\citet{zuluaga_active_2013, zuluaga_e-pal_2016} studied PSI in the bandit setting and introduced an algorithm based on Gaussian Process regression techniques and confidence regions to adaptively identify the Pareto Set while sequentially discarding sub-optimal arms (i.e., arms that are not in the Pareto Set). 
 \citet{auer_pareto_2016} introduced the first $(\veps,\delta)$-PAC algorithm for PSI and proposed a lower bound on the sample complexity of this problem. The authors identified a problem-dependent complexity term $H(\bm\mu)$ that is written as a sum of inverse square ``sub-optimality gaps" and which is the leading term of the sample complexity. 
 %Both algorithms follow a racing-type approach, sampling all active arms until termination while periodically discarding some arms.
  \citet{pmlr-v238-karagozlu24a} further extended the algorithm of \citet{auer_pareto_2016} to bandit multi-objective optimization under any cone-induced pre-order.  \citet{kone2023adaptive} introduced a generalization of the LUCB algorithm \citep{kalyanakrishnan_pac_2012} for PSI and studied different relaxations of the problem to address different use cases. The algorithms of \citet{auer_pareto_2016} and \citet{kone2023adaptive} both identify the Pareto Set by collecting at most $O\left(H(\bm\mu) \log(K\log(H(\bm\mu)) /\delta  )\right)$ samples. An extension of PSI in which each arm is characterized by observable features on which its mean depends linearly was further studied by \citet{kone2024multi}.

\citet{crepon24a} proposed an algorithm with tight guarantees in the asymptotic regime (i.e., when $\delta \to 0$). Their algorithm is based on iteratively solving an optimization problem that characterizes the optimal problem-dependent sample complexity of PSI. The authors extended the online gradient-ascent algorithm of \citet{menard2019gradientascentactiveexploration} and proposed an efficient procedure for computing the (super)gradient of the optimization problem. Still in the same confidence regime,  \citet{kone2024paretosetidentificationposterior} introduced an algorithm that relies on the posterior distribution to determine which arm to sample at each round and when to stop collecting new samples. While these algorithms are optimal in the asymptotic regime for multivariate Gaussian arms, they do not come with an explicit upper bound on their sample complexity for a fixed confidence level $\delta$. Hence, their performance in the moderate confidence regime is still to be investigated. However, their practical performance is appealing even for moderate values of $\delta$ and subGaussian arms.

Notably, none of these works address the additional challenge of handling constraints on the Pareto Set. 

Feasibility detection in multi-objective bandit problems was studied by \citet{katz-samuels_feasible_2018}: given a set of linear constraints materialized by a polyhedron $\poly \C \bR^d$, one should identify the set of arms whose means belong to $\poly$. %However, this approach does not consider the joint problem of identifying the Pareto set among feasible arms.
A constrained BAI problem was further studied by \citet{katz-samuels_top_2019}: given a weight vector $w \in \bR^d$, the learner should identify the arm that maximizes $w^\T \vmu_k$ among arms $k$ belonging to $\poly$.  While this approach assumes a known preference weighting, it fundamentally differs from constrained PSI, where no such preference vector $w$ exists to rank the feasible arms. In situations where there is no clear preference vector $w$, it may be desirable to find instead their Pareto Set. For example in early-stage vaccine research, several immune markers are of interest, and it is at that stage not yet known which one(s) are the most relevant to protect against new diseases. 

Finally, while we consider in this work the identification of the Pareto Set under known constraints materialized by $\poly$, other works have studied the constrained regret minimization setting where the learner should pull arms with the largest mean (single-objective, see \citet{lattimore_bandit_2020}) or arms inside the Pareto Set (multi-objective, see \citet{drugan_designing_2013})  while satisfying some constraints \citep{NEURIPS2019_09a8a897, li2024constrainedmultiobjectivebayesianoptimization}.

\section{On the Complexity of Constrained PSI}
\label{sec:optimality}

In this section, we present some performance lower bounds for cPSI and e-cPSI. To do so, we first introduce some definitions to formalize the two objectives.

Given the polyhedron $\poly$ and a bandit instance $\nu \in \cD^K$ with means $\bm\mu \ceq (\mu_i)_{1\leq i \leq K}$, we let $\feas(\bm\mu) \ceq \Set{i \given  \mu_{i} \in \poly}$ be the set of arms that respect the linear constraints, called the \emph{feasible set}.  

For two arms $i, j$,  arm $i$ is dominated by $j$ ($i\prec j$ or $\mu_i \prec \mu_j$) if $\mu_i^c \leq \mu_j^c$ holds for all $c\in [d]$. For $S \subset [K]$ and a parameter $\bm \lambda \in \cI^K$, we let $\pto(S, \bm\lambda)$ be the Pareto Set of $\Set{\lambda_i \given i \in S}$: the arms in $S$ that are not dominated by another arm in $S$ for the bandit $\bm\lambda$. %Any arm in $\pto(S, \lambda)$ will be called \emph{Pareto optimal} (in S for $\lambda$). 
    With this notation, the Pareto Set of the feasible arm that we seek to identify is denoted by $\opt(\bm\mu) = \pto(F(\bm\mu),\bm\mu)$ and we further let
   $\subopt(\bm\mu) \ceq \Set{ i \in [K] \given \exists j \in \feas (\bm\mu)\text{ such that } \vmu_i \prec \vmu_j }$ be the set of arms that are dominated by a feasible arm. To simplify the notation, when $\bm\mu$ is clear from the context, we will sometimes write $F,\opt$ and $\subopt$.

\subsection{Constrained PSI without explainability (cPSI)}

In cPSI, an algorithm with stopping time $\tau$ simply outputs a recommendation $R_{\tau} = O_\tau$ which is a guess for $\opt$. 

\begin{definition}\label{def:cPSI}
	An algorithm  is $\delta$-correct for cPSI on $\cD^K$ if for any instance $\nu \in \cD^K$, with means $\bm\mu\in \cI^K$, 
	$\bP_{\nu}(\tau < \infty, O_\tau \neq \opt(\bm\mu)) \leq \delta$. 
\end{definition}

We are interested in $\delta$-correct algorithms with a small sample complexity $\bE_{\nu}[\tau_{\delta}]$. 
Given an instance $\nu$, to identify the (unique) correct answer, an algorithm must be able to differentiate $\nu$ with means $\bm\mu$ from any other instance $\nu'$ with means $\bm\lambda$ for which $\opt(\bm\mu) \neq \opt(\bm\lambda)$. To state our lower bound for parametric models, we introduce $\alt(\opt)$ to denote the set of instances of $\cI^K$ with a different optimal set i.e., $\alt(\opt) \ceq \Set{\bm \lambda \in \cI^K \given \opt(\bm\lambda) \neq \opt }\:.$

\begin{proposition}\label{prop:lb_cpsi}
Let $\cD$ be the class of multivariate normal with identity covariance and $\nu\in \cD^K$ with means $\bm\mu\in (\bR^d)^K$. For any $\delta$-correct algorithm for cPSI on $\cD$ with stopping time $\tau$, it holds that 
\begin{align}
	\bE_{\nu}[\tau] \geq T^*(\bm\mu) \log(1/(2.4\delta))\:\text{where}, \: \nonumber \\ 
T^*(\bm\mu)^{-1} \ceq  \max_{w \in \simplex}\inf_{\bm \lambda \in \alt(\opt)} \sum_{k} \frac12 w_k \lnorm{\mu_k - \lambda_k}^2. \label{eq:lbd-inf}\end{align}
\end{proposition}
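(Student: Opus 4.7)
The plan is to apply the classical change-of-measure argument in the style of \citet{garivier_optimal_2016}, adapted to our multivariate setting. First, I would fix an arbitrary alternative $\bm\lambda \in \alt(\opt)$, and let $\nu_{\bm\lambda} \in \cD^K$ denote the Gaussian instance with means $\bm\lambda$ and identity covariance. Since $\opt(\bm\lambda) \neq \opt(\bm\mu)$ and the algorithm is $\delta$-correct, the event $E \ceq \Set{\tau < \infty, \, O_\tau = \opt(\bm\mu)}$ satisfies $\bP_\nu(E) \geq 1-\delta$ and $\bP_{\nu_{\bm\lambda}}(E) \leq \delta$.

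Second, I would invoke the data-processing inequality of Kaufmann, Capp\'e and Garivier, which states that for any $\cF_\tau$-measurable event $E$,
\[
\sum_{k=1}^K \bE_\nu[N_k(\tau)] \, \KL(\nu_k, \nu_{\bm\lambda, k}) \;\geq\; \kl(\bP_\nu(E), \bP_{\nu_{\bm\lambda}}(E)) \;\geq\; \kl(1-\delta, \delta),
\]
where $N_k(\tau)$ is the number of pulls of arm $k$ before stopping. For the class $\cD$ of multivariate Gaussians with identity covariance, $\KL(\cN(\mu_k, I_d), \cN(\lambda_k, I_d)) = \tfrac{1}{2}\lnorm{\mu_k - \lambda_k}^2$, and one has the standard bound $\kl(1-\delta, \delta) \geq \log(1/(2.4\delta))$.

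Third, defining $w_k \ceq \bE_\nu[N_k(\tau)]/\bE_\nu[\tau]$, we obtain a vector $w \in \simplex$ since $\sum_k N_k(\tau) = \tau$ almost surely. Dividing by $\bE_\nu[\tau]$ and rearranging yields
\[
\bE_\nu[\tau] \cdot \sum_{k=1}^K \frac{1}{2} w_k \lnorm{\mu_k - \lambda_k}^2 \;\geq\; \log(1/(2.4\delta)).
\]
Since this holds for every $\bm\lambda \in \alt(\opt)$, I can take the infimum over the alternative set on the left-hand side, and then lower-bound the resulting expression by the maximum over $w \in \simplex$ (the specific empirical allocation is only one feasible choice), which precisely produces $T^*(\bm\mu)^{-1}$ on the left. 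Rearranging gives $\bE_\nu[\tau] \geq T^*(\bm\mu)\log(1/(2.4\delta))$.

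The only non-routine step is the reduction to the change-of-measure lemma, which requires that $\alt(\opt)$ is nonempty and well-defined as a subset of $\cI^K = (\bR^d)^K$; this is immediate once one verifies that the Pareto set of the feasible arms is not invariant under arbitrary perturbations of the means (one can always push a feasible optimal arm outside $\poly$, or promote an infeasible arm, to land in $\alt(\opt)$). All other manipulations are algebraic, so I do not expect a serious obstacle beyond carefully passing from a fixed alternative to the infimum and then to the supremum over allocations.
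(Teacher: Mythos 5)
Your proposal is correct and follows essentially the same route the paper takes, namely the standard change-of-measure recipe of \citet{garivier_optimal_2016}: the transportation inequality of Kaufmann--Capp\'e--Garivier applied to the event $\{\tau<\infty,\,O_\tau=\opt(\bm\mu)\}$, the Gaussian KL formula $\tfrac12\lnorm{\mu_k-\lambda_k}^2$, the bound $\kl(1-\delta,\delta)\geq\log(1/(2.4\delta))$, and the observation that the empirical allocation $w_k=\bE_\nu[N_k(\tau)]/\bE_\nu[\tau]$ is dominated by the max over the simplex after taking the infimum over $\bm\lambda\in\alt(\opt)$. The only cosmetic point is to note that if $\bP_\nu(\tau=\infty)>0$ the bound is trivial, so one may assume $\tau<\infty$ almost surely when asserting $\bP_\nu(E)\geq 1-\delta$.
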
 

The proof of this result follows a general recipe given by \citet{garivier_optimal_2016}. Efficient algorithms designed to match similar bounds often rely on iterative saddle-point methods \citep{degenne20gamification} or online learning algorithms \citep{menard2019gradientascentactiveexploration, NEURIPS2021_2dffbc47}. In any case, such algorithms require efficient solvers for the inner $\inf$ problem in \eqref{eq:lbd-inf}, which is often a non-convex problem. In Appendix~\ref{sec:unxpl}, we propose an algorithm for \cpsi{} of this flavor, called Game-cPSI (Algorithm~\ref{alg:safe_psi_exact}), that matches the lower bound when $\delta$ is small. Moreover, we show that its computational cost is polynomial in the number of arms.

\subsection{Constrained PSI with explainability (e-cPSI)} 

In the more demanding e-cPSI problem, an algorithm recommends a partition $R_\tau = (O_\tau, S_\tau, I_\tau)$ such that
 \begin{equation*}
 	\label{eq:pbm-cst-psi}
 	i)\: O_{\tau} = \opt, \quad ii) \; S_{\tau} \C \subopt \text{ and } I_{\tau} \C \feas^c
 \end{equation*} holds at stopping time $\tau$. 
 Since some arms can be both infeasible and dominated by a feasible arm, such arms could be correctly classified either in $S_\tau $ or $I_\tau$. Therefore, multiple correct ways to partition $[K]$ may exist, so multiple correct $(S_\tau, I_\tau)$. Given the polyhedron $\poly$ we define the set of valid answers for $(S_\tau, I_\tau)$ as 
\begin{equation*}
\begin{aligned}
\cM(\poly, \bm\mu) \ceq &\left\{(S, I) \mid  S \C \subopt(\bm\mu), I \C \feas(\bm \mu)^c, \right.  \\ & \left. S \cap I = \emptyset \text{ and } S \cup I = (\cO^*(\bm\mu))^c \right\}.
\end{aligned}	
\end{equation*}
To ease the notation, we simply write $\cM$ when $\poly$ and $\bm\mu$ are clear from the context. This is an example of a pure exploration problem with multiple correct answers, a framework studied by \citet{degenne_pure_2019} in single objective bandits.  With the notation above, the constrained PSI problem with explainability can be formalized as building a $\delta$-correct algorithm according to the following definition. 

\begin{definition}
\label{def:delta-corr}
An algorithm is $\delta$-correct for e-cPSI on $\cD^K$ if for any $\nu\in\cD^K$ with means $\bm\mu \in \cI^K$, it outputs a partition $(O_\tau, S_\tau, I_\tau)$ of $[K]$ s.t. 
$\bP_{\nu}(\tau < \infty \text{ and } \neg (O_\tau = \opt, (S_\tau, I_\tau) \in \cM )) \leq \delta$. 
\end{definition} 
Again, we are interested in $\delta$-correct algorithms with a small sample complexity $\bE_{\nu}[\tau_{\delta}]$. 
To introduce the lower bound, given $(O, S, I)$ a partition of $[K]$ we introduce the set of instances for which $(O,S, I)$ is not a correct answer:  
$\alt(S,I) \ceq \Set{\bm \lambda \given (S, I) \notin \cM(P, \bm \lambda)}$. 

\begin{restatable}{proposition}{lbdecpsi}\label{lb:ecpsi}
	Letting $\cD$ be the class of multivariate normal with identity covariance and $\nu \in \cD^K$ with means $\bm \mu\in \cI^K$. For any $\delta$-correct algorithm for e-cPSI on $\cD^K$ with stopping time $\tau$, it holds that 
	\begin{align}
	&\liminf_{\delta \to 0} \frac{\bE_{\nu}[\tau]}{\log(1/\delta)} \geq T_{\cM}^*(\bm\mu) \ceq \min_{(S,I) \in \cM} T(\bm \mu, S, I), \text{where} \nonumber \\
		&T(\bm \mu, S, I)^{-1}\!\!\ceq \!\max_{w \in \simplex} \inf_{\bm \lambda \in \alt(S, I)} \!\sum_{k}\!\frac{w_k}2 \lnorm{\mu_k - \lambda_k}^2. \label{eq:lb-inf2}
	\end{align}
\end{restatable}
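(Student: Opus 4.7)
\textbf{Proof plan for Proposition~\ref{lb:ecpsi}.}

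The plan is to follow the standard change-of-measure recipe for fixed-confidence lower bounds \citep{garivier_optimal_2016}, adapted to pure exploration problems with multiple correct answers as in \citet{degenne_pure_2019}. The starting point is the transportation inequality: for any event $\cE \in \cF_\tau$ and any alternative bandit $\bm\lambda \in \cI^K$,
\begin{equation*}
    \sum_{k=1}^K \bE_{\nu}[N_k(\tau)]\, \KL(\nu_k, \eta_k^{\bm\lambda}) \geq \kl(\bP_{\nu}(\cE), \bP_{\bm\lambda}(\cE)),
\end{equation*}
where $N_k(\tau)$ is the number of pulls of arm $k$ before $\tau$, and for multivariate Gaussian arms with identity covariance $\KL(\nu_k, \eta_k^{\bm\lambda}) = \tfrac{1}{2}\lnorm{\mu_k - \lambda_k}^2$.

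The main step is to handle the multiple-answer structure cleanly. For each candidate partition $(S,I) \in \cM$, define the event $\cE_{S,I} \ceq \{\tau < \infty, O_\tau = \opt, (S_\tau, I_\tau) = (S,I)\}$. By $\delta$-correctness (Definition~\ref{def:delta-corr}), $\sum_{(S,I) \in \cM} \bP_{\nu}(\cE_{S,I}) \geq 1 - \delta$, while for any $\bm \lambda \in \alt(S,I)$ the partition $(S,I)$ is incorrect under $\bm\lambda$, hence $\bP_{\bm\lambda}(\cE_{S,I}) \leq \delta$. Applying the transportation inequality with $\cE = \cE_{S,I}$ and using $\kl(1-\delta, \delta) \sim \log(1/\delta)$ as $\delta \to 0$, one obtains for every $(S,I) \in \cM$:
\begin{equation*}
\inf_{\bm\lambda \in \alt(S,I)} \sum_{k=1}^K \bE_{\nu}[N_k(\tau)]\, \tfrac{1}{2}\lnorm{\mu_k - \lambda_k}^2 \geq \kl\!\bigl(\bP_\nu(\cE_{S,I}), \delta\bigr),
\end{equation*}
up to a term vanishing in the $\delta \to 0$ limit. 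Dividing by $\bE_\nu[\tau]$ and introducing the proportions $w_k^\delta \ceq \bE_\nu[N_k(\tau)]/\bE_\nu[\tau] \in \simplex$, the left-hand side is bounded by $\bE_\nu[\tau]$ times $T(\bm\mu, S, I)^{-1}$ whenever the condition applies.

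To conclude, extract a subsequence $\delta_n \to 0$ along which $w^{\delta_n} \to w^\star \in \simplex$ and along which some fixed answer $(S^\star, I^\star) \in \cM$ satisfies $\bP_\nu(\cE_{S^\star, I^\star}) \geq 1/|\cM|$ (such an answer exists by pigeonhole since $\cM$ is finite and the $\cE_{S,I}$ cover the correct event with probability $\geq 1-\delta$). Then $\kl(\bP_\nu(\cE_{S^\star, I^\star}), \delta_n)/\log(1/\delta_n) \to 1$, so dividing the inequality above by $\log(1/\delta_n)$ and passing to the limit yields
\begin{equation*}
\liminf_{n \to \infty} \frac{\bE_\nu[\tau_{\delta_n}]}{\log(1/\delta_n)} \geq T(\bm\mu, S^\star, I^\star) \geq \min_{(S,I)\in \cM} T(\bm\mu, S, I) = T^*_{\cM}(\bm\mu).
\end{equation*}
Since the subsequential limit is arbitrary, the same bound holds for the original $\liminf$.

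The main obstacle is the multiple-answer issue: the transportation inequality only constrains instances in $\alt(S,I)$ for a \emph{specific} answer $(S,I)$, so a naive application would force one to intersect the alternative sets, losing too much. The pigeonhole argument together with the bound $\bP_\nu(\cE_{S^\star,I^\star}) \geq 1/|\cM|$, which stays bounded away from $0$ as $\delta \to 0$, is what delivers the $\min$ (rather than $\max$) over $\cM$ in the final bound, matching the intuition that the algorithm may strategically commit to whichever valid answer is cheapest to certify.
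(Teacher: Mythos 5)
There is a genuine gap at the final step of your argument. The preparatory material is sound: the events $\cE_{S,I}$, the observation that $\bP_{\bm\lambda}(\cE_{S,I})\leq\delta$ for every $\bm\lambda\in\alt(S,I)$ (since on $\cE_{S,I}$ the output $(S,I)\notin\cM(\poly,\bm\lambda)$ is an error under $\bm\lambda$), and the transportation inequality are all correct. The flaw is the claim that $\kl\bigl(\bP_\nu(\cE_{S^\star,I^\star}),\delta_n\bigr)/\log(1/\delta_n)\to 1$. For a fixed $p\in(0,1)$ one has $\kl(p,\delta)=p\log(1/\delta)+O(1)$ as $\delta\to 0$, so this ratio converges to $\bP_\nu(\cE_{S^\star,I^\star})$, not to $1$; your pigeonhole step only guarantees $\bP_\nu(\cE_{S^\star,I^\star})\geq(1-\delta_n)/\lvert\cM\rvert$ along the subsequence. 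What the argument actually delivers is therefore $\liminf_{\delta\to0}\bE_\nu[\tau]/\log(1/\delta)\geq T_{\cM}^*(\bm\mu)/\lvert\cM\rvert$, and $\lvert\cM\rvert=2^{\lvert\subopt\cap\feas^\complement\rvert}$ can be exponential in $K$, so this is strictly weaker than the proposition. This loss is exactly the known difficulty with multiple correct answers: a $\delta$-correct algorithm may spread its output mass over many valid partitions, in which case no single event $\cE_{S,I}$ has probability bounded near one under $\nu$, and a bare change of measure on these events cannot recover the full $\log(1/\delta)$.

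This is also not how the paper proceeds. For Proposition~\ref{lb:ecpsi} itself the paper relies on the game-theoretic technique of \citet{degenne_pure_2019} for pure exploration with multiple correct answers (and it even warns, in Appendix~\ref{sec:appx-lbd}, that their Theorem~1 requires adaptation because answers here are partitions of $[K]$, not single arms). Where the paper does carry out an explicit change-of-measure argument (the gap-based bound of Appendix~\ref{sec:appx-lbd}), it sidesteps the $\lvert\cM\rvert$ loss differently: it constructs one specific likely answer $(S,I)$ arm by arm, assigning each ambiguous arm to whichever of $\wh S,\wh I$ it is placed in with probability at least $(1-\delta)/2$, and then uses per-arm events combined with alternatives that modify a single arm, so that only a factor $2$ per arm is lost (hence the constants $1/4$, $1/8$ there) — at the price of obtaining a gap-based bound rather than the exact $T_{\cM}^*$ statement. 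To prove the proposition as stated, you would need to import and adapt the Degenne--Koolen argument to partition-valued answers (or otherwise obtain the full $\log(1/\delta)$ without assuming that some single valid answer is output with probability close to one); the pigeonhole shortcut does not suffice.
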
 
\vspace{-0.05cm}
This asymptotic lower bound derives from a game-theoretic technique used in \citet{degenne_pure_2019}.
For $(S, I) \in \cM$, $T(\bm \mu, S, I)$ quantifies the information-theoretic cost to identify $(S, I)$ as a valid answer. The lower bound states that the minimal sample complexity of any $\delta$-correct depends on the easiest-to-identify answer. Even if $\bm \mu$ was known, computing $T_{\cM}^*(\bm\mu)$ would be non-trivial as it involves solving the inner $\inf$ problem for each element of $\cM$ which is of size $2^n$ for $n\geq 1$, and empty for $n=0$, with $n\ceq {\lvert \subopt \cap \feas^\complement \rvert}$ satisfying $0\leq n\leq K-1$.

\citet{degenne_pure_2019} proposed an optimal algorithm for similar single-objective bandit settings. However, their approach requires enumerating all possible answers at each step and solving the inner $\inf$ optimization \eqref{eq:lb-inf2} for each, making it computationally expensive.
In our setting, this would require iterating over the power set of $\subopt \cap \feas^\complement$, which can grow exponentially with $K$, making the approach impractical for large-scale problems.

Instead, we present in the next section a computationally efficient algorithm for e-cPSI based on upper and lower confidence bounds. We will show that its sample complexity is close to optimal in some instances, and assess its good empirical performance.

\section{A Near-Optimal Algorithm for e-cPSI}
\label{sec:algo}
\label{ssec:algo_mca}

Following the APE approach for (unconstrained) PSI from \citet{kone2023adaptive}, we base our algorithms on confidence regions on some quantities that allow to access whether an arm is dominated or not. For any pair of arms $i,j$, we introduce 
\begin{align}
\label{eq:eq-ww1}
		\M(i,j) \ceq \max_{c\leq d} [\mu^c_i - \mu^c_{j} ] \:, \m(i,j) \ceq \min_{c\leq d} [\mu^c_{j} - \mu^c_i]\:.
\end{align}  
for which $\M(i,j) = - \m(j, i)$. Note that $\M(i,j) > 0$ if and only if $\mu_i$ is not dominated by $\mu_j$ ($\vmu_i\nprec \vmu_{j}$).
%Note that if $\vmu_i\nprec \vmu_{j}$, then $\M(i,j)$ is the smallest uniform increase of arm $j$ that makes it dominate arm $i$. If $\mu_i\prec \mu_{j}$, then $\m(i,j)$ is the smallest increase of any component of $i$ that makes it non-dominated by $j$. $\M$ and $\m$ are related through $\M(i,j) = - \m(j, i)$.
We also introduce quantities expressing the hardness of accessing the feasibility of each arm $i$ (see \citet{katz-samuels_feasible_2018}) :  \begin{equation}
\label{eq:gap_eta}
	 \eta_i \ceq \begin{cases}
	\dist(\mu_i, \poly) & \text{ if } \mu_i \notin \poly\:,\\
	\dist(\mu_i, \poly^\complement)& \text{ if } \mu_i \in \poly\:. 
\end{cases} 
\end{equation}

\subsection{Constrained Adaptive Pareto Exploration}
Our algorithm, called constrained Adaptive Pareto Exploration with explainability (e-cAPE) assumes that the arms are subGaussian and relies on confidence bounds to efficiently balance between feasibility detection and PSI. Its pseudo-code is given in Algorithm~\ref{alg:safe_psi}.

At any round $t$, we let  $N_{t,i}$ denotes the number of pulls of arm $i$ up to time $t$, and $\hat\mu_{i, t}$ its empirical mean based on the $N_{t,i}$ samples collected. We let $\M(i,j;t), \m(i,j;t)$ and $\eta_i(t)$ denote the empirical versions of the quantities introduced in \eqref{eq:eq-ww1} and \eqref{eq:gap_eta} (i.e., evaluated for the empirical means $(\hat\mu_{t, i})_{1\leq i \leq K}$). $F_t \ceq \Set{i\given \hat{\mu}_{t,i} \in \poly}$ is the empirical feasible set and $O_t \ceq \pto(F_t, \hat{\bm\mu}_t)$ the empirical Pareto Set of feasible arms with $\hat{\bm\mu}_t \ceq (\hat\mu_{t, i})_{1\leq i \leq K}$.

We introduce some high-probability events
\begin{align*}
 \cE_t \ceq  &\left\{\forall i \in [K], \ \left \| \hat \mu_{t, i} - \mu_i\right\|_\infty \leq \beta_i(t, \delta) \text{ and } \right .\\& \left .\lnorm{ \hat \mu_{t, i} - \mu_i}\leq U_i(t, \delta)   \right\}, \quad \cE \ceq \bigcap_{t\geq1} \cE_t\:, 	
\end{align*}
with confidence bonuses of the form 
$$ \beta_i(t, \delta) \ceq \sqrt{\frac{2\sigma^2 f(t, \delta)}{N_{t, i}}} \; \text{and} \; U_i(t, \delta) \ceq \sqrt{\frac{2\sigma_u^2 g(t, \delta)}{N_{t, i}}}\:.$$ We assume that the functions $f, g$ (to be specified later) guarantee that $\bP_{\nu}(\cE) \geq 1-\delta$ when $\nu$ is marginally $\sigma$-subGaussian and norm-$\sigma_u$-SubGaussian. \begin{remark}. When the arms are known to have independent marginals,  we can set $\sigma_u=\sigma$. On the other extreme, when nothing is known on the covariance structure, we can always set $\sigma_u=\sqrt{d}\sigma$. \end{remark}
 
\paragraph{Pareto Set Estimation}  
For each pair of arms $(i,j)$, we build time-uniform confidence intervals around the unknown quantities $\M(i,j)$ and $m(i,j)$.
\begin{restatable}{lemma}{concentr} 
\label{lem:concentr}
	Under $\cE_t$, for all $i,j$ it holds : (i) $\lvert \M(i,j) - \Mh(i,j;t)\rvert \leq \beta_i(t) + \beta_j(t)$ and  $\lvert \m(i,j) - \mh(i,j;t)\rvert \leq \beta_i(t) + \beta_j(t)$, (ii) for any  $\cX \subset \bR^d$, $\lvert \dist(\hat\mu_{t, i}, \cX) - \dist(\mu_i, \cX)\rvert \leq U_i(t)$.  \end{restatable}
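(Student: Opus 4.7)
The plan is to prove the two items separately, relying essentially on two elementary functional inequalities combined with the event $\cE_t$: the $\ell_\infty$-stability of the pointwise max/min of coordinate differences for part (i), and the $1$-Lipschitz continuity of the distance function for part (ii).

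For part (i), I would start from the definitions
\[
\M(i,j)-\Mh(i,j;t) = \max_{c\leq d}\bigl[\mu_i^c-\mu_j^c\bigr] - \max_{c\leq d}\bigl[\hat\mu_{t,i}^c-\hat\mu_{t,j}^c\bigr],
\]
and apply the standard inequality $|\max_c a_c - \max_c b_c|\leq \max_c|a_c-b_c|$. This reduces the bound to $\max_{c\leq d}\bigl|(\mu_i^c-\hat\mu_{t,i}^c)-(\mu_j^c-\hat\mu_{t,j}^c)\bigr|$, which by the triangle inequality is at most $\|\mu_i-\hat\mu_{t,i}\|_\infty+\|\mu_j-\hat\mu_{t,j}\|_\infty$. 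Under $\cE_t$, the first term is bounded by $\beta_i(t,\delta)$ and the second by $\beta_j(t,\delta)$, giving the claim for $\M$. The bound for $\m$ follows by the same argument, either directly via the analogous inequality for $\min$, or simply from the identity $\m(i,j)=-\M(j,i)$ (and similarly for the empirical versions), which transfers the bound immediately.

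For part (ii), I would invoke that the map $x\mapsto \dist(x,\cX)$ is $1$-Lipschitz with respect to the Euclidean norm for any non-empty $\cX\subset\bR^d$: indeed, for any $y\in\cX$, $\lnorm{\hat\mu_{t,i}-y}\leq \lnorm{\hat\mu_{t,i}-\mu_i}+\lnorm{\mu_i-y}$, so taking the infimum over $y\in\cX$ yields $\dist(\hat\mu_{t,i},\cX)\leq \lnorm{\hat\mu_{t,i}-\mu_i}+\dist(\mu_i,\cX)$, and the reverse inequality follows by symmetry. Under $\cE_t$, we have $\lnorm{\hat\mu_{t,i}-\mu_i}\leq U_i(t,\delta)$, which concludes the proof.

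Neither step presents a genuine obstacle: both are essentially textbook consequences of the definition of $\cE_t$. The only thing one has to be careful about is to use the correct norm in each part — the $\ell_\infty$ concentration (and hence $\beta_i(t,\delta)$) for the coordinatewise max/min in (i), and the $\ell_2$ concentration (and hence $U_i(t,\delta)$) for the Euclidean distance in (ii). This matches the two components baked into the definition of $\cE_t$, which is precisely why the event was stated with both bounds simultaneously.
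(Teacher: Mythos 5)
Your proof is correct and follows essentially the same route as the paper's: the max-stability inequality $\lvert\max_c a_c-\max_c b_c\rvert\leq\max_c\lvert a_c-b_c\rvert$ plus the triangle inequality for (i), and the $1$-Lipschitzness of $x\mapsto\dist(x,\cX)$ for (ii), each combined with the corresponding $\ell_\infty$ or $\ell_2$ bound from $\cE_t$. The only nitpick is that with the paper's definitions the transfer identity is $\m(i,j)=-\M(i,j)$ rather than $-\M(j,i)$ (a slip the paper's main text also makes), but this is immaterial here since the bound $\beta_i(t)+\beta_j(t)$ is symmetric in $i,j$ and your direct argument via the analogous min inequality works regardless.
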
 

This result justifies the introduction of the confidence bounds
 $\Mh^\pm(i,j;t) = \Mh(i,j;t) \pm (\beta_i(t) + \beta_j(t))$
and similarly $\mh^\pm(i,j;t) = \mh(i,j;t) \pm (\beta_i(t) + \beta_j(t))$, such that $m(i,j) \in [\mh^-(i,j;t),\mh^+(i,j;t)]$ and $M(i,j) \in [\Mh^-(i,j;t),\Mh^+(i,j;t)]$ on the event $\cE$. 
In particular on that event, $\Mh^-(i,j;t)>0$ implies that $\mu_i \nprec \mu_{j}$ (arm $k$ is not dominated by $j$) and $\Mh^+(i,j;t)<0$ implies that $\mu_i \prec \mu_{j}$ (arm $i$ is dominated by $j$).

\paragraph{Feasibility Detection} For any arm $i$ introducing the quantity $\gamma_i(t) \ceq \frac{1}{2\sigma_u^2} N_{t, i} \eta^2_i(t)  - g(t, \delta)$, one can show using Lemma~\ref{lem:concentr} that when $\gamma_i(t) > 0$, the vectors $\hat{\mu}_{i,t}$ and $\mu_{i}$ can either both belong to $P$ or to $P^c$, with high confidence. 
We let $G_t \ceq \Set{i \in F_t^c \given \gamma_i(t) < 0 }$ be the subset of $F_t^c$ of empirically infeasible arms that cannot be confidently ruled out as infeasible at time $t$.  

\paragraph{Stopping and Recommendation Rules} An algorithm for \ecpsi{} should stop as soon as any confidently valid answer has been identified. To introduce the stopping rule, we first define $Z_1^F(t) \ceq \min_{i \in O_t} \gamma_i(t)$ and $Z_1^\text{PS}(t) \ceq  \min_{i \in O_t}  \min_{j \in O_t \backslash\{k\}} [\Mh^-(i,j;t)]$. We further define  
\begin{equation} 
\label{eq:def-z1}
Z_1(t) \ceq \min(Z_1^F(t), Z_1^\text{PS}(t)). 
\end{equation} 
When $Z_1^F(t)$ is positive, arms in $O_t$ can be proved to be feasible, and having $Z_1^\textrm{PS}(t)$ positive is sufficient to guarantee that arms in $O_t$ are not dominated by each other. 
Thus, when $Z_1(t)>0$, arms in $O_t$ are all confidently feasible and non-dominated by each other. However, this alone is not sufficient to ensure that $O_t = \opt$.  We also define $\xi_i(t) \ceq \max_{j\in (F_t \cup G_t) \backslash\{ k\}} \mh^-(i,j;t)$ and note that $\xi_i(t)>0$ implies that there exists $j\neq i \in (F_t \cup G_t)$ such that $\mu_i \prec \mu_{j}$. Therefore, $\ind_{i\notin F_t}((\gamma_i(t) \lor \xi_i(t))) >0$ implies that either arm $i$ is infeasible or it is dominated by an arm of $(F_t \cup G_t)$.  We then introduce, 
\begin{align}
\label{eq:def-z2}
Z_2(t) \!\ceq  \!\!\!\! \min_{i \in [K]\backslash O_t}\!\!\left[\xi_i(t)  \ind_{i\in F_t}\! + \!(\gamma_i(t) \lor \xi_i(t)) \ind_{i\notin F_t} \right], 
\end{align}
for which a positive value guarantees that each arm outside of $O_t$ can be confidently classified as either infeasible or dominated by another arm of $(F_t\cup G_t)$. We will prove that having both $Z_1(t) \geq 0$ and $Z_2(t) \geq 0$ for some $t$ is sufficient to prove that $O_t = \opt$ and to identify a correct answer $(S_t, I_t)\in \cM$. Thus, we define the stopping time $\tau$ of our algorithm as 
\begin{equation}
\label{eq:stopping-time}
 \tau \ceq \inf\Set{ t\geq 1 \given Z_1(t) \geq 0 \quad \text{and} \quad Z_2(t) \geq 0}.	
\end{equation} 
At stopping, the algorithm recommends the partition $(O_\tau, S_\tau, I_\tau)$ of $[K]$ with $O_\tau = \pto(F_\tau, \hat {\bm\mu}_\tau)$, $S_\tau \ceq (F_\tau \cup G_\tau) \backslash O_\tau$, a set of arms that will be shown to be dominated by some arms in $O_\tau$, and $I_\tau \ceq G_\tau^c \cap F_\tau^c$, a set of arms deemed infeasible. 

In Appendix~\ref{sec:proof_correct} we formally prove the correctness of the proposed stopping and recommendation rule.

\begin{restatable}{lemma}{correctness}\label{lem:correctness}
On the event $\cE$, if e-cAPE outputs $(O_{\tau},S_{\tau},I_{\tau})$, we have $O_{\tau} = \opt$ and $(S_{\tau},I_{\tau}) \in \cM(\poly,\bm\mu)$.
	\end{restatable}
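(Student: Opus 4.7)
The plan is to work on the event $\cE$, on which Lemma~\ref{lem:concentr} holds uniformly in $t$, and translate the two stopping tests $Z_1(\tau)\ge 0$ and $Z_2(\tau)\ge 0$ into statements about the true means. First, $Z_1^F(\tau)\ge 0$ says that every $i\in O_\tau\subset F_\tau$ satisfies $\gamma_i(\tau)\ge 0$; since $\hat\mu_{i,\tau}\in\poly$, Lemma~\ref{lem:concentr}(ii) places the confidence ball around $\hat\mu_{i,\tau}$ entirely inside $\poly$, so $\mu_i\in\poly$ and $O_\tau\subset\feas$. From $Z_1^{\text{PS}}(\tau)\ge 0$ and Lemma~\ref{lem:concentr}(i), $M(i,j)\ge 0$ for every distinct pair in $O_\tau$, so its elements are pairwise non-dominated under $\bm\mu$. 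A symmetric use of Lemma~\ref{lem:concentr}(ii) also shows $\feas\subset F_\tau\cup G_\tau$: if $i\in\feas\setminus F_\tau$, then $\hat\mu_{i,\tau}\notin\poly$ while $\mu_i\in\poly$ forces $\eta_i(\tau)=\dist(\hat\mu_{i,\tau},\poly)\le U_i(\tau)$, so $\gamma_i(\tau)\le 0$ and $i\in G_\tau$.

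Next, $Z_2(\tau)\ge 0$ yields, for each $i\notin O_\tau$, either (a) $i\notin F_\tau$ and $\gamma_i(\tau)\ge 0$, which together with $\hat\mu_{i,\tau}\notin\poly$ and Lemma~\ref{lem:concentr}(ii) gives $\mu_i\notin\poly$; or (b) $\xi_i(\tau)\ge 0$, which by Lemma~\ref{lem:concentr}(i) produces some $j\in(F_\tau\cup G_\tau)\setminus\{i\}$ with $\mu_i\prec\mu_j$. The key intermediate claim I would prove from these is that every $i\in A\ceq F_\tau\cup G_\tau$ has some $i^\star\in O_\tau$ with $\mu_i\prec\mu_{i^\star}$. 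To prove it, I pick any Pareto-optimal $i^\star\in\pto(A,\bm\mu)$ and show $i^\star\in O_\tau$: case (a) is excluded because $i^\star\in A\setminus F_\tau$ forces $i^\star\in G_\tau$, hence $\gamma_{i^\star}(\tau)<0$; case (b) is excluded because it would produce a strict dominator of $i^\star$ in $A$, contradicting its Pareto-optimality. Hence $\pto(A,\bm\mu)\subset O_\tau$, and by finiteness of $A$ every $i\in A$ has a (weak) Pareto dominator in $O_\tau$.

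From this intermediate claim I conclude $O_\tau=\opt$ and $(S_\tau,I_\tau)\in\cM$. The inclusion $\opt\subset O_\tau$ follows from $\opt\subset\feas\subset A$: any $i\in\opt$ has a dominator in $O_\tau\subset\feas$, and a strict such dominator would contradict $i\in\opt$. The inclusion $O_\tau\subset\opt$ follows similarly: a strict feasible dominator of $i\in O_\tau$ would, via the intermediate claim, yield an arm of $O_\tau$ strictly dominating $i$, contradicting pairwise non-domination. For the partition, by construction $S_\tau=A\setminus O_\tau$ and $I_\tau=A^c$ are disjoint with union $O_\tau^c$; case (a) gives $I_\tau\subset\feas^c$, and the intermediate claim gives $S_\tau\subset\subopt$.

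The main obstacle is making the intermediate claim watertight: case (b) guarantees only the weak relation $\mu_i\prec\mu_j$, so a naive chain-chasing argument through repeated dominators can cycle among arms with identical means. Phrasing the argument directly in terms of $\pto(A,\bm\mu)$ rather than iterated dominators is what sidesteps the cycle issue; ties between distinct arms, which would break the implication between Pareto-optimality in $A$ and membership in $O_\tau$, are excluded on $\cE$ since the empirical gap $\mh^-$ between two arms of equal mean cannot be positive at stopping unless the confidence widths have collapsed, so the empirical and true Pareto sets coincide on $A$.
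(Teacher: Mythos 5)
Your argument is correct and uses the same ingredients as the paper's proof --- translating $Z_1(\tau)\ge 0$ and $Z_2(\tau)\ge 0$ through Lemma~\ref{lem:concentr}, a distance argument for (in)feasibility, and a detour through a Pareto set computed under the \emph{true} means --- but it is organized around a different key claim. The paper argues in three steps: $S_\tau\subset\subopt$ via $H_\tau\ceq\pto(S_\tau,\bm\mu)$ (whose dominators are forced into $O_\tau\subset\feas$), then $I_\tau\subset\feas^{\complement}$, and then $\opt\subset O_\tau$ comes for free because $(O_\tau,S_\tau,I_\tau)$ partitions $[K]$ and $\opt$ is disjoint from $\subopt\cup\feas^{\complement}$; transitivity finally gives $O_\tau\subset\opt$. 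You instead prove the single inclusion $\pto(F_\tau\cup G_\tau,\bm\mu)\subset O_\tau$ via the $Z_2$ dichotomy, supplement it with $\feas\subset F_\tau\cup G_\tau$ (an inclusion the paper never needs here, its partition argument playing that role), and read off all three conclusions from ``every arm of $F_\tau\cup G_\tau$ has a weak dominator in $O_\tau$''. Your route buys a more unified derivation from one master claim; the paper's route localizes the Pareto-set trick to $S_\tau$ and avoids the extra inclusion. One caution: your closing claim that ties are ``excluded on $\cE$'' is not justified as stated --- the stopping tests use non-strict inequalities, so $\mh^-(i,j;\tau)=0$ is compatible with $\mu_i=\mu_j$, and the empirical and true Pareto sets need not coincide --- but the paper's own proof glosses over the same weak-versus-strict boundary cases (e.g.\ upgrading $\M(i,j)\ge\Mh^-(i,j;\tau)$ to a strict inequality), so this is a shared general-position convention rather than a gap specific to your argument.
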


The above result holds regardless of the sampling rule that is used, but the sampling rule is crucial to stop early.  

\paragraph{Sampling Rule} The challenge in designing an efficient sampling rule for constrained PSI lies in efficiently balancing the information about feasibility and Pareto dominance.
We propose a sampling rule in the spirit of the top-two algorithms for best arm identification, of which LUCB \citep{kalyanakrishnan_pac_2012} can be viewed as the first instance. We define the leader $b_t$ greedily w.r.t the stopping rule, as follows: 
\begin{equation}
	\label{eq:def-lead}
	b_t \ceq \textrm{the minimizer in } \begin{cases}
		\eqref{eq:def-z2} & \text{ if } Z_2(t) < 0 \\
		\eqref{eq:def-z1} & \text{ else.}
	\end{cases}
\end{equation}
Intuitively, if the algorithm has not stopped, and $Z_2(t) < 0$, there is an arm of $O_t^c$ for which the evidence collected is not enough for it to be confidently classified as infeasible or dominated by an arm of $(F_t\cup G_t)$ so this arm should be pulled again. Similarly, if $Z_2(t)\geq 0$ and the algorithm has not stopped, then $Z_1(t) < 0$ so that the status of an arm in $O_t$ is uncertain and this arm should be pulled. 
Then we define the challenger of the arm $b_t$ as an arm that cannot yet be ruled out as infeasible and that is close to dominating $b_t$:
$$c_t \ceq \argmin_{j\in (F_t \cup G_t)\backslash\{b_t\}}  \Mh^-(b_t, j; t).$$
Pulling $b_t$ brings information about it being feasible or infeasible and further pulling $c_t$ brings information on whether $b_t$ is dominated by another arm of $(F_t \cup G_t)$ which as samples are collected will ultimately be the feasible set $\feas$. 
Sampling both $b_t$ and $c_t$ should increase the stopping statistic $Z(t) = \min(Z_1(t),Z_2(t))$.

\begin{algorithm}[htb]
\caption{e-cAPE}
\label{alg:safe_psi}
\begin{algorithmic}
\STATE \emph{Initialization} : {pull each arm once} 
 \FOR{$t = K+1, K+2, \dots$}  
  \STATE Compute $F_t$ the empirical feasible set 
  \STATE Compute $O_t \ceq \Set{ i \in F_t \given \forall j \in F_t,\;  \hat\mu_{t, i} \nprec \hat \mu_{t,j} }$
  \IF{$Z_2(t) < 0 $} 
   \STATE Get $ b_t \ceq \text{minimizer of }$ \eqref{eq:def-z2}
  \ELSE
   \STATE Get $b_t \ceq \text{minimizer of } \begin{cases}
   	  Z_1^F(t) \!\!& \text{if }  Z_1^\text{PS}(t) \!\geq \!Z_1^F(t) \\
   	     Z_1^\text{PS}(t)    \!\!& \text{else.}
   \end{cases}$
     \ENDIF
   \STATE Get $c_t \ceq \argmin_{j\in (F_t \cup G_t)\backslash\{b_t\}}\Mh^-(b_t, j; t)$
  \IF{$\min(Z_1(t), Z_2(t))\geq 0$}
  \STATE {\bf break} and {\bf return} $(O_t, (F_t \cup G_t) \backslash O_t, F_t^c \cap G_t^c)$
  \ENDIF 
  \STATE Pull arms $b_t$ and $c_t$ 
 \ENDFOR
 \end{algorithmic}
\end{algorithm}

\subsection{Sample Complexity Guarantees}
\label{sec:analysis}

The analysis of cAPE features a new complexity quantity, that we first present. Additionally to $\eta_i$ which measures the hardness of feasibility detection for arm $i$, it depends on some complexity measures from the PSI literature. 
Given $S\subset [K]$ a non-empty set, the complexity of identifying $\pto(S, \bm \mu)$, the Pareto Set of $S$ in the unconstrained setting is governed by some ``sub-optimality'' gaps  \citep{auer_pareto_2016, kone2023adaptive}: for an arm $i\notin \pto(S, \bm \mu)$, 
\begin{equation}
\label{eq:def-gap-sub} 
	\Delta_i(S) \ceq  \Delta_i^*(S) \ceq  \max_{j \in \pto(S, \bm \mu)} \m(i,j),
\end{equation} 
which can be viewed as the smallest quantity that should be added component-wise to $\vmu_i$ to make $i$ appear Pareto optimal w.r.t $\Set{\vmu_{j} \given j\in S \backslash \{i\} }$.  For an arm $i\in \pto(S, \bm \mu)$, 
\begin{equation}
\label{eq:def-gap-opt}
    \Delta_i(S) \ceq  
    \min (\delta_i^+(S), \delta_i^-(S))\:,\text{where}\:
\end{equation}
 \begin{equation*}
	\label{eq:def-gap-opt2}
	\begin{aligned}
	\delta_i^+(S) &\ceq \min_{j\in \pto(S, \bm \mu) \backslash \{i\}} \min(\M(i,j), \M(j,i))\:, \text{and} \\
	\delta_i^-(S) & \ceq \min_{j\in S \backslash \pto(S, \bm \mu)}[(\M(j,i))_+ + \Delta_{j}]\:, \end{aligned}
\end{equation*} with $\min_\emptyset = +\infty$.  
$\delta_i^+$ accounts for how much $i$ is close to dominating (or to be dominated by) another Pareto-optimal arm of $\pto(S, \bm \mu)$ while $\delta_i^-$ quantifies the smallest ``margin" from an optimal arm $i$ to the sub-optimal arms. 

To identify $(S, I)\in \cM$ as a correct answer, an algorithm should (i) identify the Pareto Set of $(\opt \cup S)$, (ii) certify that arms in $\opt$ are feasible and (iii) arms in $I$ are not feasible. We introduce below the quantity to measure the complexity of validating this candidate answer:
\begin{equation}
\begin{aligned}
C(\nu, S,I) \ceq &\sum_{i\in \opt} \max\lp  \frac{2\sigma^2}{\Delta^2_i(\opt \cup S)}, \frac{2\sigma_u^2}{\eta^2_i}\rp  \\ + &\sum_{i \in S} \frac{2\sigma^2}{\Delta^2_i(\opt \cup S)} + \sum_{i\in I} \frac{2\sigma_u^2}{\eta_i^2}.	
\end{aligned}	
\end{equation}
Interestingly, our upper bound scales with the complexity of the \emph{easiest to verify} such answers: 
\[C_{\cM}^*(\nu) \ceq \min_{(S, I)\in \cM(\poly,\bm\mu)} C(\nu, S, I)\;.\]

\begin{restatable}{theorem}{main}
\label{thm:main} Let $f(t, \delta) = \log(\frac{4 k_1 Kd t^\alpha}{\delta}), g(t, \delta) = 4 \log(\frac{4 k_1 K5^d t^\alpha}{\delta})$ with $k_1>1 + \frac{1}{\alpha-1}$ and $\cD$ the class of marginally $\sigma$-subGaussian and $\sigma_u$-norm-subGaussian distributions. Then for any $\alpha>2$ and $\delta\in(0,1)$, e-cAPE is $\delta$ correct on $\cD^K$. Moreover, for all $\nu \in \cD^K$ with means $\bm\mu \in \cI^K$, 
\begin{align*}
	\bE_{\nu}[\tau] \leq & 128  C_{\cM}^*(\nu)\log\left(64C_{\cM}^*(\nu) \left(\frac{4k_1Kd}{\delta}\right)^{1/\alpha}\right)   \\ & \ + 4H(\nu,\feas^c) + \Lambda_\alpha\:,
\end{align*} where $H(\nu,\feas^c)\ceq \sum_{a\in \feas^c \cup O^*}\frac{32\sigma_u^2\log(5^d/d)}{\eta_a^2}\:$ and $$\Lambda_\alpha \leq \! \frac{2^{\alpha-1}}{4k_1} \!\sum_{T\geq 1} (\log(T) + 1)\left(\!\! \frac{f(T, 5^{-d}d) + f(T, 1) + \frac{2}{e}}{T^{\alpha -1}}\!\right)\:. $$
\end{restatable}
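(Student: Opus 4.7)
The plan is to split the proof into a correctness argument, which reduces to a concentration bound on the event $\cE$, and a sample-complexity upper bound that also holds on $\cE$. For correctness, I would verify that $\bP_\nu(\cE)\geq 1-\delta$: the factor $d$ in $f$ arises from a union bound over the $d$ coordinates in the marginal Hoeffding bound controlling $\beta_i$, the factor $5^d$ in $g$ from a $\tfrac12$-net of the unit sphere in the norm-subGaussian tail bound controlling $U_i$, the factor $K$ from a union bound over arms, and the time-summability $\sum_{t\geq 1}t^{-\alpha}\leq k_1$ from the assumption $k_1>1+\tfrac{1}{\alpha-1}$. Combined with Lemma~\ref{lem:correctness}, this yields $\delta$-correctness.

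For the sample-complexity bound, I would pick any $(S^{*},I^{*})\in\argmin_{(S,I)\in\cM}C(\nu,S,I)$ and introduce per-arm thresholds $N^{*}_i$: $N^{*}_i=2\sigma^2 f(\tau,\delta)/\Delta_i^2(\opt\cup S^{*})$ for $i\in S^{*}$, $N^{*}_i=2\sigma_u^2 g(\tau,\delta)/\eta_i^2$ for $i\in I^{*}$, and the maximum of both expressions for $i\in\opt$. The cornerstone lemma is that on $\cE$, once $N_{t,i}\geq N^{*}_i$ for every $i$, the stopping statistics satisfy $Z_1(t)\geq 0$ and $Z_2(t)\geq 0$, so the algorithm stops. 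This is shown by unfolding Lemma~\ref{lem:concentr} and the definitions of $\delta_i^{\pm}$: arms of $\opt$ are certified feasible and pairwise non-dominated (yielding $Z_1(t)\geq 0$), every arm of $S^{*}$ is certified dominated by some arm of $\opt\subseteq F_t\cup G_t$, and every arm of $I^{*}$ is certified infeasible (together yielding $Z_2(t)\geq 0$). Hence $\tau\leq \min\{t:N_{t,i}\geq N^{*}_i\ \forall i\}$ on $\cE$.

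To upper-bound this hitting time, the top-two sampling rule is exploited: at every unstopped round both $b_t$ and $c_t$ gain a pull, and the greedy choice of $b_t$ as a minimizer of the blocking statistic in \eqref{eq:def-z1}-\eqref{eq:def-z2} forces $N_{t,b_t}<N^{*}_{b_t}$ on $\cE$, as otherwise the corresponding term of $Z_1$ or $Z_2$ would already be nonnegative. Summing leader pulls across arms contributes at most of order $C^{*}_{\cM}(\nu)\cdot (f\vee g)(\tau,\delta)$. The challenger $c_t$ can, however, lie in $G_t\cap\feas^{c}$ after meeting its own $N^{*}$ threshold yet still be queried as a dominator candidate for $b_t$; those extra pulls are absorbed into $H(\nu,\feas^c)=\sum_{a\in\feas^c\cup\opt}\Theta(\sigma_u^2\log(5^d/d)/\eta_a^2)$. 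Together with the convergent residual series $\sum_{t}t^{1-\alpha}(\log t+1)$ collected in $\Lambda_\alpha$, this gives an implicit inequality $\tau\leq \kappa\,C^{*}_{\cM}(\nu)\cdot f(\tau,\delta)+\kappa' H(\nu,\feas^c)+\Lambda_\alpha$; inverting it by the standard fixed-point lemma for $x\leq a\log(bx)+c$ delivers the explicit bound stated. The main obstacle is the cornerstone stopping lemma: because feasibility and Pareto dominance are coupled through $Z_1,Z_2$ and the evolving sets $F_t,G_t$, the thresholds $N^{*}_i$ must simultaneously control the confidence bonuses on $\Mh^{\pm},\mh^{\pm}$ and on $\gamma_i,\eta_i(t)$, while ensuring $G_t$ still contains every dominator of $S^{*}$ needed to close the argument.
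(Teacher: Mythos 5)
Your correctness argument and the final fixed-point inversion match the paper, but the core of your sample-complexity argument has a genuine gap: the claim that at every unstopped round the greedy leader satisfies $N_{t,b_t}<N^{*}_{b_t}$ on $\cE$. This is false, because the statistics in \eqref{eq:def-z1}--\eqref{eq:def-z2} are \emph{pairwise}: certifying an arm $i\in S^{*}$ requires $\mh^-(i,j;t)\geq 0$ for some dominating arm $j$, i.e.\ both $\beta_i(t,\delta)$ and $\beta_j(t,\delta)$ must be small. Hence the minimizer $b_t$ of $Z_2(t)$ can already satisfy $N_{t,b_t}\geq N^{*}_{b_t}$ while its blocking term stays negative simply because its dominator $j^{*}\in\opt$ is under-sampled; the under-explored arm at that round is then not $b_t$. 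What actually has to be proved --- and is the heart of the paper's proof (Proposition~\ref{prop:master-prop}, established through the case analysis of Lemmas~\ref{lem:I_I}, \ref{lem:K_S}, \ref{lem:O_O} and \ref{lem:S_O}, together with Lemma~\ref{lem:lemSimplify}) --- is that at every unstopped round, for every $(S,I)\in\cM$, at least one of the \emph{pair} $\{b_t,c_t\}$ is under-explored, and in several of the cases only the least explored of the two (possibly the challenger) can be shown to be so; this uses the specific definition $c_t=\argmin_{j}\Mh^-(b_t,j;t)$ together with gap relations such as $\Delta_{c_t}(S)\leq (\M(b_t,c_t))_+ + \Delta_{b_t}(S)$. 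Your proposal contains no substitute for this case analysis, so the counting step ``summing leader pulls contributes $O(C^{*}_{\cM}(\nu)\,f(\tau,\delta))$'' is unsupported.

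Relatedly, your accounting of $H(\nu,\feas^c)$ is misattributed: in the paper it does not absorb ``wasted'' challenger pulls on already-certified arms of $G_t\cap\feas^{c}$; it arises purely from replacing $g(T,\delta)$ by $f(T,\delta)$ (the additive $\log(5^d/d)$ gap) when bounding the pulls of arms in $I\cup\opt$, inside the per-answer count $R([K])\leq 16\,C(\nu,S,I)\,f(T,\delta)+H(\nu,I)$, after which the bound is instantiated for an arbitrary $(S,I)\in\cM$ and minimized. (Your thresholds also carry the wrong constants --- e.g.\ needing $\beta_i+\beta_{j^{*}}\leq \Delta_i/2$ forces $N^{*}_i$ of order $32\sigma^2 f/\Delta_i^2$ rather than $2\sigma^2 f/\Delta_i^2$ --- but this is minor compared with the missing pairwise under-exploration argument.)
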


The proof of this result is given in Appendix~\ref{sec:proof_sc}. It says that the sample complexity of cAPE essentially scales in  $C_{\cM}^*(\nu)\log(\frac{KC_{\cM}^*(\nu)}{\delta})$ in the regime of small values of $\delta$. This is hard to compare in general to the lower bound of Proposition~\ref{lb:ecpsi}. However, we prove in Appendix~\ref{sec:appx-lbd} that the complexity of some particular instances has to scale with $C_{\cM}^*(\nu)$, which makes our algorithm near-optimal in a worst-case sense. 
 
\begin{theorem}
	There exists a class of multivariate Gaussian bandit instances $\wt \cD$ such that any $\delta$-correct algorithm for \ecpsi{}  satisfies, for all $\nu \in\cD^{K}$ with means $\bm\mu \in \cI^K$,  
	$$ \liminf_{\delta \to 0}\frac{\bE_{\nu}[\tau_\delta]}{\log(1/\delta)} \geq \frac{C_\cM^*(\nu)}{8}.$$ 
\end{theorem}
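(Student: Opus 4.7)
The plan is to instantiate the asymptotic lower bound of Proposition~\ref{lb:ecpsi} on a carefully chosen class $\widetilde{\cD}$ of multivariate Gaussian instances with identity covariance. Concretely, if I can show that $T(\bm\mu,S,I) \geq C(\nu,S,I)/8$ for every $(S,I) \in \cM(\poly,\bm\mu)$ and every $\nu \in \widetilde{\cD}^K$, then taking the minimum over $(S,I)$ yields $T_\cM^*(\bm\mu) \geq C_\cM^*(\nu)/8$, and Proposition~\ref{lb:ecpsi} directly produces the claim.

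The key technical step is a single-arm relaxation of the inner infimum in \eqref{eq:lb-inf2}. Fix $(S,I) \in \cM$ and, for each arm $k$, define $\alpha_k(S,I) \ceq \inf\{\lnorm{\mu_k - y} : (\mu_1,\dots,\mu_{k-1},y,\mu_{k+1},\dots,\mu_K) \in \alt(S,I)\}$, the smallest $L_2$-perturbation of arm $k$ alone that invalidates the answer $(S,I)$. Restricting the inner infimum in \eqref{eq:lb-inf2} to such single-arm alternatives gives $\inf_{\bm\lambda \in \alt(S,I)} \sum_k w_k \lnorm{\mu_k-\lambda_k}^2/2 \leq \min_k w_k\alpha_k(S,I)^2/2$, and the optimal choice $w_k \propto \alpha_k(S,I)^{-2}$ yields $T(\bm\mu,S,I)^{-1} \leq 1/(2\sum_k \alpha_k(S,I)^{-2})$, hence $T(\bm\mu,S,I) \geq 2\sum_k \alpha_k(S,I)^{-2}$.

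I would then pick $\widetilde{\cD}$ to consist of instances built on a half-space polyhedron $\poly=\{x:a^\top x \leq b\}$, so that feasibility is governed by a single linear inequality and $\eta_k$ is a coordinate-aligned distance, and with axis-aligned, well-separated means, so that each relevant dominance or feasibility change for arm $k$ can be realised by a perturbation in a single coordinate. A case analysis then yields $\alpha_k(S,I) \leq 2\sqrt{2}\min(\Delta_k(\opt\cup S),\eta_k)$ for $k\in\opt$ (make $k$ infeasible or make it dominated by a feasible arm), $\alpha_k(S,I) \leq 2\sqrt{2}\,\Delta_k(\opt\cup S)$ for $k\in S$ (escape dominance by every feasible arm), and $\alpha_k(S,I) \leq 2\sqrt{2}\,\eta_k$ for $k\in I$ (push $\mu_k$ across the feasibility boundary); the $2\sqrt{2}$ factor absorbs the possible need to lift several coordinate-wise constraints simultaneously. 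Plugging these into $T(\bm\mu,S,I) \geq 2\sum_k \alpha_k^{-2}$ and comparing term by term with the expression for $C(\nu,S,I)$, a sum of $2/\min(\Delta_k,\eta_k)^2$, $2/\Delta_k^2$ and $2/\eta_k^2$ over the three categories of arms (with $\sigma=\sigma_u=1$), one obtains $T(\bm\mu,S,I) \geq C(\nu,S,I)/8$.

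The crux of the argument, and the main obstacle, is precisely the choice of $\widetilde{\cD}$: one must ensure (i) that a single-arm perturbation genuinely lands in $\alt(S,I)$ rather than accidentally producing another valid answer, i.e.\ it does not create spurious dominance or feasibility relations among the unchanged arms, and (ii) that the $L_2$-distance $\alpha_k$ to the single-arm alternative region is tightly controlled by the scalar gaps $\Delta_k$ and $\eta_k$ appearing in $C(\nu,\cdot)$. A half-space polyhedron combined with axis-aligned, well-separated means is a natural candidate because each answer-changing event then reduces to a coordinate-aligned inequality, so that the $L_2$ and $L_\infty$ geometries agree up to a dimension-independent multiplicative constant, which is what ultimately drives the $1/8$ factor in the claimed bound.
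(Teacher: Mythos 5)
Your high-level route is genuinely different from the paper's: you invoke Proposition~\ref{lb:ecpsi} and lower-bound each $T(\bm\mu,S,I)$ by restricting the inner infimum in \eqref{eq:lb-inf2} to single-arm perturbations (the step $T(\bm\mu,S,I)\geq 2\sum_k\alpha_k(S,I)^{-2}$ is correct), whereas the paper does not go through the game characterization at all. It fixes the $\delta$-correct algorithm, uses a pigeonhole argument to build one specific answer $(S,I)\in\cM$ that the algorithm outputs arm-wise with probability at least $(1-\delta)/2$, and then applies the change-of-distribution lemma of Kaufmann et al.\ to hand-crafted single-arm alternatives, which yields $\bE_\nu[N_{\tau,i}]\geq 2\|\mu_i-\lambda_i^i\|^{-2}\,\kl(\tfrac{1-\delta}{2},\delta)$ for every $\delta$ (not only in the limit); the factor $8$ there is split as $2$ from $\kl(\tfrac{1-\delta}{2},\delta)\sim\tfrac12\log(1/\delta)$ and $4$ from the geometric discrepancy, while you spend the whole budget on geometry. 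Your reliance on Proposition~\ref{lb:ecpsi} is legitimate as a citation, but note the paper deliberately avoids it here (its Appendix D opens by stressing that the multiple-answer, partition-valued setting makes the generic lower-bound machinery delicate), and the direct argument buys a non-asymptotic statement.

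There are, however, two concrete gaps in your case analysis, which is exactly where the content of the theorem lives. First, for $k\in\opt$ you claim $\alpha_k(S,I)\leq 2\sqrt2\,\min(\Delta_k(\opt\cup S),\eta_k)$ with $\Delta_k(\opt\cup S)=\min(\delta_k^+,\delta_k^-(S))$. When the minimum is attained by $\delta_k^-(S)=\min_{j\in S}[(\M(j,k))_++\Delta_j]$, there is in general no single-arm perturbation of arm $k$ of that magnitude that lands in $\alt(S,I)$: moving $\mu_k$ toward a suboptimal $j$ by roughly $(\M(j,k))_++\Delta_j$ need not invalidate $(S,I)$, since $j$ can remain dominated by other feasible arms and $k$ remains feasible and optimal. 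The paper handles this by the compensation argument of Auer et al.\ (their Remark 18) and, ultimately, by choosing the final class so that $\subopt=\emptyset$, making $\delta_k^-=+\infty$; your class must likewise neutralize $\delta^-$, and as written the inequality is unjustified. Second, the class $\widetilde\cD$ and the constant $2\sqrt2$ are asserted rather than exhibited: for $k\in S$ the Euclidean perturbation needed to escape all dominating optimal arms is $\|s_k\|_2$, which can exceed the sup-norm gap $\Delta_k^*$ by a factor $\sqrt{\min(\lvert\Omega_k\rvert,d)}$, so $2\sqrt2$ only absorbs up to eight simultaneously active coordinates; the construction must explicitly cap this (the paper's staircase family varies the means in only two coordinates and makes every arm either optimal or infeasible, giving a dimension-independent discrepancy). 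Identifying and verifying such a class is precisely the work done in the paper's Appendix D.3--D.4, and your proposal names it as the crux but does not carry it out.
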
 

Moreover, since $(\feas \backslash \opt, \feas^c)$ is always a correct answer, we further observe that  
\begin{align*}
C_{\cM}^*(\nu) \leq& C(\nu, \feas \backslash \opt, \feas^c) \\
=& \sum_{i \in \opt} \!\!\max\!\!\lp \!\!\frac{2\sigma^2}{\Delta^2_i(\feas)}, \frac{2\sigma_u^2}{\eta^2_i}\!\rp \!+  \!\! \!\! \sum_{i \in \feas \backslash \opt} \!\!\!\! \frac{2\sigma^2}{\Delta^2_i(\feas)}\! + \! \! \sum_{i \notin \feas} \frac{2\sigma^2_u}{\eta_i^2} \\ \leq&  \underbrace{\sum_{i \in \feas} \frac{2\sigma^2}{\Delta^2_i(\feas)}}_{H_{\text{PSI}}(\nu)} + \underbrace{\sum_{i \in [K]} \frac{2\sigma^2_u}{\eta_i^2}}_{H_F(\nu)}\:.
\end{align*}
$H_F$ accounts for the complexity term for identifying the feasible set \citep{katz-samuels_feasible_2018} and $H_{\text{PSI}}$ is the leading complexity of identifying the Pareto Set of the feasible arms \citep{auer_pareto_2016, kone2023adaptive}. Thus, the leading complexity term of e-cAPE is always smaller than that for the two-stage approach that first identifies the feasible set and then its Pareto Set. Compared to this baseline, e-cAPE does only pay for the "PSI cost'' of arms in $\subopt\cap \feas$, and for arms in $\subopt \cap \feas^c$ it pays the smallest cost between infeasibility and Pareto dominance. We present in Figure~\ref{fig:cpsi} an example in which the complexity of cAPE can be arbitrarily smaller than that of the naive two-stage approach. 

\newcommand{\figcpsi}{\resizebox{\linewidth}{!}{
		\begin{tikzpicture}[scale=0.62]
%\draw[step=0.5cm,gray,very thin] (-3.5,-3.5) grid (3.5 , 3.5);
\draw [line width=0.5mm, draw=black, ->] (-6.,0) -- (6,0) node[right, black] {};
\draw [line width=0.5mm, draw=black, ->] (0,-4.5) -- (0,4.5) node[above, black] {};
\draw[-,line width=0.3mm, draw = green, solid](-6.0, -1.) -- (6, -1) node[below] {};
\node[cross out,anchor=text, outer sep=0pt, draw=blue, label=right:{$1$}, minimum size=2pt,  line width=0.5mm] (i) at (2.5, 3.5) {};
\node[cross out, draw=blue, outer sep=0pt, anchor=text, fill=blue, label=left:{$2$}, minimum size=2pt,  line width=0.5mm] (j) at (-4., 2) {};
\node[cross out, draw, outer sep=0pt, anchor=text, fill=blue, label={[shift={(0ex,1ex)}]right:{$3$}}, minimum size=2pt,  line width=0.5mm] (k) at (-1.5, -4) {};

\draw[->,line width=0.5mm, draw, dashed] (j) -- (-4., -1) node[label={above left:{\Large $\veps$}}] {};
\draw[->,line width=0.5mm, draw, dashed] (k)--(-1.5, -1) node[label= below left:{\Large $\veps$}] {};
\draw[->,line width=0.5mm, draw, dashed](i) -- (2.5, -1) node[label={ [shift={(-1ex,4ex)}] above left:{\Large $\eta_1$}}] {};
%\draw[gray] (-10,-10) rectangle (10,10);
\fill[fill=green, opacity=0.03] (-6, -1) rectangle (6, 4.5);
\end{tikzpicture}}}
\begin{figure}[htp]
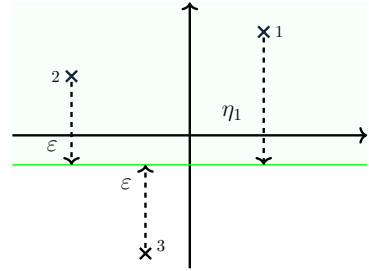

\centering
\label{fig:example}
	\begin{minipage}{0.6\linewidth}
	\figcpsi
\end{minipage}
\caption{Example of an instance of constrained PSI. The complexity of a two-stage approach will scale with $1/\eta_1^2 + 2/\veps^2 + \sum_{i=1}^2 1/\Delta_i(\{1, 2\})^2$ while e-cAPE's complexity scales with $1/\eta_1^2 + \sum_{i=1}^3 1/\Delta_i(\{1, 2, 3\})^2$ when $\veps\ll 1$. }
\label{fig:cpsi}
\end{figure}

Moreover, we remark that when the arms are very far from the boundaries of $\poly$, i.e., $\eta_i\gg 1 $, e-cAPE matches the bounds of \citet{auer_pareto_2016} and \citet{kone2023adaptive} for PSI on $\feas$. Similarly, when the PSI problem is easy (i.e., $\Delta_i \gg 1$) our guarantees are tight for feasibility detection.    

\paragraph{Improved Confidence Bounds} 
While we stated the guarantees of e-cAPE using specific confidence bounds of the form $\sqrt{\log(t^\alpha/\delta)/ N_{t, i}}$ (with $\alpha > 2$), the correctness of e-cAPE can also be established under tighter bounds of the form $\sqrt{\frac{O(\log\log(N_{t, i}) + \log\frac1\delta+\log\log\frac1\delta)}{N_{t, i}}}$, as prescribed by the law of the iterated logarithm \citep{jamieson_lil_2013, kaufmann_mixture_2021}. Although our techniques do not provide bounds on the expected stopping time under these tighter confidence intervals, we can, following the approach of \citet{katz-samuels_top_2019}, show that an upper bound of the form
$$
\tau_\delta \leq O\left(C_\mathcal{M}(\nu) \log\left(\frac{Kd}{\delta} \log(C_\mathcal{M}(\nu))\right)\right).
$$
holds with probability at least $1-\delta$. 

As highlighted in the experimental section, we advocate for the use of such tighter confidence bonuses, as they yield better empirical performance, even though they remain conservative in terms of misidentification errors.

Furthermore, if each arm is assumed to be multivariate Gaussian with known covariance, applying the Hanson-Wright inequality for Gaussian vectors \citep{rudelson2013hansonwright} allows us to achieve norm-2 concentration bounds that depend on the trace and operator norm of the covariance matrix, rather than on exponential dependence in the dimension $d$, as is typically the case with covering arguments over the unit sphere \citep{katz-samuels_feasible_2018}.

\section{Experiments}
\label{sec:expe}
We evaluate the performance of e-cAPE on two instances inspired by real-world data. We compare against the following baselines:  
%\vspace{-0.18cm} 
\begin{itemize}
	\item \emph{MD-APT+APE} ({\bfseries A-A}): the two-stage algorithm that combines the optimal algorithms of \citet{katz-samuels_feasible_2018} and \citet{kone2023adaptive} to first identify the feasible set then its Pareto Set 
	\item \emph{Uniform} ({\bfseries U}): pull arms in a round-robin fashion until the stopping rule \eqref{eq:stopping-time} is met
	\item \emph{Racing algorithm} ({\bfseries R-CP}): an adaptation of the elimination-based algorithm of \citet{auer_pareto_2016} for PSI to the e-cPSI problem, described in Appendix~\ref{sec:racing}. 
\end{itemize} 

In the experiments, ({\bfseries A-A}), ({\bfseries U}), ({\bfseries R-P}) and e-cPSI are instantiated with the same confidence bonuses. Following  the literature, we use slightly smaller thresholds than those licensed by theory and use the standard $\beta_i \ceq \sqrt{2\sigma_i^2\log(\log(t)/\delta)/ N_{t, i}}$. Moreover, for the confidence bounds on $\M$ and $\m$, instead of using individual confidence intervals, we use confidence intervals on pairs, as in the experiments of \citet{auer_pareto_2016} and \citet{kone2023adaptive}, and set $\beta_{i,j} = \sqrt{\beta_i^2 + \beta_j^2}$. We set the error parameter to $\delta=0.1$ and we reported a negligible empirical error.

\subsection{Datasets}
We consider two datasets generated from real-word data extracted from clinical trials. Additional experiments on synthetic data are included in Appendix~\ref{sec:impl}.
\paragraph{Secukinumab trial.}  
We use historical data from the phase 2 trial of \citet{secukinumab_safety_2013}, which evaluated the safety and efficacy of secukinumab for rheumatoid arthritis (also used by \citet{katz-samuels_top_2019} for constrained BAI). The trial involved 247 patients randomized to different doses ($25$mg, $75$mg, $150$mg, $300$mg) and a placebo, with probabilities of efficacy and safety reported. Using average endpoints, we simulate a Bernoulli bandit, aiming to identify the Pareto Set of arms with safety above and non-toxicity above $40\%$. Figure~\ref{fig:secukinum} illustrates the feasible region and arm responses.

\paragraph{CovBoost 19 trial} 
We simulate a constrained PSI bandit instance using data from
\citet{munro_safety_2021}, which reports average immune responses from a COVID-19 vaccine trial involving 20 strategies.
Following \citet{crepon24a}, we focus on three indicators: neutralizing antibody titres ($\texttt{NT}_{50}$), immunoglobulin G (\texttt{IgG}), and wild-type cellular response. The unconstrained Pareto Set includes two strategies, and the feasible region is defined by arms with \texttt{IgG} response above 8.25 titre. Figure~\ref{fig:covboost} shows the average responses and feasible region.

\begin{figure*}[ht]
\centering
	%\begin{subfigure}{0.3\columnwidth}
	\begin{minipage}{0.38\linewidth}
		\includegraphics[width=\linewidth]{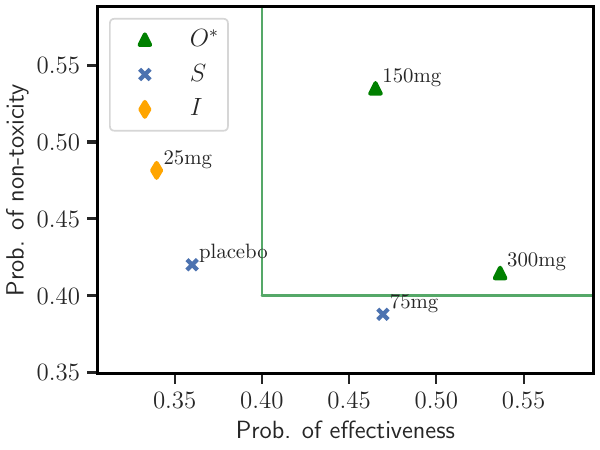}
		\caption{Average response for each dosage level. The feasible region is defined by an effectiveness of at least $40\%$ and non-toxicity }
		\label{fig:secukinum}
	\end{minipage}
	\hspace{0.05cm}
	\begin{minipage}{0.38\linewidth}
		\includegraphics[width=\linewidth]{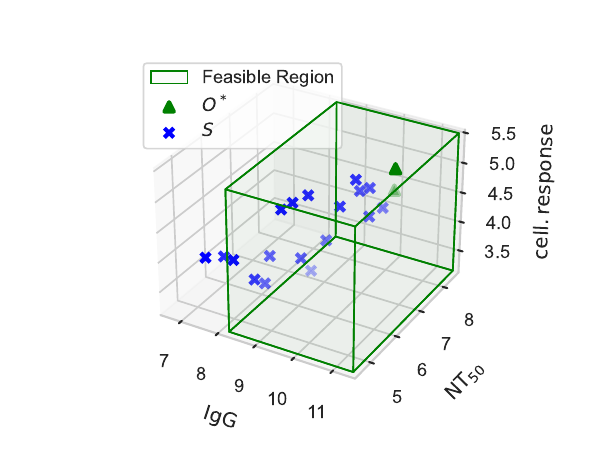}
		\caption{Average response of for each vaccine. The feasible region is defined by an \texttt{IgG} response larger than $8.25$ titre}
		\label{fig:covboost}
	\end{minipage}
\label{fig:expe}
\end{figure*}

\subsection{Results}
We report the distribution of the sample complexity in Figure~\ref{fig:res_sc_all}. For the experiment on covboost data, we excluded the \emph{Uniform sampling} algorithm due to its excessively high sample complexity.

Our experiments show that e-cAPE performs well in both tasks, achieving a lower sample complexity than its competitors. In the dose-finding simulation, the $75\text{mg}$ arm is
both near the feasibility boundary and suboptimal compared to the $150\text{mg}$ arm, which meets the constraints. By
focusing on answers (or partitions) with minimal cost, e-cAPE can classify such arms as suboptimal, whereas ({\bfseries A-A}), the two-stage algorithm, incurs an additional feasibility verification cost, which is high in this example. This finding is consistent with Figure~\ref{fig:cpsi}, which highlights the sub-optimality of {\bfseries A-A} when suboptimal arms lie near the boundary of the feasible region. Additionally, in Appendix~\ref{sec:racing} and Appendix~\ref{sec:hard_for_racing}, we provide both theoretical and empirical evidence of the sub-optimality of the racing algorithm {\bfseries R-CP}.

\begin{figure}[tb]
\centering
		\includegraphics[width=0.44\linewidth]{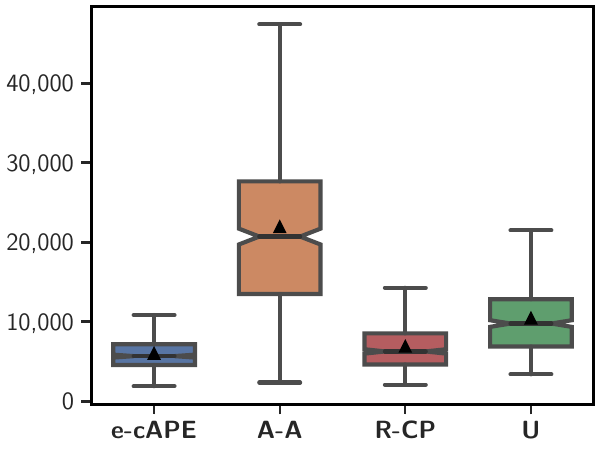}	
		\includegraphics[width=0.44\linewidth]{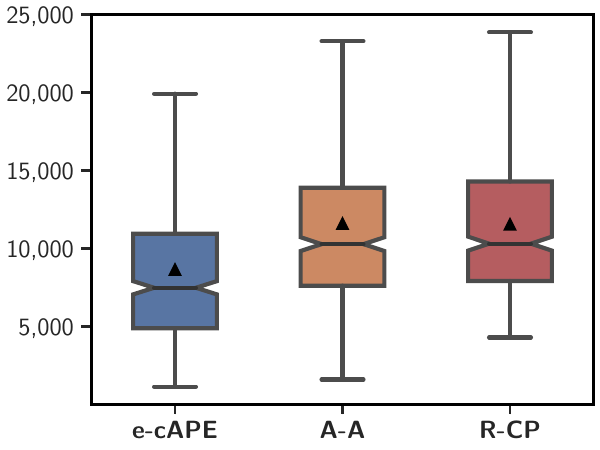}
			\caption{Empirical sample complexity averaged on 500 runs for the secukinumab trial (left) and the covboost trial (right).}
				%\vspace{-.4cm}
		\label{fig:res_sc_all}
\end{figure}

Appendix~\ref{sec:add_expe} includes additional experiments on synthetic datasets, and details about the computational complexity of e-cAPE. These experiments confirm the robustness of e-cAPE to different scenarios (e.g. more complex polyhedra). We also compare in Appendix~\ref{sec:cost_explainability} e-cAPE with our proposed optimal algorithm for the simpler cPSI objective, Game-cPSI (Algorithm~\ref{alg:safe_psi_exact}). This preliminary experiment suggests that even for cPSI, e-cAPE may also be a good practical solution.  
\section{Conclusion and remarks}
\label{sec:conclusion}
We introduced a novel multi-objective bandit pure exploration problem, where the goal of the leaner is to identify the Pareto Set while adhering to a set of linear feasibility constraints. We established the fundamental complexity of this problem by deriving an information-theoretic lower bound on the sample complexity required for any $\delta$-correct algorithm.
 % \vspace{0.9cm}
 
 Our main contribution is e-cAPE, an algorithm that achieves near-optimal sample complexity in the explainable setting. We also propose an asymptotically optimal algorithm in the non-explainable case. 
%\vspace{-2cm}

Through our simulations from real-world and synthetic data, we demonstrate the superior empirical performance of e-cAPE, highlighting its efficiency for constrained Pareto Set Identification.

While our work ensures constraint satisfaction in the final recommendation, an exciting avenue for future research is to explore adaptive strategies that allow controlled constraint violations during the learning phase. This is particularly critical in applications like clinical trials, where carefully balancing exploration with safety constraints can lead to faster and more ethical decision-making.

\section*{Acknowledgements}

Cyrille Kone is funded by an Inria/Inserm PhD grant. This work has been partially supported by the French National Research Agency (ANR) in the framework of the PEPR IA FOUNDRY project (ANR-23-PEIA-0003). 
\section*{Impact Statement}

This paper presents work whose goal is to advance the field of 
Machine Learning. There are many potential societal consequences 
of our work, none of which we feel must be specifically highlighted here.

\bibliography{main.bib}
\bibliographystyle{icml2025}

\newpage
\appendix
\onecolumn 
%\section{You \emph{can} have an appendix here.}
\newpage
% \tableofcontents
\newpage
\appendix
\section{Outline and Notation}
\label{sec:outline_notation} 
In Appendix~\ref{sec:unxpl}, we describe and analyze the complexity of an optimal algorithm for \cpsi{}. 
The proof of the sample complexity of Algorithm~\ref{alg:safe_psi} lies in Appendix~\ref{sec:upper_bound}. 
Appendix~\ref{sec:concentration} contains technical results used in the proofs. 
A gap-based lower bound for \ecpsi{} is proven in Appendix~\ref{sec:appx-lbd}. 
In Appendix~\ref{sec:racing} we analyze the limitations of a racing algorithm for \ecpsi{}. 
In Appendix~\ref{sec:impl}, additional experimental results are provided for more complex polyhedra and instances with a large number of arms. The experimental setup is also described in Appendix~\ref{sec:impl} for reproducibility. 
\begin{table}[H]
\caption{Notation for the setting.}
\label{tab:notation_table_setting}
\begin{center}
\begin{tabular}{ll}
  \toprule
$\nu, \bm\mu$ & Bandit model and its matrix of mean parameters \\
$\poly$ & Polyhedron expressing the constraints : $\Set{x\in \bR^d \given \bm Ax \leq b}$ \\ 
$\partial P, \Int{\poly}$ & Boundary and interior of the polyhedron $\poly$ \\
$F $ & Set of feasible arms $\Set{ i \given  \mu_i \in \poly }$\\ 
$\opt$ & Pareto Set of the feasible arms \\
$\pto(S, \bm \lambda)$ & Pareto Set of the vectors $\Set{\lambda_i \given i \in S}$ \\
$\subopt$ & Set of arms that are dominated by an arm of $\feas$ \\ 
$B \ceq \subopt \cap \feas^\complement$ & Infeasible arms that are dominated by a feasible arm.  \\
\bottomrule
\end{tabular}
\end{center}
\end{table}

We recall some commonly used notation:
the set of integers $[K] \eqdef \{1, \cdots, K\}$,
the complement $X^{\complement}$ and interior $\Int{X}$ and the closure $\closure(X)$ of a set $X$, 
the indicator function $\ind_{X}$ of an event,
the probability $\bP_{\nu}$ and the expectation $\bE_{\nu}$ under distribution $\nu$. Landau's notation $o$, $\cO$, $\Omega$ and $\Theta$,
the $(K-1)$-dimensional probability simplex $\simplex \eqdef \left\{w \in \bR_{+}^{K} \mid w \geq 0 , \: \sum_{i \in [K]} w_i = 1 \right\}$. For $a,b\in \bR$, $a\land b \ceq \min(a, b)$ and $a\lor b \ceq \max(a, b)$. 
We write $i \in F_t$ to when $\hat\mu_{t, i} \in \poly$ and $i\notin F_t$ otherwise. 

Table~\ref{tab:notation_table_setting} gathers problem-specific notation, and Table~\ref{tab:notation_table_algorithms} groups notation for the algorithms. 
\begin{table}[H]
\caption{Notation for algorithms.}
\label{tab:notation_table_algorithms}
\begin{center}
\begin{tabular}{ll}
  \toprule
  $\tau$ & Stopping time  of the algorithm  \\  
$\hat \mu_{t,k}$ & Sample mean of arm $k$ at time $t$ \\  
$N_{t,k}$ & Number of pulls of arm $k$ up to time $t$\\ 
$F_t$ &  Empirical feasible set at time $t$  \\
$O_t$ & Empirical Pareto Set of feasible arms at time $t$  \\  
   \bottomrule 
\end{tabular}
\end{center}
\end{table}

\section{An Asymptotically Optimal Algorithm for cPSI} 
\label{sec:unxpl}
In this section, we describe a $\delta$-correct algorithm tailored for the simpler cPSI problem (see Definition~\ref{def:cPSI}), in which the algorithm should identify $O^\star(\bm\mu)$ without the need to provide a classification for arms not in $O^\star(\bm\mu)$. We give an upper bound on its sample complexity, establishing its asymptotic optimality. We further analyze its computational cost and show that it is polynomial in the number of arms.  

\subsection{A Game-Based algorithm for cPSI}
\label{sec:opt_algo_exact}
We propose an algorithm for the constrained  PSI problem. Our algorithm, Game-cPSI, is based on a game-theoretic interpretation of the lower bound. Indeed, as argued in \citet{degenne_pure_2019}, $T^{-1}(\bm\mu)$ can be interpreted as the value of a two-player game between the algorithm which plays randomized strategies on the simplex and the nature selects the best alternative in $\alt(\opt)$ to confuse the first player for identifying the optimal set $\opt$. The pseudo-code of Game-cPSI is described in  
Algorithm~\ref{alg:safe_psi_exact}. 
\paragraph{Sampling Rule}  We adopt the two-player iterative saddle-point solving scheme recently popularized by \citet{degenne_pure_2019}. We maintain a learner $\cL_{\simplex}$ on the $(K-1)$ dimensional simplex $\simplex$. At each round, the recommendation $O_t$ is computed from the empirical means and compute  $$\bm \lambda_t \ceq \argmin_{\bm \lambda \in \alt(O_t)} \sum_i w_{t, i} \lnorm{\hat\mu_{t, i} - \lambda_i}^2. $$ The mixed strategy $w_t$ of the $\sup$ player is computed from $\cL_{\simplex}$ (for example using AdaHedge \citep{rooij_ada_hedge}) then the algorithm selects $k_t \sim w_t$ and pulls arm $k_t$. The learner $\cL_{\simplex}$ is then updated with $g_t$, an optimistic value of the payoff, which we make explicit below. 

\paragraph{Stopping Rule} We use a GLRT (Generalized Likelihood Ratio Test) stopping rule. At each round, the GLRT statistic is computed, and the algorithm stops if this statistic exceeds a threshold $h(t,\delta)$, i.e., if 
$$ \inf_{\bm \lambda \in \alt(O_t)} \sum_{i} \frac{N_{t, i}}{2}\lnorm{\hat\mu_{t, i} - \lambda_i}^2 > h(t, \delta)\:,$$ 
and the algorithm recommends $O_t$ at stopping. The threshold $h(t, \delta)$ is calibrated to ensure the correctness of the recommendation with high probability. In particular, the threshold is designed to ensure that for any $\delta \in (0,1)$, the event
\[ \Xi _\delta \ceq \lp \forall t\geq K, \sum_i \frac{N_{t, i}}{2}\lnorm{\hat\mu_{t, i} - \mu_i}^2 \leq h(t, \delta) \rp\] 
holds with probability at least $1-\delta$. Introducing $c_{t,k} \ceq h(t^2, 1/t^3) / N_{t, k}$,  
we define $$ g_t(w) = \sum_{i}w_i (\| \hat\mu_{t, i} - \lambda_{t, i}\|_2 + \sqrt{c_{t, k}})^2\:. $$
Using the property of the event $\Xi _\delta$ and triangular inequality, it is simple toe prove that this definition of $g_t$ guarantee optimism (i.e. $g_t(w) \geq \sum_{i}w_i \| \mu_{i} - \lambda_{t, i}\|_2^2$ ) with probability larger than ($1-1/t^2)$. 
\medskip
\begin{algorithm}[htb]
\caption{Game-cPSI}
\label{alg:safe_psi_exact}
\begin{algorithmic}
\STATE \emph{Initialize} : {pull each arm once, $w_{K-1} = (1/K, \dots, 1/K)$} 
 \FOR{$t = K, 2, \dots$}  
  \STATE Let $F_t \ceq  \Set{i \given  \bm A\hat \mu_{t, i} \leq b}$ the empirical feasible set 
  \STATE Let $O_t \ceq \Set{ i \in F_t \given \forall j \in F_t\backslash\{i\},\;  \hat \mu_{t, i} \nprec \hat\mu_{t,j} }$
  
  \IF{$\inf_{\bm \lambda \in \alt(O_t) } \sum_i \frac12N_{t, i} \lnorm{\hat \mu_{t, i} - \lambda_i}^2  > h(t,\delta)$} 
  \STATE {\bf break} and {\bf return} $O_t$
  \ENDIF
  \STATE Get $w_t$ from the learner  $\cL_{\simplex}$
  \STATE {Get $\bm \lambda_t  \ceq \argmin_{\bm \lambda \in \alt(O_t)}  \sum_{i} w_{t, i}\lnorm{\hat \mu_{t, i} - \lambda_{i}}^2$}
  \STATE Update $\cL_\Delta$ with gains  $g_t (\cdot) $
  \STATE Get  $k_t \sim  w_t $ 
  \STATE Pull $k_t$ and collect observation $X_t$
 \STATE  $t\gets t+1$
 \ENDFOR
 \end{algorithmic}
\end{algorithm}

\subsection{Theoretical guarantees}
\begin{lemma}{\cite{kaufmann_mixture_2021}}
	With the threshold $h(t,\delta) \ceq 2Kd\log(4+\log(t/(Kd))) + Kd c_G\lp \frac{\log(1/\delta)}{Kd}\rp$, the GLRT stopping  rule ensures that $\bP_{\nu}(\tau<\infty, \wh O_\tau \neq O^*(\bm\mu)) \leq \delta,$ for any $\nu\in \cD^K$ with means $\bm \mu \in \cI^K$ and $c_G(x) \approx x + log(x)$.  
\end{lemma}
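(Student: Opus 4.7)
The plan is to follow the standard GLRT correctness argument. The key reduction is the following: on any sample path where $\tau<\infty$ and $\wh O_\tau \neq \opt(\bm\mu)$, one has $\bm\mu \in \alt(\wh O_\tau)$, so the GLRT statistic is upper bounded by the deviation of the empirical means from the truth,
\[
\inf_{\bm\lambda \in \alt(\wh O_\tau)} \sum_i \frac{N_{\tau,i}}{2}\lnorm{\hat\mu_{\tau,i}-\lambda_i}^2 \leq \sum_i \frac{N_{\tau,i}}{2}\lnorm{\hat\mu_{\tau,i}-\mu_i}^2.
\]
By the stopping condition the left-hand side is strictly larger than $h(\tau,\delta)$, so
\[
\{\tau<\infty,\ \wh O_\tau \neq \opt(\bm\mu)\} \subseteq \Xi_\delta^\complement \ceq \lb \exists t\geq K:\ \sum_i \frac{N_{t,i}}{2}\lnorm{\hat\mu_{t,i}-\mu_i}^2 > h(t,\delta) \rb.
\]
Thus it suffices to prove $\bP_{\nu}(\Xi_\delta^\complement) \leq \delta$ for the announced choice of $h(t,\delta)$.

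First I would perform this reduction carefully, paying attention to the fact that the infimum in the definition of the stopping statistic is taken over $\alt(\wh O_\tau)$ and not $\alt(\opt)$; the inclusion $\bm\mu \in \alt(\wh O_\tau)$ whenever $\opt(\bm\mu)\neq \wh O_\tau$ is exactly what closes this gap. Then the main task is the time-uniform deviation bound. For each arm $i$, $\hat\mu_{t,i}$ is the sample mean of $N_{t,i}$ i.i.d. $d$-dimensional Gaussians with identity covariance, so the quantity $\frac{N_{t,i}}{2}\lnorm{\hat\mu_{t,i}-\mu_i}^2$ looks like a one-arm chi-square log-likelihood. Summing over $i$ yields a $Kd$-dimensional object of the same nature, and the prescribed threshold $h(t,\delta)=2Kd\log(4+\log(t/(Kd)))+Kd\,c_G(\log(1/\delta)/(Kd))$ is precisely the calibration produced by the mixture-of-martingales (``method of mixtures'') construction of \citet{kaufmann_mixture_2021}.

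To conclude, I would apply their Proposition/Theorem on time-uniform concentration of Gaussian vector empirical means (their Theorem~9, stated there for a single $D$-dimensional arm, applied here with $D=Kd$ to the concatenated vector of all per-arm deviations). The resulting nonnegative supermartingale together with Ville's inequality gives exactly
\[
\bP_{\nu}\!\lp \exists t\geq K:\ \sum_{i}\frac{N_{t,i}}{2}\lnorm{\hat\mu_{t,i}-\mu_i}^2 > h(t,\delta) \rp \leq \delta,
\]
which by the reduction above implies the claim.

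The main obstacle, and the reason the result is cited rather than proved from scratch, is the time-uniformity of the bound: the event must hold simultaneously for all $t\geq K$, with the additional complication that the per-arm sample sizes $N_{t,i}$ are data-dependent and not deterministic. The mixture technique of \citet{kaufmann_mixture_2021} resolves both issues at once by averaging a one-parameter family of exponential supermartingales against a carefully chosen prior on $\bR^{Kd}$; the doubly logarithmic factor $\log(4+\log(t/(Kd)))$ is the price paid for this uniformity, and the function $c_G$ with $c_G(x)\approx x+\log(x)$ arises from optimizing the prior. Since this is a direct plug-in of the cited result to the $Kd$-dimensional Gaussian setting (one coordinate per arm-objective pair), the proof reduces to citing the appropriate statement and noting the inclusion derived in the first paragraph.
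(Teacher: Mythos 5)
Your proposal is correct and follows essentially the same route as the paper, which does not spell out a proof but relies on exactly this standard GLRT argument: the inclusion $\{\tau<\infty,\ \wh O_\tau \neq \opt(\bm\mu)\} \subseteq \Xi_\delta^\complement$ (since $\bm\mu \in \alt(\wh O_\tau)$ whenever the recommendation is wrong), combined with the mixture-martingale calibration of the threshold $h(t,\delta)$ cited from \citet{kaufmann_mixture_2021} to guarantee $\bP_\nu(\Xi_\delta^\complement)\leq\delta$. Your identification of the deviation statistic as a $Kd$-dimensional Gaussian self-normalized quantity and of $c_G$ and the doubly logarithmic term as the price of time-uniformity matches the intended plug-in of the cited result.
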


\begin{theorem}
Let $\cD$ be the class of $d$-variate distributions with independent $1$-subGaussian marginals. For any $\nu \in \cD^K$ with means $\bm \mu \in \cI^K$, Game-cPSI satisfies
$$ \limsup_{\delta \to 0}\frac{\bE_{\nu }[\tau_\delta]}{\log(1/\delta)} \leq T^*(\bm\mu).$$ 
\end{theorem}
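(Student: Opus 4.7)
The plan is to follow the gamification template of \citet{degenne_pure_2019} and \citet{menard2019gradientascentactiveexploration}. Correctness is already handled by the calibration of $h(t,\delta)$, so only the upper bound on $\bE_\nu[\tau_\delta]$ remains. I would first define a concentration event of the form $\cE_T \ceq \bigcap_{t=K}^{T}\{\max_i \lnorm{\hat\mu_{t,i}-\mu_i}\leq \varepsilon_T\}$ with $\varepsilon_T\to 0$ and $\bP_\nu(\cE_T^\complement)=o(1)$, then write
\begin{equation*}
\bE_\nu[\tau_\delta] \leq T_0(\delta) + \bE_\nu\!\left[\tau_\delta\, \ind_{\cE_{T_0(\delta)}^{\,\complement}}\right],
\end{equation*}
where $T_0(\delta) = (1+\eta)\, T^*(\bm\mu)\log(1/\delta)$ for arbitrary $\eta>0$ is the target. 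The second term is controlled by the intersection event and the trivial almost-sure bound $\tau_\delta\leq \infty$ can be avoided by using a truncation argument: show that on $\cE_T$ the algorithm must stop before $T_0(\delta)$, and that $\bP_\nu(\tau_\delta>T)$ decays fast enough. It then suffices to argue that, on $\cE_{T_0(\delta)}$, one has $\tau_\delta\leq T_0(\delta)$.

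\textbf{Key steps on the good event.} On $\cE_T$ for $T$ large enough, continuity of $\opt(\cdot)$ at $\bm\mu$ (which holds because near $\bm\mu$ none of the strict Pareto dominance inequalities nor the strict feasibility inequalities defining $\opt(\bm\mu)$ change sign) implies $O_t = \opt(\bm\mu)$ for every $t \geq t_0$, so the inner $\inf$ in the stopping rule is taken over a fixed set $\alt(\opt(\bm\mu))$. Next, I would combine the three ingredients in the usual way: (i) the no-regret guarantee of $\cL_{\simplex}$ (AdaHedge gives $R_t=O(\sqrt{t\log K})$) yields $\sum_{s=K}^{t} g_s(w_s) \geq \sup_{w\in\simplex} \sum_{s=K}^{t} g_s(w) - R_t$; (ii) the optimistic choice of $g_t$ ensures $g_s(w)\geq \inf_{\bm\lambda\in\alt(O_s)} \sum_i w_i \lnorm{\mu_i-\lambda_i}^2$ with high probability; (iii) concavity of $w\mapsto \inf_{\bm\lambda} \sum_i w_i \lnorm{\mu_i - \lambda_i}^2$ together with Jensen applied to the average strategy $\bar w_t$ gives a lower bound in terms of $T^*(\bm\mu)^{-1}$. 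Passing from sampled arms $k_s\sim w_s$ to the counts $N_{t,i}$ is handled by Azuma-Hoeffding. Putting these pieces together one obtains, on $\cE_T$,
\begin{equation*}
\inf_{\bm\lambda \in \alt(O_t)} \sum_i \frac{N_{t,i}}{2}\,\lnorm{\hat\mu_{t,i}-\lambda_i}^2 \geq \frac{t}{2\,T^*(\bm\mu)}\bigl(1 - o_T(1)\bigr) - O(\sqrt{t\log K}),
\end{equation*}
so that the stopping condition is triggered as soon as $t/T^*(\bm\mu)$ exceeds $2h(t,\delta)(1+o(1))$. Inverting this inequality with $h(t,\delta)\sim \log(1/\delta)$ yields $\tau_\delta \leq (1+\eta)\,T^*(\bm\mu)\log(1/\delta)$ on $\cE_{T_0(\delta)}$, and letting $\eta\downarrow 0$ gives the claim.

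\textbf{Main obstacle.} The delicate step is controlling the convergence of the empirical game value toward $T^*(\bm\mu)^{-1}$ uniformly in $t$. This requires two nontrivial facts: continuity of $\bm\lambda \mapsto \inf_{\bm\lambda'\in\alt(\opt(\bm\mu))} \sum_i w_i \lnorm{\lambda_i-\lambda'_i}^2$ at $\bm\mu$ (which follows from the fact that $\alt(\opt(\bm\mu))$ is a finite union of closed half-spaces and that $\opt$ is locally constant around $\bm\mu$), and a careful handling of the optimism gap $g_s(w)-\sum_i w_i\lnorm{\mu_i-\lambda_{s,i}}^2$, whose sum must be shown to be $o(t)$ using $c_{t,k}=h(t^2,1/t^3)/N_{t,k}$ and forced exploration implicit in sampling $k_t\sim w_t$ on the simplex. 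Once these two technical points are resolved, the remainder of the proof is a standard inversion of a linear-in-$t$ lower bound against a logarithmic-in-$\delta$ threshold.
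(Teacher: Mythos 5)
Your proposal is correct and follows essentially the same route as the paper: the paper's own "proof" of this theorem is simply an appeal to the classical game-based sample-complexity recipe of Degenne et al.\ and Reda et al.\ (no-regret learner plus optimistic gains plus concentration, inverted against the GLRT threshold $h(t,\delta)\sim\log(1/\delta)$), which is exactly the argument you spell out, including the standard delicate points (local constancy of $O_t$, continuity of the inner infimum, and the optimism gap being $o(t)$). The only content the paper adds beyond that citation is the efficient computation of the best response $\bm\lambda_t$ via the decomposition of $\alt(O)$, which your sketch correctly treats as a black-box subroutine.
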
 
The proof of this theorem follows from classical recipes developed in \cite{reda, degenne20gamification}. The specificity of our contribution is to come up with an efficient implementation method. Therefore, we focus on describing the procedure for computing the "best response'' $\bm \lambda_t  \ceq \argmin_{\bm \lambda \in \alt(O_t)}  \sum_{i} w_{t, i}\lnorm{\hat \mu_{t, i} - \lambda_{i}}^2$, which is the most critical part in the implementation of Game-cPSI.

\subsection{Computational cost} 
The main computational cost of  Algorithm~\ref{alg:safe_psi_exact} is due to the computation of $\bm \lambda_t$. In a recent work, \citet{crepon24a} proposed an algorithm to compute the best response $\bm \lambda_t$ in the case of unconstrained  PSI (i.e., for $\bm A= \bm 0, b \in \bR^d_+$) . The computational cost of their algorithm is  $O\lp (K(p+d) + d^3p) \begin{pmatrix}
	p+d-1 \\ d-1
\end{pmatrix}\rp$ where $p$ is the size of the Pareto Set. We will show that their algorithm can be adapted to the case of constrained PSI. 
We prove the following result, which allows us to decompose $\alt$ into two simpler sets $\alt^+(O)$ and $\alt^-(O)$. 
\begin{lemma}
\label{lem:decomp} 
	It holds that  
	$$ \alt(O)  = \alt^{-}(O) \cup \alt^+(O)\:, \text{where } $$ 
	\begin{eqnarray*}
		\alt^{-}(O) &\ceq& \bigcup_{i, j \in O} \lp  \Set{\bm \lambda \in \cI^K  \given \bm A \lambda_i \not\leq b} \cup  \Set{\bm \lambda \given \lambda_i \leq \lambda_{j}}\rp\\
	\alt^+(O) &\ceq&  \bigcup_{i \notin  O}  \Set{\bm \lambda \in \cI^K \given \bm A\lambda_i \leq b \text{ and } \forall j \in O, \lambda_i \not \leq \lambda_{j}}.  
	\end{eqnarray*} 	
\end{lemma}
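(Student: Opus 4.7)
The plan is to establish the two inclusions $\alt(O) \subseteq \alt^-(O) \cup \alt^+(O)$ and $\alt^-(O) \cup \alt^+(O) \subseteq \alt(O)$ by case analysis on how $\opt(\bm\lambda)$ differs from $O$. Before diving in, I would record a single key observation that will be used repeatedly: any $j \in \opt(\bm\lambda) = \pto(\feas(\bm\lambda),\bm\lambda)$ satisfies $\lambda_j \in \poly$ and is not Pareto-dominated by any feasible arm under $\bm\lambda$; in particular, whenever $\lambda_j \leq \lambda_k$ for some $k \neq j$, the arm $k$ must be infeasible ($\lambda_k \notin \poly$).

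For the forward inclusion, I would handle $\alt^-$ and $\alt^+$ in turn. If $\bm\lambda \in \alt^-(O)$ via a pair $(i,j) \in O\times O$, then either $\lambda_i \notin \poly$ (so $i \in O \setminus \opt(\bm\lambda)$ directly), or $\lambda_i \leq \lambda_j$; in the latter case, if $\lambda_j \in \poly$ then $i$ is dominated inside $\feas(\bm\lambda)$ so $i \notin \opt(\bm\lambda)$, and if $\lambda_j \notin \poly$ then $j \in O \setminus \feas(\bm\lambda) \subseteq O \setminus \opt(\bm\lambda)$ because $\opt(\bm\lambda)\subseteq \feas(\bm\lambda)$. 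Either way $\opt(\bm\lambda) \neq O$. If $\bm\lambda \in \alt^+(O)$ via $i \notin O$ with $\lambda_i \in \poly$ and $\lambda_i \not\leq \lambda_k$ for all $k \in O$, I would walk up the finite Pareto-dominance order on $\feas(\bm\lambda)$ starting from $i$ to reach some $j \in \opt(\bm\lambda)$ with $\lambda_i \leq \lambda_j$. The defining property of $\alt^+$ forces $j \notin O$, so $j \in \opt(\bm\lambda) \setminus O$ and $\bm\lambda \in \alt(O)$.

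For the backward inclusion, I would split on whether $O \not\subseteq \opt(\bm\lambda)$ or $\opt(\bm\lambda) \not\subseteq O$. In the first case, pick $i \in O \setminus \opt(\bm\lambda)$: if $\lambda_i \notin \poly$ we land in $\alt^-(O)$ trivially; otherwise $i$ is dominated in $\feas(\bm\lambda)$, so I would take a maximal dominator $j \in \opt(\bm\lambda)$ with $\lambda_i \leq \lambda_j$ and argue by subcases. If $j \in O$, the pair $(i,j)$ witnesses $\alt^-(O)$. If $j \notin O$, I claim $j$ witnesses $\alt^+(O)$: $\lambda_j \in \poly$ is immediate; if failure occurred via some $k \in O$ with $\lambda_j \leq \lambda_k$, the key observation forces $\lambda_k \notin \poly$, giving $\bm\lambda \in \alt^-(O)$ through the infeasibility of $k \in O$. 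The second case, picking $j \in \opt(\bm\lambda) \setminus O$, is handled symmetrically, again using the key observation to exclude a dominating $k \in O$.

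I expect the main obstacle to be exactly this mixed subcase where the natural Pareto-dominator of a bad $i \in O$ lies outside $O$, or where a candidate $j \in \opt(\bm\lambda) \setminus O$ is dominated by some $k \in O$ that could be infeasible under $\bm\lambda$. Without the observation that Pareto-optimal arms cannot be dominated by feasible arms, the two sets $\alt^-(O)$ and $\alt^+(O)$ would not together cover $\alt(O)$; that observation is the single ingredient that glues both halves of the decomposition together.
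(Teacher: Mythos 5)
Your proposal is correct and follows essentially the same route as the paper's proof: a double inclusion with a case split on whether the discrepancy between $O$ and $\opt(\bm\lambda)$ comes from an arm of $O$ becoming infeasible or dominated (landing in $\alt^-$) or from a new Pareto-optimal feasible arm outside $O$ (landing in $\alt^+$), relying on the standard fact that a feasible arm not in $\opt(\bm\lambda)$ admits a Pareto-optimal feasible dominator. The only difference is organizational: the paper first conditions on $O \subseteq \feas(\bm\lambda)$, whereas you deflect the problematic subcase (a dominator $k\in O$ of a Pareto-optimal arm) into $\alt^-$ via your key observation that such a $k$ must be infeasible — the same argument in a slightly different order.
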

\begin{proof} We have  
\begin{eqnarray*}
	\bm \lambda \in \alt(O)  &\equi& \opt(\bm \lambda) \neq O \\
	&\equi& (a): \exists i \in O \backslash \opt(\bm \lambda) \text{ or } (b) : \exists i \notin O  \text{ such that } i \in \opt(\bm \lambda) \\
	&\equi& (a) : \exists (i,j) \in O^2, i\neq j \text{ s.t. } \lambda_i \notin \poly \text{ or } \lambda_i \prec \lambda_j \text{ or } (b) : \exists i \notin O \text{ s.t. } \lambda_i \in \poly \text{ and } \forall j \in O, \lambda_i \nprec \lambda_j  
\end{eqnarray*}
To see the direct inclusion, let $\bm \lambda \in \alt(O)$. If $O \not \subset \feas(\bm \lambda)$ then $(a)$ follows. Next, we assume $O \subset \feas(\bm \lambda)$.

As $O \neq \alt(O)$, either $O \backslash \opt(\bm \lambda) \neq \emptyset$ or $\opt(\bm \lambda) \backslash O \neq \emptyset$. Assume there exists $i \in O$ such that $i \notin \opt(\bm \lambda)$. 

If $\lambda_i \notin \poly$, then, as in the case above, the inclusion follows. Assume $\lambda_i \in \poly$, i.e., $i$ is still a feasible arm in $\bm \lambda$. In this case, there exists  $j \in \opt(\bm \lambda)$ such that $\lambda_i \prec \lambda_j$, otherwise, we would have $i \in \opt(\bm \lambda)$.  

If $j \in O$, then the inclusion $(a)$ follows. If $j\notin O$, then, as $j \in \opt(\bm \lambda)$, we have : $\lambda_j \in \poly$ and $\forall k \in \feas(\bm \lambda), \lambda_j \nprec \lambda_k$. In particular, as $O \subset \feas(\bm \lambda)$ it holds that for $k\in O$: $\lambda_j \nprec \lambda_{k}$ and $j\in \opt(\bm \lambda) \subset \feas(\bm \lambda)$, so $(b)$ follows. 

Now we assume there exists $i \in \opt(\bm \lambda)$ such that $i \notin O$, as we have $i \in \opt(\bm \lambda)$, it holds that $\lambda_i \in \poly$ and $\forall j \in \feas(\bm \lambda), \lambda_i \nprec \lambda_j$, in particular, as $O \subset \feas(\bm \lambda)$, we have $\forall j \in O, \lambda_i \nprec \lambda_j$, then $(b)$ follows. 

For the reverse inclusion, assume $(a)$ holds. Then, it follows directly that we cannot have $O = \opt(\bm\lambda)$. Similarly, suppose $(b)$ holds and $O = \opt(\bm\lambda)$. Then  $\exists i \notin O = \opt(\bm\lambda)$ such that $ i \in \feas(\bm \lambda)$ and $\forall j \in O = \opt(\bm\lambda)$, $\lambda_i \nprec \lambda_j$, i.e., $i$ is feasible in the instance $\bm\lambda$, it does not belong to the optimal set $\opt(\bm\lambda)$ and it is not dominated by any arm of $\opt(\bm\lambda)$, which is impossible if $O = \opt(\bm\lambda)$. Therefore, when $(a)$ or $(b)$ holds, we have $\bm \lambda \in \alt(O)$. 
\end{proof}

\begin{lemma}
	It holds that  
	$$ \inf_{\bm \lambda \in \alt^-(O)} \sum_{i} \frac12 w_i \lnorm{\mu_i - \lambda_i}^2 = \frac12 \min(\phi_1, \phi_2)\: \text{where}$$
	\begin{eqnarray*}
		\phi_1 &\ceq& \min_{i \in O} w_i \dist(\mu_i, \poly^c)^2,\\
		\phi_2 &\ceq& \min_{i,j\in O^2, i\neq j}\frac{w_i w_j}{w_i + w_j} \sum_{c \leq d} (\mu_i^c - \mu_j^c )_+^2, 
			\end{eqnarray*}
\end{lemma}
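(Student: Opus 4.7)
The plan is to exploit the structure of $\alt^-(O)$ as a union over $(i,j) \in O^2$ of two types of constraint sets, combined with the separable form of the quadratic objective. Since the infimum of a nonnegative function over a union equals the minimum of the infima over each component, it suffices to solve each subproblem separately and then take the minimum of the resulting values. In each such subproblem only one or two of the $K$ arms are constrained, so for every other arm $k$ the unconstrained minimizer of $\tfrac{w_k}{2}\lnorm{\mu_k - \lambda_k}^2$ is $\lambda_k = \mu_k$, contributing zero cost; the problem thus collapses to a one- or two-arm optimization.

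For the ``infeasibility'' piece $\{\bm A\lambda_i \not\leq b\}$, the remaining one-arm problem is $\inf_{\lambda_i \in \poly^c}\tfrac{w_i}{2}\lnorm{\mu_i - \lambda_i}^2 = \tfrac{w_i}{2}\dist(\mu_i,\poly^c)^2$ by definition of the Euclidean distance to the set $\poly^c$. Minimizing over $i \in O$ yields $\tfrac12\phi_1$. For the ``domination'' piece $\{\lambda_i \leq \lambda_j\}$, both the quadratic objective and the componentwise constraint decouple across the $d$ coordinates, reducing the problem to $d$ independent one-dimensional QPs of the form $\inf_{x \leq y}\tfrac{w_i}{2}(\mu_i^c - x)^2 + \tfrac{w_j}{2}(\mu_j^c - y)^2$. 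When $\mu_i^c \leq \mu_j^c$, the unconstrained optimum $(x,y) = (\mu_i^c, \mu_j^c)$ is feasible and the value is $0$; otherwise the constraint must be active at optimality, and setting $x = y = t$ with $t^\star = (w_i\mu_i^c + w_j\mu_j^c)/(w_i+w_j)$ gives value $\tfrac{w_i w_j}{2(w_i+w_j)}(\mu_i^c - \mu_j^c)^2$. In both cases the per-coordinate value equals $\tfrac{w_i w_j}{2(w_i+w_j)}(\mu_i^c - \mu_j^c)_+^2$. Summing over $c$ and minimizing over ordered pairs $(i,j) \in O^2$ with $i\neq j$ yields $\tfrac12\phi_2$.

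There is no real obstacle: the only nontrivial observation is that the domination constraint $\lambda_i \leq \lambda_j$ is separable across coordinates, after which each one-dimensional QP is handled by an elementary Lagrangian or direct-substitution calculation. Taking the minimum over the two subcases delivers $\tfrac12 \min(\phi_1, \phi_2)$, as claimed. A minor point to address in the write-up is that $\poly^c$ is open, so the infimum in the infeasibility case is not attained at an interior point, but its value still coincides with $\dist(\mu_i,\poly^c)^2$ since the infimum over an open set equals the distance to its closure.
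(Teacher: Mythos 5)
Your proof is correct and takes essentially the same route as the paper: decompose $\alt^-(O)$ into the per-arm infeasibility sets and the pairwise domination sets, use that the infimum of a nonnegative separable objective over a union is the minimum of the per-set infima (with all unconstrained arms contributing zero), and evaluate the infeasibility piece as $\tfrac{w_i}{2}\dist(\mu_i,\poly^c)^2$. The only difference is that the paper obtains the value of the two-arm domination subproblem by citing Lemma~2 of \citet{crepon24a}, whereas you derive the closed form $\frac{w_i w_j}{2(w_i+w_j)}\sum_{c\leq d}(\mu_i^c-\mu_j^c)_+^2$ directly via the coordinate-wise one-dimensional QPs, which is a correct, self-contained substitute; your observation that the infimum over the open set $\poly^c$ still equals the squared distance is also handled consistently with the paper.
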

and a minimizer can be computed in time $O(p^2d + p d m)$ where $m$ is the number of constraints and $p = \lvert O \rvert$. 
\begin{proof}
	We have $$\alt^-(O) = \left( \bigcup_{i \in O} \Gamma_i \right) \bigcup \left(\bigcup_{i,j \in O^2 \atop i\neq j} \Lambda_{i,j } \right),$$
	with \begin{eqnarray*}
		\Gamma_i &\ceq& \Set{ \bm \lambda \in \cI^K \given \lambda_i \notin \poly } \\
		\Lambda_{i,j} &\ceq &\Set{ \bm \lambda \in \cI^K \given \lambda_i \prec \lambda_j}.
	\end{eqnarray*} 
	Therefore, letting $D_w(\bm\lambda) \ceq \sum_{i} \frac12 w_i \lnorm{\mu_i - \lambda_i}^2$ we have  $$ \inf_{\bm \lambda \in \alt^-(O)} D_w(\bm \lambda) = \lp \min_{i \in O} \inf_{\bm \lambda \in \Gamma_i} D_w(\bm\lambda) )\land \min_{i,j \in O^2, i\neq j} \inf_{\bm \lambda \in \Lambda_{i,j}} D_w(\bm\lambda) \rp\:. $$ Next, we observe that  
	$$ \inf_{\bm \lambda \in \Gamma_i} D_w(\bm\lambda) = \frac12 w_i \dist(\mu_i, \poly^c)^2 $$ and a minimizer of this quantity can be computed in $O(md)$. Indeed, since $\mu_i \in \poly$ (as $i \in O$),  
	$\dist(\mu_i, \poly^c) = \dist(\mu_i, \partial \poly)\:,$ and the latter can be computed in $O(md)$ for a polyhedron with matrix of constraints $\bm A \in \bR^{m, d}$ (cf Lemma~H.1 of \citet{katz-samuels_feasible_2018}). Finally 
	Lemma~2 of \citet{crepon24a} shows that the value of $\inf_{\bm \lambda \in \Lambda_{i,j}} D_w(\bm\lambda)$ and its minimizer can be computed in time $O(p^2d)$.     
\end{proof}

\begin{lemma}
	There exists an algorithm that computes the value and a minimizer of $\inf_{\bm \lambda \in \alt^+(O)} \sum_{i} \frac12 w_i \lnorm{\mu_i - \lambda_i}^2$ in polynomial time.  
\end{lemma}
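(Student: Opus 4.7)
The plan is to reduce the computation of $\inf_{\bm\lambda \in \alt^+(O)} D_w(\bm\lambda)$ (with $D_w(\bm\lambda) \ceq \sum_i \tfrac12 w_i \lnorm{\mu_i - \lambda_i}^2$) to a polynomial number of convex quadratic subproblems with linear constraints, each of which can be solved in polynomial time. First I would decompose
\[
\alt^+(O) \;=\; \bigcup_{i \notin O} \alt^+_i(O), \qquad \alt^+_i(O) \ceq \Set{\bm\lambda \in \cI^K \given \bm A \lambda_i \leq b, \; \forall j \in O,\; \lambda_i \nprec \lambda_j},
\]
and note that in any minimizer of $D_w$ over $\alt^+_i(O)$, the coordinates $\lambda_k$ for $k \notin \{i\} \cup O$ can be set to $\mu_k$ at zero cost, so the optimization reduces to the $(|O|+1)d$ variables $\lambda_i$ and $(\lambda_j)_{j \in O}$.

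The key step is to turn each $\alt^+_i(O)$ into a finite union of polyhedra. The constraint $\lambda_i \nprec \lambda_j$ is the disjunction $\bigvee_{c \in [d]}\{\lambda_i^c \geq \lambda_j^c\}$ (after passing to the closure, which does not change the infimum). Hence, letting $\mathbf{c} = (c_j)_{j \in O} \in [d]^O$ denote a coordinate assignment,
\[
\closure(\alt^+_i(O)) \;=\; \bigcup_{\mathbf{c} \in [d]^O} \Lambda_{i,\mathbf{c}}, \qquad \Lambda_{i,\mathbf{c}} \ceq \Set{\bm\lambda \given \bm A \lambda_i \leq b, \; \forall j \in O,\; \lambda_i^{c_j} \geq \lambda_j^{c_j}}.
\]
For each $(i,\mathbf{c})$, the restricted problem $\inf_{\bm\lambda \in \Lambda_{i,\mathbf{c}}} D_w(\bm\lambda)$ is a strictly convex quadratic program with $|O| + m$ linear inequality constraints, whose unique minimizer can be computed in polynomial time with any standard QP solver (e.g., interior-point methods in time polynomial in $d,|O|,m$).

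The main obstacle is that the naive enumeration $|[d]^O| = d^{|O|}$ is exponential in the size of $O$. To handle this, I would adapt the reduction used by \citet{crepon24a} for unconstrained PSI: by symmetry among the $j \in O$ sharing the same assigned coordinate, one needs only to track the \emph{multiplicities} $(n_c)_{c \in [d]}$ with $\sum_c n_c = |O|$, of which there are $\binom{|O|+d-1}{d-1}$. For a fixed multiplicity vector, each $\lambda_j$ is decoupled from the others given $\lambda_i$, and minimization in $\lambda_j$ admits a closed-form expression, leaving a single convex QP in $\lambda_i$ with $m$ linear constraints $\bm A \lambda_i \leq b$ plus $d$ piecewise-linear terms. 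Since the constraint $\bm A \lambda_i \leq b$ only enters as additional linear inequalities on $\lambda_i$, the per-assignment cost of the unconstrained algorithm of \citet{crepon24a} is only augmented by the cost of solving a convex QP with $m$ extra constraints. Taking the minimum over $i \notin O$ and over the $\binom{|O|+d-1}{d-1}$ multiplicity vectors yields the value and a minimizer of $\inf_{\alt^+(O)} D_w$ in time polynomial in $K$, $d$, and $m$.
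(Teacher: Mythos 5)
Your overall route coincides with the paper's: decompose $\alt^+(O)$ over the candidate new optimal arm $i \notin O$, observe that arms outside $\{i\} \cup O$ need not move, reduce the non-dominance constraints to the choice of one coordinate per $j \in O$, and solve one convex quadratic program per coordinate assignment, with the feasibility constraint $\bm A \lambda_i \leq b$ simply appended as extra linear constraints. Your per-assignment treatment is in fact slightly cleaner than the paper's: by keeping the $(\lambda_j)_{j \in O}$ as variables subject to $\lambda_j^{c_j} \leq \lambda_i^{c_j}$ you get a single convex QP per assignment, whereas the paper first eliminates the $\lambda_j$'s, obtains a piecewise-quadratic function of $\lambda_i$ through the $(\mu_j^{c} - \lambda^{c})_+^2$ terms, and then splits it into at most $p^d$ constrained QPs.

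However, the step that actually delivers polynomial time --- bounding the number of assignments to enumerate --- is not justified by your argument. You claim that ``by symmetry among the $j \in O$ sharing the same assigned coordinate, one needs only to track the multiplicities $(n_c)_{c \in [d]}$.'' This is false: two assignments with the same multiplicities yield different objectives because the arms have distinct means $\mu_j$ (and weights $w_j$), so no symmetry identifies them; a multiplicity vector does not even determine an objective function, and the follow-up sentence (``for a fixed multiplicity vector, each $\lambda_j$ is decoupled \dots leaving a single convex QP in $\lambda_i$'') conflates multiplicity vectors with assignments --- decoupling holds for a fixed assignment, and if you instead take the per-arm minimum over coordinates you recover $\sum_{j} \frac12 w_j \min_{c} (\mu_j^c - \lambda^c)_+^2$, which is precisely the nonconvex objective the enumeration is meant to avoid. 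The correct reduction, which is what \citet{crepon24a} prove and what the paper invokes, is structural rather than symmetry-based: only maps $\phi : O \to [d]$ that are coordinate-wise optimal for some position of $\lambda_i$ (the ``valid'' maps) can be optimal; these are enumerable by a graph-based procedure, and their number is bounded by $\binom{p+d-1}{d-1}$ with $p = \lvert O \rvert$. Substituting that argument for your multiplicity claim (together with the easy check that validity is a pointwise-in-$\lambda$ notion, hence unaffected by adding $\bm A \lambda \leq b$, so the map induced by the constrained minimizer is among those enumerated) makes your proof go through; as written, the counting step is a genuine gap.
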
\begin{proof}
In the instances of $\bm \lambda \in \alt^+(O)$ a novel arm is added to the Pareto Set of feasible arms. From its definition in Lemma~\ref{lem:decomp}, we have 

$$ \alt^+(O)  = \bigcup_{i \in O^c} \alt_i^+(O)$$ where 
$$ \alt_i^+(O) \ceq  \Set{\bm \lambda \in \cI^K \given \lambda_i \in \poly \text{ and } \forall j \in O,   \lambda_i \nprec \lambda_j}.$$ 
Then, note that to guarantee that arm $i$ is not dominated by $j$ while minimizing the transportation cost, it is sufficient to move the arm (or both arms) along one objective. Therefore, to minimize the transportation cost $D(w, \bm\lambda)$,
 it is sufficient to move arm  $i$ to a novel point $\lambda \in \bR^d$ (satisfying the constraints) and move each arm $j \in O$ only along the coordinate with minimal cost to make $\lambda$ not dominated by the $(\lambda_j)_{j\in O}$. This amounts to solving the following optimization problem : 
 \begin{equation}
 \label{eq:cspsi_inf}
 	\inf_{\bm \lambda \in \alt_i^+(O)} D(w, \bm\lambda) = \inf_{\lambda \in \bR^d \atop \bm A \lambda \leq b } \frac12 w_i\lnorm{\mu_i - \lambda }^2 + \sum_{j \in O} \frac12 w_j \min_{c\leq d } (\mu_j^c - \lambda^c)_+^2\: .
 \end{equation}
 The rightmost problem is related to the optimization problem studied by \citet{crepon24a}, except that now we have additional linear constraints $\bm A \lambda \leq b$ due to the constrained PSI setting. However, we will show that their algorithm can still be adapted to efficiently solve  \eqref{eq:cspsi_inf}. To see this, we let $\phi$ be a mapping from $O$ to $[d]$ and introduce $ h_i^\phi : \bR^d \rightarrow \bR_+$, defined as $$ h_i^\phi(\lambda) \ceq \frac12 w_i\lnorm{\mu_i - \lambda }^2 + \sum_{j \in O} \frac12 w_j  (\mu_j^{\phi(j)} - \lambda^{\phi(j)})_+^2.$$ Then remark that the optimization problem in Equation~\ref{eq:cspsi_inf} rewrites as 
 \begin{equation}
 \label{eq:okz}
 	\min_{\phi \in [d]^p} \inf_{\lambda \in \bR^d \atop \bm A \lambda \leq b} h_i^\phi(\lambda )\:,
 \end{equation}
where the notation $[d]^p$ denotes the set of mapping from $O$ to $[d]$ and $p = \lvert O\rvert$. However, as shown by \citet{crepon24a}, not all such maps are valid, and there is no need to optimize $h_i^\phi$ for an invalid map. The authors proposed an efficient graph-based algorithm to enumerate all the valid maps. They showed  that the number of valid maps is bounded  by 
$\begin{pmatrix}
	p+d-1 \\
	d-1 
\end{pmatrix}$. Now, it remains to minimize $h_i^\phi$ under the linear constraints for each valid map. Note that up to a reordering of the quantities, $(\mu_j^{\phi(j)})_{j \in O}$,  $h_i^\phi$ is piecewise quadratic, and the problem can be decomposed into at most $p^d$ convex quadratic problems with constraints $\wt{\bm A}\lambda \leq \wt b$ where $\wt {\bm A} \in \bR^{m+2d, d}$ and $\wt b \in \bR^{m+2d}$. It is known that solving a convex QP can be done in polynomial time \citep{tse}. Thus, when the number of arms is large and $d$ is small, the overall complexity of problem~\ref{eq:okz} is $O(p^{2d} s(m+d, d))$, where  $s(m+2d, d)$  is the time complexity of a convex QP with  $m+2d$ constraints in dimension $d$.   
\end{proof}
\section{Analysis of e-cAPE}
\label{sec:upper_bound}
In this section, we show the correctness of e-cAPE and upper bound its expected stopping time, showing a near-optimal sample complexity. 
\subsection{Proof of the correctness}
\label{sec:proof_correct}
Before proving the correctness of e-cAPE, we introduce some notation. We recall that  $\poly$ is the polyhedron of feasible arms, $\feas$ is the set of indices of the feasible arms and $$ \opt \ceq \Set{ i \given \vmu_i \in \poly \text{ and } \forall j \text{ s.t } \vmu_j \in \poly, \vmu_i \nprec \vmu_j} $$ is the Pareto-optimal feasible set;  $\subopt \ceq \Set{ i \in [K]\given  \exists j\in \opt\backslash\{i\} : \vmu_i \prec \vmu_j }$ which can be rewritten as 
\begin{equation}
\label{eq:mm-l}
	\subopt \ceq \Set{ i \in [K]\given \exists j\in \feas\backslash\{i\} : \vmu_i \prec \vmu_j }.
\end{equation}
To see this, note that if $\vmu_i\prec \vmu_j$ and $j\in \feas$, then, as $j\in \feas$, either $j\in \opt$ or there exists $j' \in \opt $ such that $\vmu_j \prec \vmu_{j'}$ and so $\vmu_i \prec \vmu_{j'}$. Thus 
\begin{equation}
\label{eq:equiv}
	\exists j \in \feas\backslash\{i\} : \vmu_j \prec \vmu_i \impl \exists j \in \opt\backslash\{i\} : \vmu_i \prec \vmu_j
\end{equation}
and the reverse inclusion is trivial, which justifies \eqref{eq:mm-l}. Finally, it is simple to check that for the polyhedron $\poly$, 
\begin{equation}
\label{eq:eq-dist-poly}
	\dist(\mu, \poly^c) = \dist(\mu, \partial\poly) \quad  \forall \mu \in \poly.
\end{equation}
\begin{lemma}
On the event $\cE$, when cAPE stops and returns a partition $(O_{\tau}, S_{\tau}, I_{\tau})$, it is a valid partition. 
	\end{lemma}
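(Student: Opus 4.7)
The plan is to verify on the event $\cE$ and at the stopping time $\tau$ the three defining properties of a valid answer, namely $O_{\tau} = \opt$, $S_{\tau} \subset \subopt$, and $I_{\tau} \subset \feas^{\complement}$; the partition conditions $S_{\tau}\cap I_{\tau} = \emptyset$ and $S_{\tau}\cup I_{\tau} = O_{\tau}^{\complement}$ are automatic from the definitions $S_{\tau} = (F_{\tau}\cup G_{\tau})\setminus O_{\tau}$ and $I_{\tau} = F_{\tau}^{\complement}\cap G_{\tau}^{\complement}$. The whole argument is the promotion of the empirical certificates $Z_1(\tau)\geq 0$ and $Z_2(\tau)\geq 0$ to statements about the true $\bm\mu$, through Lemma~\ref{lem:concentr}.

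I would first dispose of every feasibility statement in one blow. From the definition of $\gamma_i$, the inequality $\gamma_i(t)\geq 0$ is equivalent to $U_i(t)\leq \eta_i(t) = \dist(\hat\mu_{t,i},\partial\poly)$; on $\cE$ this forces $\lnorm{\hat\mu_{t,i}-\mu_i}\leq \dist(\hat\mu_{t,i},\partial\poly)$, so the two points cannot lie on opposite sides of $\partial\poly$. Applying this (i) to every $i\in O_\tau$, using $Z_1^F(\tau)\geq 0$ and $O_\tau\subset F_\tau$, gives $O_\tau\subset \feas$; (ii) to every $i\in I_\tau$, for which $i\notin F_\tau$ and $\gamma_i(\tau)\geq 0$ (since $i\notin G_\tau$), gives $I_\tau\subset \feas^{\complement}$; and (iii) contrapositively to every $i\in \feas$ yields $\feas \subset F_\tau\cup G_\tau = O_\tau\sqcup S_\tau$.

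Next I extract the Pareto information. From $Z_1^{\mathrm{PS}}(\tau)\geq 0$ and Lemma~\ref{lem:concentr}(i) it follows that $\M(i,j)\geq 0$, hence $\mu_i\nprec \mu_j$, for all distinct $i,j\in O_\tau$. For any $i\in S_\tau$, the inclusion $i\in F_\tau\cup G_\tau$ makes the $\gamma_i(\tau)$-option in the $Z_2$ formula non-positive, so $Z_2(\tau)\geq 0$ forces $\xi_i(\tau)\geq 0$ and therefore the existence of some $j\in (F_\tau\cup G_\tau)\setminus\{i\}$ with $\mu_i\prec \mu_j$. I then invoke a dominance-chain argument: iterating this construction produces a strictly $\prec$-increasing sequence of arms in $F_\tau\cup G_\tau$, which cannot revisit an index already appearing (by transitivity of $\prec$ together with the pairwise non-dominance inside $O_\tau$ established above), so finiteness of $[K]$ forces it to terminate at an arm of $O_\tau$. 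Combined with $O_\tau\subset \feas$, every $i\in S_\tau$ is dominated by a feasible arm, proving $S_\tau \subset \subopt$.

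The same chain argument yields $O_\tau = \opt$. The inclusion $\opt\subset O_\tau$ holds because any $i\in \opt\setminus O_\tau$ lies in $S_\tau$ (not in $I_\tau$, since $i\in \feas$) and is therefore dominated by some $j\in O_\tau\subset \feas$, contradicting $i\in\opt$. The inclusion $O_\tau\subset \opt$ holds because any $i\in O_\tau$ dominated by some $j\in \feas \subset O_\tau\sqcup S_\tau$ leads to a contradiction: if $j\in O_\tau$ pairwise non-dominance is violated, and if $j\in S_\tau$ the chain applied to $j$ produces $k\in O_\tau$ with $\mu_i\prec \mu_k$, again a violation. The main obstacle in the plan is exactly this chain argument: the stopping rule only certifies domination by an arm of $F_\tau\cup G_\tau$, which may contain truly infeasible arms, and only the chain-termination step (which crucially relies on the first step's conclusion $O_\tau\subset \feas$) upgrades that to domination by an arm of $\opt$.
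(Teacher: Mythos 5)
Your proposal is correct and follows essentially the same route as the paper's proof: feasibility of $O_\tau$ and infeasibility of $I_\tau$ from $\gamma_i(\tau)\geq 0$ via the norm concentration, and the key step of upgrading "dominated by some arm of $F_\tau\cup G_\tau$ (possibly infeasible)" to "dominated by an arm of $O_\tau\subset\feas$". The only cosmetic difference is that you follow an explicit finite domination chain where the paper takes the Pareto Set of $S_\tau$ under the true means and notes its maximal elements must be dominated by arms of $O_\tau$ — the same argument in different clothing.
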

	\begin{proof}
	To show the correctness, we have to prove that the returned partition $(O_t, S_t, I_t)$ at $t=\tau$ satisfies $O_t = \opt, S_t \subset \subopt$ and $I_t \subset \feas^c$. In this proof, we assume $\cE$ holds and we let $t=\tau$. 
	
	\item{\bf Step 1: Proving that $S_t\subset \subopt$.} Following \eqref{eq:equiv}, we have to show that 
	\begin{equation}
	\label{eq:mxw}
	\forall i \in S_t, \exists j \in \feas \text{ such that } \vmu_i \prec \vmu_j.	
	\end{equation}
	By design, $S_t \ceq (F_t \cup G_t)\backslash O_t$ and at stopping 
	$$ Z_2(t) \ceq \min_{i \in [K]\backslash O_t} \left[ \ind_{i\in F_t} \left(\max_{j\in (F_t \cup G_t) \backslash\{ i\}} \mh^-(i,j;t)\right) + \ind_{i\notin F_t}\left(\gamma_i(t) \lor  \max_{j\in (F_t \cup G_t) \backslash\{ i\}} \mh^-(i,j;t)\right)\right]$$
	satisfies $Z_2(t) \geq 0$ so that %for any $i\in S_t$, 
	\begin{equation}
	\label{eq:xb-1}
		\forall i \in S_t, \ind_{i\in F_t} \left(\max_{j\in (F_t \cup G_t) \backslash\{ i\}} \mh^-(i,j;t)\right) + \ind_{i\notin F_t}\left(\gamma_i(t) \lor  \max_{j\in (F_t \cup G_t) \backslash\{ i\}} \mh^-(i,j;t)\right) \geq 0.
	\end{equation}
	Remark that if $i \in S_t \cap G_t$, then $i \notin F_t$ and $\gamma_i(t) \leq 0$ so by \eqref{eq:xb-1}, 
	$$ \max_{j\in (F_t \cup G_t) \backslash\{ i\}}  \mh^-(i,j;t)\geq 0,$$ which also holds if $i\in F_t$. Thus, it holds that  
	\begin{equation}
		\forall i \in S_t, \max_{j\in (F_t \cup G_t) \backslash\{ i\}} \mh^-(i,j;t)\geq 0
	\end{equation}
	which on the event $\cE$, using Lemma~\ref{lem:concentr} and by the definition of $\m(i, j)$ yields 
	\begin{equation}
	\label{eq:km-l}
		\forall i \in S_t, \; \exists j \in (F_t \cup G_t) \backslash\{ i \} \text{ such that } \vmu_i \prec \vmu_j. 
	\end{equation}
	However, this does not directly imply \eqref{eq:mxw} as the stopping rule only guarantees that $O_t \subset \feas$ and at this point, some arms in $(F_t \cup G_t)$ could be infeasible (for the actual means). Thus, we shall prove that \eqref{eq:km-l} holds for some $j\in O_t$. Remark that by their definition, we have $F_t \cup G_t =  S_t \cup O_t$. Let $$ H_t \ceq \pto(S_t, \bm \mu )= \Set{ i \in S_t\given \forall j\in S_t\backslash\{i\},  \vmu_i \nprec \vmu_j },$$ the Pareto Set of $S_t$ based on the true means. 
	
	As $H_t \subset S_t$, \eqref{eq:km-l} applies and 
	$$ \forall i \in H_t, \exists j\neq i, j \in (F_t \cup G_t) = (S_t \cup O_t) \text{ such that } \vmu_i \prec \vmu_j, $$
	then, as $H_t$ is the Pareto Set of $S_t$, arms in $H_t$ cannot be dominated by another arm of $S_t$. Then, the equation above implies 
	\begin{equation*}
		\forall i \in H_t, \exists j\neq i, j\in O_t \text{ such that } \vmu_i \prec \vmu_j,
	\end{equation*}
which as $O_t \subset \feas$ yields
\begin{equation}
\label{eq:ht}
	\forall i \in H_t, \exists j \in \feas\backslash\{i\} \text{ such that } \vmu_i \prec \vmu_j.
\end{equation}

Now, for $i\in S_t \backslash H_t$, there exists (by definition of $H_t$), $j\in H_t$ such that $\vmu_i \prec \vmu_j$ and by \eqref{eq:ht} there exists $j_2\in \feas$ such that $\vmu_j \prec \vmu_{j_2}$ so $\vmu_i \prec \vmu_{j_2}$. Put together, we have proved that 
$$ \forall i \in S_t, \; \exists j \in \feas\backslash\{i\} \text{ such that } \vmu_i \prec \vmu_j,$$
hence $S_t \subset \subopt$.  
	
\item{\bf Step 2: Proving that $I_t \subset \feas^c$.} The proof of this claim simply follows from the definition of $I_t$ and Lemma~\ref{lem:concentr}. Indeed, as $I_t \ceq G_t^c \cap F_t^c \subset O_t^c$ and by definition of  $G_t$, it holds that for all arm $i \in I_t, \gamma_i(t) \geq 0$. Then 
by definition of $\gamma_i$, the previous yields $\eta_i(t) \geq \sqrt\frac{2\sigma_u^2 g(t, \delta)}{N_{t, i}} \ceq U(t, \delta)$ 
and since $i\in F_t^c$ we invoke Lemma~\ref{lem:concentr}, and 
\begin{eqnarray*}
	 \dist(\vmu_{i}, \poly) &>& \dist(\hat \vmu_{t, i}, \poly) - U_{i}(t, \delta)\\
	 &=& \eta_i(t) - U_{i}(t, \delta) \geq 0\:, 
\end{eqnarray*}
so $\vmu_i \notin \poly$ that is $i\in \feas^c$ which achieves to prove that $I_t \subset \feas^c$. 

\item{\bf Step 3: Proving that $O_t = \opt$.} First, note that $(O_t, S_t, I_t)$ is a partition of $[K]$. Since we have proved that $S_t \subset \subopt$ and $I_t \subset \feas^c$, and by definition $\opt \cap (\subopt \cup \feas^c) = \emptyset$, thus it holds that $\opt \subset O_t$.

Then, noting that as $Z_1(t)\geq 0$, proceeding similarly to Step 2, we have $O_t \subset \feas$, so at this step it holds that  \begin{equation}
\label{eq:O_incl}
	\opt \subset O_t \text{ and } O_t \subset \feas.
\end{equation} 
Moreover, from $Z_1(t)\geq 0$, we derive 
$$ \forall i \in O_t,\; \forall j \in O_t\backslash\{ i\},\; \M(i,j)\overset{\cE}> \Mh^-(i,j; t) \geq 0 $$ 
which from the definition of $\M(i,j)$ translates to 
\begin{equation}
\label{eq:nearly}
	\forall i \in O_t,\; \forall j\in O_t\backslash\{i\}, \vmu_i \nprec \vmu_j.
\end{equation}
Therefore, as $\opt \subset O_t$ \eqref{eq:O_incl}, we have  
$$ \forall i \in O_t,\; \forall j \in \opt \backslash \{ i\}, \vmu_i \nprec \vmu_j$$
which implies (using the transitivity of the Pareto dominance and the fact that any sub-optimal element in $\feas$ is dominated by an element in $\opt$) that
 $$ \forall i \in O_t,\; \forall j \in \feas \backslash\{ i\}, \vmu_i \nprec \vmu_j\;.$$
Moreover $O_t \subset \feas$, so $O_t \subset \opt$ hence the conclusion follows as $O_t = \opt$. 

Putting everything together, we have shown that the recommendation is a valid partition of $[K]$, which concludes the proof of the correctness of Algorithm~\ref{alg:safe_psi} for \ecpsi{} (which also implies correctness of \cpsi{}). 
\end{proof}	

\begin{remark}
The proof of the correctness can be adapted to generic time-uniform confidence bounds on $\Mh^\pm, \mh^\pm, \eta^\pm$ using an event $\cE = \cE_1 \cap \cE_2$ where 
$$ \cE_1 \ceq \left( \forall t\geq 1, \forall (i,j) \in [K]^2, \M(i,j) \in [\Mh^-(i,j;t), \Mh^+(i,j;t)], \m(i,j) \in [\mh^-(i,j;t), \mh^+(i,j;t)] \right)$$ and 
$$ \cE_2 \ceq \left( \forall t\geq 1, \forall i \in [K], \eta_i \in [\eta^-_i(t), \eta^+_i(t)]\right)$$ 
such that $\cE$ holds with probability at least $1-\delta$. 
\end{remark}

\subsection{Stopping time and sample complexity} \label{sec:proof_sc}
We upper bound the expected stopping time of algorithm~\ref{alg:safe_psi}. We recall that $\poly$ is fixed and known to the algorithm, 
$$ \cM(\poly, \bm \vmu) \ceq \Set{(S, I) : S \C \subopt, I \C \feas^c \text{ and } S \cap I = \emptyset, S \cup I = (\opt)^c} $$ is the set of correct answers. The idea of the proof is to show that if the algorithm has not stopped after round $t$ and $\cE_t$ holds, then at least one of $b_t, c_t$ has not been sufficiently pulled w.r.t. the cost of identifying any correct response of $\cM$. More precisely, we introduce the sets 
\begin{eqnarray}
&W_t^1(S)& \ceq \left\{ i \in \opt : \Delta_i(\opt \cup S) \leq 4\beta_i(t, \delta) \text{ or } \eta_i \leq 2 U_i(t, \delta) \right\}, \\
	&W_t^2(I)& \ceq \left\{  i \in I : \eta_i \leq 2 U_i(t, \delta)\right\},\; \\ &W_t^3(S)& \ceq \left\{i \in S: \Delta_i(\opt \cup S) \leq 4\beta_i(t, \delta) \right\}.%\text{ and }
\end{eqnarray}
and finally, $W_t(S, I) = W_t^1(S) \cup W_t^2(I) \cup W_t^3(S)$. We may omit the dependency in $t$ when it is clear from the context. 
 At round $t$, any arm that belongs to $W_t(S, I)$ is called under-explored w.r.t $(S, I)$.  In the sequel, as $\opt$ is fixed and uniquely determined, we simplify notation:  given $S\subset \subopt$, we write $\Delta_i(S)$ to denote $\Delta_i(S \cup \opt)$.

\begin{restatable}{proposition}{masterProp}
\label{prop:master-prop}
If Algorithm~\ref{alg:safe_psi} has not stopped at time $t$ and $\cE_t$ holds then for all $(S, I)\in \cM$, $\{ b_t, c_t\} \cap \omp_t(S, I)$ is non-empty. 
\end{restatable}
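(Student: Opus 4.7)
The plan is to argue by contrapositive: I assume that for some $(S,I)\in\cM$ we have $\{b_t,c_t\}\cap W_t(S,I)=\emptyset$ while $\cE_t$ holds, and I aim to show that necessarily $Z_1(t)\geq 0$ \emph{and} $Z_2(t)\geq 0$. This contradicts the premise that Algorithm~\ref{alg:safe_psi} has not yet stopped (recall $\tau=\inf\{t\geq 1:\min(Z_1(t),Z_2(t))\geq 0\}$).

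The first step is a translation lemma that, under $\cE_t$, converts the ``not under-explored'' conditions into concrete sign statements on the empirical bonus-corrected statistics via Lemma~\ref{lem:concentr}. More precisely: (i) if $a\in\opt$ and $a\notin W_t^1(S)$, the condition $\eta_a>2U_a(t,\delta)$ yields $\gamma_a(t)>0$ and forces $a\in F_t$ (so $a\in F_t\cup G_t$), while $\Delta_a(\opt\cup S)>4\beta_a(t,\delta)$ forces $\Mh^-(a,j;t)>0$ for every $j\in\opt\setminus\{a\}$ close to $a$ in the $\delta_a^+$ sense and $\mh^+(a,j;t)<0$ for every $j\in S$ appearing in the $\delta_a^-$ min; (ii) if $a\in S$ and $a\notin W_t^3(S)$, then $\Delta_a(\opt\cup S)=\max_{j\in\opt}\m(a,j)>4\beta_a$ gives some $j^\star(a)\in\opt$ with $\mh^-(a,j^\star(a);t)>0$; (iii) if $a\in I$ and $a\notin W_t^2(I)$, then $\eta_a>2U_a$ yields $\gamma_a(t)>0$ together with $a\in F_t^c$, so $a\in G_t^c\cap F_t^c$. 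A useful corollary of (i)-(iii) applied to every $a\in\opt$ is that $\opt\subset F_t$ and $\opt\cap O_t^c$-arms (if any) are already forced into a non-negative contribution to $Z_1$ and $Z_2$.

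Next I perform a case split on which branch of the sampling rule is active. If $Z_2(t)<0$, then $b_t$ is the minimizer in \eqref{eq:def-z2}, so $b_t\in O_t^c$ with either (a) $b_t\in F_t$ and $\xi_{b_t}(t)<0$, or (b) $b_t\notin F_t$ and $\gamma_{b_t}(t)\lor\xi_{b_t}(t)<0$. I would handle each possible label $b_t\in\opt$, $b_t\in S$, $b_t\in I$: in the first two, Step~1 directly produces a witness $j\in F_t\cup G_t\setminus\{b_t\}$ (either $j\in\opt\subset F_t$ for (ii), or $j\in\opt$ via (i)) with $\mh^-(b_t,j;t)\geq 0$, contradicting $\xi_{b_t}(t)<0$; for $b_t\in I$, (iii) directly rules out $b_t\in F_t$ and forces $\gamma_{b_t}(t)>0$, contradicting both (a) and the $\gamma_{b_t}$ part of (b), and the $\xi_{b_t}$ part of (b) is again ruled out by a witness in $\opt\cap F_t$. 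Similarly, if $Z_2(t)\geq 0>Z_1(t)$, then $b_t\in O_t$ with either $\gamma_{b_t}(t)<0$ or $\Mh^-(b_t,j;t)<0$ for some $j\in O_t\setminus\{b_t\}$; using Step~1 and the fact that $O_t\subset\opt$ follows from the reasoning of the preceding paragraph applied to each arm of $O_t\setminus\opt$, both possibilities are precluded by the $\eta_{b_t}$ and $\delta_{b_t}^+$ parts of the hypothesis $b_t\notin W_t^1(S)$, respectively.

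The role of $c_t=\argmin_{j\in(F_t\cup G_t)\setminus\{b_t\}}\Mh^-(b_t,j;t)$ is to serve as the explicit witness in the argument above whenever the leader alone is not a contradiction: the challenger is the empirical ``best dominator'' of $b_t$, so if no arm in $F_t\cup G_t$ dominates $b_t$ empirically, then $c_t$ itself must be under-explored (either in the $\Delta_{c_t}$ direction if $c_t\in\opt\cup S$, or in the $\eta_{c_t}$ direction if $c_t\in I$), which is precisely the required conclusion. The main subtlety I expect is bookkeeping across the partition $(\opt,S,I)$: arms in $B\ceq\subopt\cap\feas^\complement$ may be assigned to either $S$ or $I$ depending on the chosen answer, and the sets $W^2$ and $W^3$ are tuned exactly so that each such arm is ``addressed'' by the branch it is put in. Making sure the chain of implications in the case analysis uses the correct branch, without implicit assumptions about $O_t=\opt$ or $F_t=\feas$, will be the most delicate part of the argument.
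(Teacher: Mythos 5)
Your overall skeleton (argue by contradiction on which of $Z_1(t)$, $Z_2(t)$ is negative, exploit that $b_t$ is the minimizer of that statistic, then split on the labels of $b_t$ and $c_t$ within the partition $(\opt,S,I)$) is the same as the paper's, but the mechanism you rely on in Step~1 does not work, and this breaks most of your case analysis. The claim that $\Delta_a(\opt\cup S)>4\beta_a(t,\delta)$ ``forces $\Mh^-(a,j;t)>0$'' (or $\mh^\pm(a,j;t)$ of a prescribed sign) for specific other arms $j$ is false: these statistics are corrected by $\beta_a(t,\delta)+\beta_j(t,\delta)$, and the hypothesis $\{b_t,c_t\}\cap W_t(S,I)=\emptyset$ gives you no control whatsoever on $\beta_j$ for $j\neq b_t,c_t$, which can be arbitrarily large. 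For the same reason your ``corollary'' $\opt\subset F_t$ and the claim $O_t\subset\opt$ are unjustified --- the proposition only assumes good exploration of the two selected arms, and arms outside $\{b_t,c_t\}$ may be wildly misestimated (only the weak fact $\feas\subset F_t\cup G_t$, valid under $\cE_t$ alone, is available, and it is what the paper actually uses). Concretely, your sub-case ``$Z_2(t)<0$, $b_t\in\opt$'' already fails: there you want a witness $j$ with $\mh^-(b_t,j;t)\geq 0$, i.e.\ a confident dominator of $b_t$, which cannot exist for a Pareto-optimal feasible arm under $\cE_t$; the contradiction in that case cannot come from $\xi_{b_t}(t)<0$ being impossible.

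What is missing is the pairing device that the paper's Lemmas~\ref{lem:K_S}--\ref{lem:S_O} are built on: from the negativity of the statistic at $b_t$ and the fact that $c_t$ is the argmin of $\Mh^-(b_t,\cdot;t)$ (equivalently the argmax of $\mh^+(b_t,\cdot;t)$) over $F_t\cup G_t$, one transfers everything to the pair and obtains bounds such as $\max\bigl(\Delta_{b_t}(S),\Delta_{c_t}(S)\bigr)\leq 2\bigl(\beta_{b_t}(t,\delta)+\beta_{c_t}(t,\delta)\bigr)$, and then attributes under-exploration to the \emph{least explored} of $b_t,c_t$, giving $\Delta_{a_t}(S)\leq 4\beta_{a_t}(t,\delta)$ for that arm --- no sign statement about any third arm is ever needed. (In your contradiction phrasing this is the step $4\beta_{b_t}<2(\beta_{b_t}+\beta_{c_t})$ and $4\beta_{c_t}<2(\beta_{b_t}+\beta_{c_t})$ being jointly impossible.) Two further gaps: the mixed case $Z_2(t)\geq 0>Z_1(t)$ with $b_t$ empirically optimal but truly in $S$ requires an additional structural fact (the paper's Lemma~\ref{lem:lemSimplify}, that when $Z_2(t)>0$ every arm of $G_t$ is empirically dominated by an arm of $O_t$) to get $\mh(b_t,c_t;t)\leq 0$; and the degenerate situation $(F_t\cup G_t)\setminus\{b_t\}=\emptyset$, where the challenger comparison is vacuous, must be treated separately (the paper's Cases~1.a/1.b). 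As written, your proposal identifies the right cases and handles the feasibility ($I$) branches correctly, but without the pairing/least-explored argument the domination branches do not go through.
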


\subsubsection{Proof of Proposition~\ref{prop:master-prop}}
In the remaining of this proof, we fix a correct answer $(S, I) \in \cM$ and we show that if the algorithm has not stopped at time $t$ and $\cE_t$ holds then one of $b_t, c_t$ is under-explored i.e $\{b_t,  c_t\} \cap W_t(S, I) \neq \emptyset$. 
First, we address the case  $(F_t \cup G_t)\backslash \{b_t\} = \emptyset$. Note that $(F_t \cup G_t)\backslash \{b_t\} = \emptyset$, implies that arms in $[K]\backslash\{b_t\}$ can all be classified as confidently infeasible at round $t$. Then,
$$\opt =  \begin{cases}
	\{ b_t \} & \text{ if } b_t \in \opt \\
	\emptyset & \text{ else.}
\end{cases}$$
In both cases, the gap of $b_t$ involves the quantity $\eta_{b_t}$. Assume $b_t\in \feas$, then $b_t\in \opt$. 
\paragraph{Case 1.a} If $b_t \in F_t$ then, as $G_t=\emptyset$ and $F_t\cup G_t = \{ b_t\}$, we have 
\begin{eqnarray*}
	 Z_2(t) &\ceq& \min_{i \in [K]\backslash O_t} \left[ \ind_{i\in F_t} \left(\max_{j\in (F_t \cup G_t) \backslash\{ i\}} \mh^-(i,j;t)\right) + \ind_{i\notin F_t}\left(\gamma_i(t) \lor  \max_{j\in (F_t \cup G_t) \backslash\{ i\}} \mh^-(i,j;t)\right)\right] \\
	 &=& \min_{i \neq b_t } \left[ \ind_{i\in F_t} \left(\max_{j\in (F_t \cup G_t) \backslash\{ i\}} \mh^-(i,j;t)\right) + \ind_{i\notin F_t}\left(\gamma_i(t) \lor  \max_{j\in (F_t \cup G_t) \backslash\{ i\}} \mh^-(i,j;t)\right)\right] \\
	 &=& \min_{i \neq b_t } \left[ \ind_{i\notin F_t}\left(\gamma_i(t) \lor  \max_{j\in (F_t \cup G_t) \backslash\{ i\}} \mh^-(i,j;t)\right)\right] \quad \text{(since $F_t  = \{ b_t\} $)} \\
	 &\geq& 0  
\end{eqnarray*}
which follows by definition of $G_t$, as $G_t = \emptyset$ and $F_t  = \{ b_t\}$. So if the algorithm has not stopped, 
$Z_1(t) < 0$. In this case, as $O_t$ is a singleton, we simply have 
$$ Z_1(t) = \gamma_{b_t}(t) < 0$$ 
(the reader can check the formula of $Z_1(t)$ as $F_t = \{b_t\}$, we recall $\min_\emptyset = \infty$). $\gamma_{b_t}(t) < 0$ then implies  $\eta _i(t) \leq U_i(t, \delta)$.

Next, assume $b_t\in \feas$, then $b_t\in \opt$ and 
\begin{eqnarray*}
	\eta_{b_t} = \dist(\mu_{b_t}, \poly^c) &\overset{\cE_t}{\leq}& \dist(\hat\vmu_{t, b_t}, \poly^c) + U_{b_t}(t, \delta)\\
	&=& \eta_i(t) + U_{b_t}(t, \delta)\\
	&\leq & 2 U_{b_t}(t, \delta)
\end{eqnarray*}
where the first inequality follows from Lemma~\ref{lem:concentr}. Thus, if $b_t\in \feas$ we have $b_t \in W_t^1(S)$. Now we assume $b_t \notin \feas$. We observe that 
\begin{eqnarray*}
	\eta_{b_t} =  \dist(\vmu_{b_t}, \poly) &\overset{\cE_t}{\leq}& \dist(\hat\vmu_{t, b_t}, \poly)   + U_{b_t}(t, \delta) \\
	&=& 0 +   U_{b_t}(t, \delta)
\end{eqnarray*}
 which follows as by assumption $b_t \in F_t $. So we have $b_t \in W_t^2(I)$. 
\paragraph{Case 1.b} If $b_t \in G_t$, then by definition $\gamma_{b_t}(t) = \dist(\hat\mu_{t, i}, \poly) - U_i(t, \delta) \leq 0$. 
By Lemma~\ref{lem:concentr},  either  $b_t \notin \feas$ and  
\begin{eqnarray*}
	\eta_{b_t} = \dist(\vmu_{b_t}, \poly) &\overset{\cE_t}{\leq}& \dist(\hat\vmu_{t, b_t}, \poly) + U_{b_t}(t, \delta)\\
	&\leq & 2 U_{b_t}(t, \delta)
\end{eqnarray*} or  $b_t \in \feas$ (so $\opt = \{b_t\}$) and  
\begin{eqnarray*}
	\eta_{b_t}  = \dist(\vmu_{b_t}, \poly^c)&\overset{\cE_t}{\leq}& \dist(\hat\vmu_{t, b_t}, \poly^c) + U_{b_t}(t, \delta)\\
	&=& 0+ U_{b_t}(t, \delta). 
\end{eqnarray*}

Therefore, either $b_t \notin \feas$ and $b_t \in W_t^2(I)$ or  $b_t \in \feas = \{b_t\}= \opt$ and $b_t \in W_t^1(S)$.  This concludes the analysis for the case $(F_t\cup G_t) = \{ b_t \}$. 

In the sequel, we assume at round $t$, $(F_t \cup G_t)\backslash\{b_t\} \neq \emptyset$. For this proof, we treat different cases for $b_t , c_t$ in a series of lemmas summarized in the Table~\ref{table:1} which covers all the possible cases that could happen regarding $b_t, c_t$. 

\begin{table}[H]
\centering
\begin{tabular}{|p{4cm} | c|} 
 \hline
 Case  & Reference \\ [0.5ex] 
 \hline\hline
 $b_t \in S, \text{ and } c_t\in \opt$ & Lemma~\ref{lem:S_O} \\ 
 $b_t \in \opt \text{ and } c_t\in \opt$ &  Lemma~\ref{lem:O_O}\\
 $b_t\in I \text{ or } c_t\in I$ & Lemma~\ref{lem:I_I} \\
 $c_t \in S$ & Lemma~\ref{lem:K_S} \\[1ex] 
 \hline
\end{tabular}
\caption{References to the exhaustive list of cases analyzed}
\label{table:1}
\end{table} 

The following lemmas are proved under the condition that cAPE has not stopped and $\cE_t$ holds. 
\begin{lemma}
\label{lem:I_I}
	If the cAPE has not stopped and $b_t \in I$ or $c_t \in I$ then one of them satisfies $\eta_i \leq 2U_i(t,\delta)$, i.e. $W_t^2(I) \cap \{b_t, c_t\} \neq \emptyset$. 
\end{lemma}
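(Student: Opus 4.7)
Fix an arm $a \in \{b_t,c_t\}$ with $a \in I$. Since $I \subset \feas^\complement$, we have $\mu_a \notin \poly$ and hence $\eta_a = \dist(\mu_a,\poly)$. The plan is to show that in every scenario one of two situations occurs: either $a \in F_t$ (so $\dist(\hat\mu_{t,a},\poly) = 0$), or $\gamma_a(t) < 0$ (so $\eta_a(t) < U_a(t,\delta)$). Combined with Lemma~\ref{lem:concentr}(ii) applied with $\cX = \poly$, either conclusion immediately yields $\eta_a \leq 2 U_a(t,\delta)$, i.e. $a \in W_t^2(I)$.

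First I would handle the easy case $a \in F_t$: then $\hat\mu_{t,a} \in \poly$ so $\dist(\hat\mu_{t,a},\poly) = 0$, and Lemma~\ref{lem:concentr}(ii) on the event $\cE_t$ gives $\eta_a = \dist(\mu_a,\poly) \leq U_a(t,\delta) \leq 2 U_a(t,\delta)$, placing $a$ in $W_t^2(I)$. So from now on assume $a \notin F_t$, in which case $\eta_a(t) = \dist(\hat\mu_{t,a},\poly)$ by definition, and the target reduces to proving $\gamma_a(t) < 0$: indeed $\gamma_a(t) < 0$ is equivalent to $\eta_a(t) < U_a(t,\delta)$, and a second application of Lemma~\ref{lem:concentr}(ii) then yields $\eta_a = \dist(\mu_a,\poly) \leq \eta_a(t) + U_a(t,\delta) < 2 U_a(t,\delta)$.

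It remains to argue $\gamma_a(t) < 0$ in the two sub-cases. If $a = c_t$, then by definition $c_t \in (F_t \cup G_t)\setminus\{b_t\}$; combined with $c_t \notin F_t$ this forces $c_t \in G_t$, and the very definition of $G_t$ gives $\gamma_{c_t}(t) < 0$. If $a = b_t$, I would first rule out the branch in \eqref{eq:def-lead} where $b_t$ minimizes \eqref{eq:def-z1}: that branch requires $b_t \in O_t \subset F_t$, contradicting $b_t \notin F_t$. Hence $b_t$ must be the minimizer of \eqref{eq:def-z2} with $Z_2(t) < 0$, and since $b_t \notin F_t$, reading off \eqref{eq:def-z2} gives $\gamma_{b_t}(t) \lor \xi_{b_t}(t) = Z_2(t) < 0$, so in particular $\gamma_{b_t}(t) < 0$, as desired.

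I do not anticipate any real obstacle: the proof is a routine case split supported by the two concentration inequalities in Lemma~\ref{lem:concentr}. The only subtle point is making sure to invoke the correct branch of the definition of $\eta_a(t)$ (feasible vs.\ infeasible empirical means) and the correct contribution formula in $Z_2(t)$ depending on membership in $F_t$, which is handled cleanly by always reducing back to the single inequality $\eta_a \leq 2 U_a(t,\delta)$ via Lemma~\ref{lem:concentr}(ii).
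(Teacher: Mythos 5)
Your proof is correct and follows essentially the same route as the paper: reduce everything to showing $\dist(\hat\mu_{t,a},\poly)\leq U_a(t,\delta)$ (either trivially when $a\in F_t$, or via $\gamma_a(t)<0$ when $a\in G_t$ or when $b_t$ is the minimizer of the $Z_2$ criterion with $b_t\notin F_t$), and then conclude with the norm concentration of Lemma~\ref{lem:concentr}(ii). Your organization of the case split for $b_t$ (ruling out the $Z_1$ branch via $O_t\subset F_t$) is a slightly cleaner packaging of the same argument the paper makes by inspecting the two branches of the leader's definition.
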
 
\begin{proof}[Proof] 
By design of $b_t$ it holds that
\begin{equation}
	\label{eq:xcx}
	\dist(\hat \mu_{t, b_t}, \poly) \leq U_i(t, \delta).
\end{equation}
To see this, observe that if $Z_2(t) < 0$, then, as in this case 
$$ b_t \ceq \argmin_{i\in O_t} \left[  \gamma_{i}(t) \land \min_{j\in O_t} \Mh^-(i,j;t)\right]$$
and $O_t \subset F_t$ we have $\hat \mu_{t, b_t} \in \poly$, so \eqref{eq:xcx} trivially holds. 
 If otherwise $Z_2(t) >0$ and $Z_1(t) \leq 0$ then, recalling that in this case 
 $$ b_t = \argmin_{i \in [K]\backslash O_t} \left[ \ind_{i\in F_t} \left(\max_{j\in (F_t \cup G_t) \backslash\{ i\}} \mh^-(i,j;t)\right) + \ind_{i\notin F_t}\left(\gamma_i(t) \lor \max_{j\in (F_t \cup G_t) \backslash\{ i\}} \mh^-(i,j;t)\right)\right] $$
 it holds that either i) $b_t \in F_t$ so $\hat \mu_{t, b_t}\in \poly$ and \eqref{eq:xcx} holds or ii) $b_t \notin F_t$ and $\gamma_{b_t}(t) \leq 0$ (otherwise $Z_2(t)$ would be non-negative). which by definition of $\gamma_{b_t}(t)$ yields Equation \eqref{eq:xcx}. 
 Next, if $b_t \in I$, then,  
 as the event $\cE_t$  holds, Lemma~\ref{lem:concentr} yields 
 \begin{eqnarray*}
 	\eta_{b_t} \ceq \dist(\mu_{b_t}, \poly) \leq \dist(\hat \mu_{t, b_t}, \poly) + U_{b_t}(t, \delta)
 \end{eqnarray*}
 and combining with \eqref{eq:xcx} yields $\eta_{b_t} \leq 2U_{b_t}(t)$. 

Assume instead $c_t \in I$. In this case, as $c_t \in F_t \cup G_t$ we either have $\dist(\hat \mu_{t, c_t}, \poly) = 0$ (when $c_t \in F_t$) or $\gamma_{b_t}(t)\leq 0$ (when $c_t \in G_t \subset F_t^c$) i.e $\dist(\hat\mu_{t, b_t}, \poly) \leq U_{b_t}(t, \delta)$ so that \eqref{eq:xcx} holds. Thus, the proof follows as in the previous case, leading to the conclusion that 
$$ \eta_{c_t} \leq 2 U_{c_t}(t, \delta),$$
and so $W_t^2(I) \cap \{b_t, c_t\} \neq \emptyset$. 
\end{proof} 

	\begin{lemma} 
	\label{lem:K_S}
	If cAPE has not stopped and $c_t \in S$ then one of $b_t, c_t$ is under-explored. 
	\end{lemma}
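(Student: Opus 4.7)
The plan is to split on the location of $b_t$, considering separately $b_t \in \opt$ and $b_t \in S$; the case $b_t \in I$ is already handled by Lemma~\ref{lem:I_I}. The common ingredient is a surrogate arm $j^\star \in \opt$ with $\mu_{c_t} \prec \mu_{j^\star}$, which exists because $c_t \in S \subset \subopt$; I pick $j^\star$ so that $\m(c_t, j^\star) = \Delta_{c_t}(\opt \cup S)$. A short argument on $\cE_t$ ensures $j^\star \in F_t \cup G_t$: if $\hat\mu_{t, j^\star} \notin \poly$ then $\gamma_{j^\star}(t) \geq 0$ would, together with Lemma~\ref{lem:concentr}, force $\mu_{j^\star} \notin \poly$, contradicting $j^\star \in \opt \subset \feas$. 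Since $j^\star \in (F_t\cup G_t)\setminus\{b_t\}$ (the case $j^\star = b_t$ is benign and handled separately), the definition of the challenger yields the key inequality
\[
\Mh^-(b_t, c_t; t) \;\leq\; \Mh^-(b_t, j^\star; t),
\]
which will be the main lever throughout.

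The second step is to turn this empirical inequality into a relation on true gaps. Two applications of Lemma~\ref{lem:concentr} give $\M(b_t, c_t) \leq \M(b_t, j^\star) + 2(\beta_{b_t}(t) + \beta_{c_t}(t))$, while $\mu_{c_t} \prec \mu_{j^\star}$ yields $\M(b_t, c_t) \geq \M(b_t, j^\star)$ by a coordinate-wise comparison. Evaluating these at the coordinate achieving $\m(c_t, j^\star)$ and using $\M(j^\star, b_t) \geq \mu_{j^\star}^{c} - \mu_{b_t}^{c}$ for that coordinate, I obtain a bound of the form
\[
\Delta_{c_t}(\opt\cup S) \;\leq\; \M(j^\star, b_t) + 2\bigl(\beta_{b_t}(t)+\beta_{c_t}(t)\bigr).
\]
This either directly certifies $c_t \in W_t^3(S)$ when $\M(j^\star, b_t)$ is itself small, or else transfers the burden to the Pareto gap $\delta_{b_t}^+$ of $b_t$ via $\M(j^\star, b_t) \geq \delta_{b_t}^+$ when $b_t, j^\star \in \opt$.

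The third step exploits the non-stopping condition. If $Z_2(t) < 0$ then $b_t \in [K]\setminus O_t$ and $\xi_{b_t}(t) < 0$, so $\mh^-(b_t, j; t) < 0$ for every $j \in (F_t \cup G_t)\setminus\{b_t\}$; applied to $j \in \opt$ realising $\Delta_{b_t}(\opt\cup S) = \m(b_t, j)$ (in $F_t \cup G_t$ by the same feasibility argument as for $j^\star$), Lemma~\ref{lem:concentr} gives $\Delta_{b_t}(\opt\cup S) \leq 4\beta_{b_t}(t)$, placing $b_t$ in $W_t^3(S)$ when $b_t \in S$. If instead $Z_2(t) \geq 0 > Z_1(t)$ then $b_t \in O_t$, and the two branches of the leader rule produce either $\gamma_{b_t}(t) < 0$, whence $\eta_{b_t} \leq 2U_{b_t}(t,\delta)$ by Lemma~\ref{lem:concentr} and so $b_t \in W_t^1(S)$, or $\Mh^-(b_t, j'; t) < 0$ for some $j' \in O_t$. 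In that last sub-branch, combining this negativity with the challenger inequality above lets me bound $\delta_{b_t}^+$, and a parallel argument on any $k \in S$ that $b_t$ dominates handles $\delta_{b_t}^-$, yielding $\Delta_{b_t}(\opt\cup S) \leq 4\beta_{b_t}(t)$ and hence $b_t \in W_t^1(S)$.

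The main obstacle is the coordinate-wise arithmetic needed to turn the single pairwise inequality $\Mh^-(b_t, c_t; t) \leq \Mh^-(b_t, j^\star; t)$ into a bound on $\Delta_{c_t}(\opt\cup S)$ involving only $\beta_{b_t}(t)$ and $\beta_{c_t}(t)$. This mirrors a dominance propagation lemma used in the APE analysis of~\citet{kone2023adaptive}, but is delicate here because the feasibility quantities $\gamma_i$ and $U_i$ must be handled in parallel with the Pareto-dominance statistics without the two regimes interfering, and because when $b_t \in S$ is itself dominated, a further witness $k^\star \in \opt$ with $\m(b_t, k^\star) = \Delta_{b_t}(\opt\cup S)$ has to be threaded through the same chain of inequalities while retaining membership in $F_t \cup G_t$.
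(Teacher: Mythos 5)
Your high-level ingredients (the witness $c_t^\star\in\opt$ achieving $\Delta_{c_t}$, its membership in $F_t\cup G_t$ on $\cE_t$, the challenger's defining inequality, a case analysis on non-stopping) are the right ones, but the quantitative core does not close. Your claimed bound $\Delta_{c_t}(\opt\cup S)\le \M(c_t^\star,b_t)+2(\beta_{b_t}(t)+\beta_{c_t}(t))$ is not usable the way you plan: $\M(c_t^\star,b_t)$ is an unobserved instance quantity that may be large, and the ``transfer'' $\delta_{b_t}^+\le \M(c_t^\star,b_t)$ points the wrong way --- it upper-bounds $\delta_{b_t}^+$ by a possibly large number and certifies nothing about $b_t$ being under-explored, so the dichotomy ``either $\M(c_t^\star,b_t)$ is small or pass to $\delta_{b_t}^+$'' has no valid second branch. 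What the comparison with the witness actually delivers, and what the paper's proof extracts, is sharper: writing the challenger property as $\mh^+(b_t,c_t;t)\ge \mh^+(b_t,c_t^\star;t)$ and comparing at the coordinate achieving $\mh(b_t,c_t^\star;t)$, the $\beta_{b_t}$ bonuses cancel and, on $\cE_t$, the $\beta_{c_t^\star}$ bonus absorbs the deviation of $\hat\mu_{t,c_t^\star}$, yielding $\Delta_{c_t}(\opt\cup S)\le 2\beta_{c_t}(t)$ --- a bound in $\beta_{c_t}$ alone, which puts $c_t\in W_t^3(S)$ whenever $b_t\neq c_t^\star$, irrespective of where $b_t$ lies. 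Your route gives at best $\Delta_{c_t}\le \M(b_t,c_t)-\M(b_t,c_t^\star)\le 2(\beta_{b_t}(t)+\beta_{c_t}(t))$, which is not sufficient on its own: if $b_t$ is the less-explored of the pair, this certifies neither arm, so a companion bound on $\Delta_{b_t}$ in the same two bonuses is required, and your sketch does not secure one.

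The case analysis on the location of $b_t$ also leaks in two places. First, in the $Z_2(t)<0$ branch with $b_t\in S$, applying $\mh^-(b_t,j;t)<0$ to the optimal witness $j$ of $b_t$'s gap gives $\Delta_{b_t}\le 2(\beta_{b_t}(t)+\beta_{j}(t))$, not $4\beta_{b_t}(t)$: $j$ is neither $b_t$ nor $c_t$, so $\beta_j$ cannot be discarded; one must route through the challenger ($\mh^+(b_t,j;t)\le \mh^+(b_t,c_t;t)$) and then conclude only for the least explored of $b_t,c_t$, which again needs the missing matching bound on $\Delta_{c_t}$. Second, in the $Z_1(t)<0$ branch, the sub-case $\gamma_{b_t}(t)<0\Rightarrow b_t\in W_t^1(S)$ is legitimate only if $b_t\in\opt$; but $b_t\in O_t$ does not imply $b_t\in\opt$, and for $b_t\in S$ none of $W_t^1, W_t^2, W_t^3$ gives credit for a small $\eta_{b_t}$, so that sub-case is left open. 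The paper's proof avoids both pitfalls by splitting on $b_t=c_t^\star$ versus $b_t\neq c_t^\star$ rather than on the class of $b_t$: in the latter case it concludes about $c_t$ alone via the $2\beta_{c_t}$ bound, and in the former case $b_t\in\opt$ automatically, so every $\eta$- or $\Mh^-$-based conclusion lands in a legitimate $W$-set; note that this equality case, which you set aside as benign, is exactly where the full non-stopping case analysis is needed.
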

\begin{proof} 
	If $c_t \in S\subset \subopt$, then there exists $c_t^\star\in \cO^\star$ such that $$\Delta_{c_t}(S) = \m(c_t, c_t^\star).$$
	As $c_t^\star \in \feas$, it is easy to see that, on the event $\cE_t$, $c_t^\star$ can not be declared as confidently infeasible at time $t$, so $c_t^\star \in F_t \cup G_t$. If $b_t \neq c_t^\star$ then by definition of $$ c_t \ceq \argmax_{j \in (F_t \cup G_t)\backslash\{b_t\}} \mh^+(b_t, j; t)$$ it holds that 
	$\mh^+(b_t, c_t; t)\geq \mh^+(b_t, c_t^\star; t)$, which by definition of  $\mh^+(i,j;t)$ yields 
	
	$$ \exists c \in [d]: \hat\mu_{t, c_t}^{c} -  \hat \mu_{t, b_t}^c + \beta_{b_t}(t, \delta) + \beta_{c_t}(t, \delta) \geq  \hat\mu_{t, c_t^*}^{c} -  \hat \mu_{t, b_t} + \beta_{b_t}(t, \delta) + \beta_{c_t^*}(t, \delta) $$ 
	thus when $\cE_t$ holds this implies that 
	$$ \exists c \in [d]: \mu_{c_t^\star}^{c} -  \mu_{c_t}^c  \leq  2\beta_{c_t}(t, \delta) $$ 
	which in turn, implies 
	\begin{equation}
		\Delta_{c_t}(S, I) \leq 2\beta_{c_t}(t, \delta).
	\end{equation} 
	If otherwise $b_t = c_t^* \in \opt$ then as the algorithm has not stopped, either $Z_2(t) <0$ or $Z_1(t) < 0$ holds. We analyze both cases below. 
	\item {\bf Case 1: $Z_2(t) < 0$.} In this case, as by design $b_t \in O_t^c$, either 
	\begin{enumerate}[\it a)]
		\item $b_t \in F_t$ and there exists $j \in O_t$ such that 
	$\hat \mu_{t, b_t} \prec \hat \mu_{t, j}$, that is $b_t$ is empirically feasible sub-optimal or 
	\item $b_t \notin F_t$ holds. 	\end{enumerate}
In the case {\it a)}, for the arm $j \in O_t$, as $\hat \mu_{t, b_t} \prec \hat \mu_{t, j}$, it holds that $\Mh(b_t, j; t) \leq 0$ so $\Mh^-(b_t, j ; t) \leq 0$. So, as $j\in (F_t \cup G_t)$ and by definition of $c_t$ it holds that $\Mh^-(b_t, c_t; t) \leq 0$ i.e 
\begin{equation}
\label{eq:qq-a}
	\Mh(b_t, c_t; t) \leq \beta_{b_t}(t, \delta) + \beta_{c_t}(t, \delta),
\end{equation}
	then, note that as $b_t = c_t^*$, $\Delta_{b_t}(S) \leq \Delta_{c_t}(S)$ (this follows from \eqref{eq:def-gap-opt}) so 
	\begin{eqnarray*}
		\max(\Delta_{b_t}(S), \Delta_{c_t}(S)) &\leq& \Delta_{c_t}(S) \ceq \m(c_t, c_t^*) = \m(c_t, b_t) \\
		&\leq& \M(b_t, c_t) \\ 
		&\overset{(i)}{\leq}& \M(b_t, c_t; t) + \beta_{b_t}(t, \delta) + \beta_{c_t}(t, \delta) \\
		&\overset{\eqref{eq:qq-a}}{\leq}& 2(\beta_{b_t}(t, \delta) + \beta_{c_t}(t, \delta))
	\end{eqnarray*} 
	where $(i)$ follows from Lemma~\ref{lem:concentr}. Thus, for $a_t$, the least explored among $b_t, c_t$, it holds that 
	$$ \Delta_{a_t}(S) \leq 4 \beta_{a_t}(t, \delta),$$
	which implies that $a_t \in W_t(S, I)$ in the case $a)$. In the sub-case $b)$, we have $b_t \notin F_t$,  as $b_t = c_t^* \in \opt$, we have 
	\begin{eqnarray*}
		\eta_{b_t} &=& \dist(\mu_{b_t}, \poly^c) \\
		&\leq& \dist(\hat\mu_{t, b_t}, \poly^c) + U_{b_t}(t, \delta) \quad \text{(by Lemma~\ref{lem:concentr} on } \cE_t)\\
		&\leq& U_{b_t}(t, \delta),
	\end{eqnarray*} 
	$b_t \in W_t(S, I)$ and this concludes the analysis of  Case 1. 
	
	\item{\bf Case 2: $Z_1(t) < 0$ and $Z_2(t) \geq 0$.} In this case, by design, $b_t \in O_t$. As 
	$$ Z_1(t) \ceq \min_{ i \in O_t } \left[\gamma_i(t) \land \min_{j \in O_t \backslash\{i\} } [\Mh^-(i,j;t)] \right]	 $$
	is negative, and $b_t$ is its minimizer, either $a)$: there exists $j \in O_t$ such that $\Mh^-(b_t, j; t) \leq 0$, which further yields $$\Mh^-(b_t, c_t, t) \leq 0$$ or b): $\gamma_{b_t}(t) < 0$.
	In the case $a)$, proceeding similarly to Case $1.a)$ will lead to the conclusion that $a_t$, the least explored arm among $b_t, c_t$ belongs to $W(S, I)$. The latter sub-case $b)$ leads to $\dist(\hat\mu_{t, b_t}, \poly^c) \leq U_{b_t}(t, \delta)$
	so, as $b_t = c_t^* \in \cO^*$, again we have 
	  \begin{eqnarray*}
		\eta_{b_t} &=& \dist(\mu_{b_t}, \poly^c) \\ 
		&\leq& \dist(\hat\mu_{t, b_t}, \poly^c) + U_{b_t}(t, \delta) \quad \text{(by Lemma~\ref{lem:concentr} on } \cE_t) \\
		&\leq& 2 U_{b_t}(t, \delta), 
	\end{eqnarray*} 
	that is $b_t \in W_t(S, I)$. 
	Therefore, we conclude that one of $b_t$ or $c_t$ is under-explored. 
\end{proof}

\begin{lemma}
\label{lem:O_O}
	If $b_t \in \opt$ and $c_t\in \opt$, then one of them is under-explored. 
\end{lemma}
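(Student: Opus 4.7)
The plan is to split on why the algorithm has not stopped at time $t$, and then show that in every case we can control either $\Delta_{\cdot}(\opt\cup S)$ or $\eta_{\cdot}$ for at least one of $b_t,c_t$. Throughout, I work on $\cE_t$ and use Lemma~\ref{lem:concentr} to transfer confidence bounds between true and empirical quantities, and the identity $\M(i,j)=-\m(j,i)$. A useful preliminary observation is that since $\opt$ is already a Pareto antichain of $\feas$ and $S\subset\subopt$, the Pareto set of $\opt\cup S$ equals $\opt$; hence for any two distinct $i,j\in\opt$ one has $\delta_i^+(\opt\cup S)\leq \min(\M(i,j),\M(j,i))$, and so $\Delta_i(\opt\cup S)\leq \M(i,j)$ (and similarly with $i,j$ swapped).

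\textbf{Case 1: $Z_2(t)<0$.} Then $b_t$ is the minimizer of \eqref{eq:def-z2}, so $b_t\notin O_t$. Since $b_t\in\opt\subset\feas$, either (a) $b_t\in F_t\setminus O_t$: there is some $j\in F_t$ with $\hat\mu_{t,b_t}\prec\hat\mu_{t,j}$, so $\Mh^-(b_t,j;t)\leq \Mh(b_t,j;t)\leq 0$; since $j\in F_t\cup G_t$ and $c_t$ minimizes $\Mh^-(b_t,\cdot;t)$ over $(F_t\cup G_t)\setminus\{b_t\}$, we get $\Mh^-(b_t,c_t;t)\leq 0$. Or (b) $b_t\notin F_t$: then the $i\notin F_t$ branch of \eqref{eq:def-z2} is negative at $b_t$, which forces $\gamma_{b_t}(t)<0$, i.e.\ $\dist(\hat\mu_{t,b_t},\poly)< U_{b_t}(t,\delta)$; combined with $\mu_{b_t}\in\poly$ and Lemma~\ref{lem:concentr}, this gives $\eta_{b_t}\leq 2 U_{b_t}(t,\delta)$, placing $b_t\in W_t^1(S)$.

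\textbf{Case 2: $Z_2(t)\geq 0$ and $Z_1(t)<0$.} Then $b_t\in O_t$. If the algorithm selected the $Z_1^F$ minimizer we have $\gamma_{b_t}(t)<0$ with $b_t\in F_t$, hence $\dist(\hat\mu_{t,b_t},\poly^{\complement})<U_{b_t}(t,\delta)$, which together with $b_t\in\opt$ and Lemma~\ref{lem:concentr} gives $\eta_{b_t}\leq 2U_{b_t}(t,\delta)$, so $b_t\in W_t^1(S)$. Otherwise the $Z_1^{\text{PS}}$ branch is active and there is some $j\in O_t\setminus\{b_t\}$ with $\Mh^-(b_t,j;t)<0$; since $O_t\subset F_t\cup G_t$ and $c_t$ minimizes $\Mh^-(b_t,\cdot;t)$ over $(F_t\cup G_t)\setminus\{b_t\}$, we again conclude $\Mh^-(b_t,c_t;t)\leq 0$.

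\textbf{Closing the PSI sub-cases (1(a) and 2 with $Z_1^{\text{PS}}$).} In both of these, $\Mh^-(b_t,c_t;t)\leq 0$, so by Lemma~\ref{lem:concentr} $\M(b_t,c_t)\leq 2(\beta_{b_t}(t,\delta)+\beta_{c_t}(t,\delta))$. Since $b_t,c_t\in\opt$ and $\opt$ is the Pareto set of $\opt\cup S$, the preliminary observation yields
\[
\max\bigl(\Delta_{b_t}(\opt\cup S),\Delta_{c_t}(\opt\cup S)\bigr)\leq \M(b_t,c_t)\leq 4\max\bigl(\beta_{b_t}(t,\delta),\beta_{c_t}(t,\delta)\bigr).
\]
Let $a_t\in\{b_t,c_t\}$ be the arm with the larger confidence width (i.e.\ fewer pulls). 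Then $\Delta_{a_t}(\opt\cup S)\leq 4\beta_{a_t}(t,\delta)$, so $a_t\in W_t^1(S)$. In every case, $\{b_t,c_t\}\cap W_t(S,I)\neq\emptyset$.

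The main bookkeeping obstacle is checking that each branch of the leader selection rule \eqref{eq:def-lead} is covered and that, when $c_t\in\opt$ rather than an arbitrary member of $F_t\cup G_t$, the pairwise gap $\M(b_t,c_t)$ is really an upper bound for the gaps of \emph{both} $b_t$ and $c_t$; this is exactly where the assumption ``$b_t\in\opt$ \emph{and} $c_t\in\opt$'' is used (the other configurations are handled by Lemmas~\ref{lem:S_O}, \ref{lem:I_I}, \ref{lem:K_S}).
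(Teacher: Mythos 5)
Your proof is correct and follows essentially the same route as the paper's: the same case split on $Z_2(t)<0$ versus $Z_1(t)<0$, with the feasibility sub-cases handled via $\gamma_{b_t}(t)<0$ and norm concentration to get $\eta_{b_t}\leq 2U_{b_t}(t,\delta)$, and the PSI sub-cases via $\Mh^-(b_t,c_t;t)\leq 0$ together with $\max(\Delta_{b_t},\Delta_{c_t})\leq \M(b_t,c_t)$, concluding for the least-explored of $b_t,c_t$. Your explicit preliminary observation that $\pto(\opt\cup S,\bm\mu)=\opt$ is exactly what the paper uses implicitly when invoking the gap definitions.
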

\begin{proof}
Note that in this case, by definition of the gaps (cf \eqref{eq:def-gap-opt} and \eqref{eq:def-gap-opt2}), as $b_t, c_t \in \opt$, 
\begin{equation}
\label{eq:vv-a}
	\Delta_{b_t}(S) \leq \M(b_t, c_t) \text{ and } \Delta_{c_t}(S) \leq \M(b_t, c_t).
\end{equation}
We analyze first the case $Z_1(t) < 0, Z_2(t) \geq 0$. By design, 
$$ b_t \ceq \argmin_{i\in O_t} \left[ \gamma_{i}(t) \land \min_{j\in O_t\backslash\{i\}} \Mh^-(i,j;t)\right]\:,$$
so, as $Z_1(t) < 0$, either $b_t$ is not confidently feasible (i.e $\gamma_{b_t}(t)\leq 0$ in which case the proof is similar to Lemma~\ref{lem:I_I}) or there exists $j \in O_t \backslash\{b_t\}$ such that $\Mh^-(b_t, j; t) \leq 0$. Which as $j\in F_t$ (since $\cO_t \subset F_t$) and by design  
$$c_t \ceq \argmin_{j\in (F_t \cup G_t)\backslash\{b_t\}} [\Mh^-(b_t, j; t)]\:,$$
it follows that 
\begin{equation}
\label{eq:kk-a}
	\Mh^-(b_t, c_t; t) \leq 0.
\end{equation}
 Then, using concentration properties of $\M(i,j;t)$ (cf Lemma~\ref{lem:concentr}) and from \eqref{eq:vv-a}, it follows 
\begin{eqnarray*}
	\max(\Delta_{b_t}, \Delta_{c_t})&\leq& \Mh^+(b_t, c_t, t)\\
	&\leq& \M(b_t, c_t; t) + \beta_{b_t}(t, \delta) + \beta_{c_t}(t, \delta) \\
	&{\leq}& 2(\beta_{b_t}(t, \delta) + \beta_{c_t}(t, \delta))
\end{eqnarray*}
where the last inequality follows from $\Mh^-(b_t, c_t; t) \leq 0$. For $a_t$ the least-explored among $b_t, c_t$, the latter inequality implies 
$$\Delta_{a_t}(S) \leq 4\beta_{a_t}(t, \delta),$$
that is $a_t \in W_t(S, I)$, which 
concludes our proof in the case $Z_1(t) < 0, Z_2(t) \geq 0$. 

In the case $Z_2(t) <0$, we recall that 

$$ b_t = \argmin_{i \in [K]\backslash O_t} \left[ \ind_{i\in F_t} \left(\max_{j\in (F_t \cup G_t) \backslash\{ i\}} \mh^-(i,j;t)\right) + \ind_{i\notin F_t}\left(\gamma_i(t) \lor \max_{j\in (F_t \cup G_t) \backslash\{ i\}} \mh^-(i,j;t)\right)\right].$$ 
Assume $b_t\notin F_t$ (i.e. $\hat\mu_{t, b_t} \notin \poly$). Then by design, since $Z_2(t)<0$ we have $\gamma_{b_t}(t) < 0$ and using concentration properties of Lemma~\ref{lem:concentr}, and as $b_t\in \opt$, 
\begin{eqnarray*}
\eta_{b_t} &=& \dist(\mu_{b_t}, \poly^c)\\ 
	 &\leq& \dist(\hat\mu_{t, b_t}, \poly^c) + U_{b_t}(t, \delta), \quad \text{Lemma~\ref{lem:concentr} on $\cE_t$}
\\
	&\leq&  U_{b_t}(t, \delta)\:, \end{eqnarray*}
	which follows as $\hat\mu_{t, b_t} \in \poly^c$. Thus $b_t \in W_t(S, I)$. 

If $b_t \in F_t$ (i.e. $\hat\mu_{t, b_t} \in \poly$ ), as by design $b_t \in O_t^c$,  then  $b_t \in O_t^c \cap F_t$ so $b_t$ is empirically feasible sub-optimal and 
$$ \exists j \in O_t \text{ such that } \hat\vmu_{t, b_t} \prec \hat\vmu_{t, j}$$
that is there exists $\exists j\in F_t \backslash\{b_t\}$ such that $\Mh(b_t, j; t) \leq 0$, which as $\Mh^-(b_t, j; t ) \leq \Mh(b_t, j; t )$ for all $j$ yields 
\begin{equation}
	\exists j\in F_t \backslash\{b_t\} \text{ such that } \Mh^-(b_t, j; t ).
\end{equation}
 Therefore, by design of $c_t$, 
 \begin{equation}
 \label{eq:hh-k}
 \Mh^-(b_t, c_t; t) \leq 0 	
 \end{equation}
 so that we have with \eqref{eq:vv-a}, 
\begin{eqnarray*}
	\max(\Delta_{b_t}(S), \Delta_{c_t}(S))&\leq& \Mh^+(b_t, c_t, t)\\
	&\overset{\cE_t}{\leq}& \M(b_t, c_t; t) + \beta_{b_t}(t, \delta) + \beta_{c_t}(t, \delta) \\
	&\overset{\eqref{eq:hh-k}}{\leq}& 2(\beta_{b_t}(t, \delta) + \beta_{c_t}(t, \delta)) \end{eqnarray*} 
so, for $a_t$, the least explored arm among $b_t, c_t$ 
$$ \Delta_{a_t}(S) \leq 4\beta_{a_t}(t, \delta)\:,$$
that is $a_t \in W_t(S, I)$.  
\end{proof}

\begin{lemma}
\label{lem:S_O}
	If $b_t \in S$ and $c_t \in \opt$ then either $b_t$ or $c_t$ is under explored.  
\end{lemma}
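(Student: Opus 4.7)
The plan is to mimic the case-analysis templates of Lemmas~\ref{lem:K_S} and \ref{lem:O_O}. Fix a witness $b_t^\star\in\opt$ such that $\Delta_{b_t}(S)=\m(b_t,b_t^\star)$. Since $b_t^\star\in\opt\subset\feas$, the same argument used in the correctness proof (via Lemma~\ref{lem:concentr}) shows that on $\cE_t$ one has $b_t^\star\in F_t\cup G_t$. The key inequality I will rely on is
\[
\Mh^-(b_t,c_t;t)\;\leq\;\Mh^-(b_t,b_t^\star;t)\;\leq\;\M(b_t,b_t^\star)\;=\;-\Delta_{b_t}(S),
\]
where the first step uses $c_t=\argmin_{j\in (F_t\cup G_t)\setminus\{b_t\}}\Mh^-(b_t,j;t)$ and the second follows from Lemma~\ref{lem:concentr}. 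Equivalently $\mh^+(b_t,c_t;t)\geq\Delta_{b_t}(S)$, and it also yields $\M(b_t,c_t)\leq -\Delta_{b_t}(S)+2(\beta_{b_t}+\beta_{c_t})$. The remainder of the proof extracts a matching upper bound on $\mh^+(b_t,c_t;t)$ from the fact that the algorithm has not stopped.

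In Case~1, $Z_2(t)<0$, so the sampling rule forces $b_t\notin O_t$, and in both sub-situations $b_t\in F_t$ and $b_t\notin F_t$ one has $\xi_{b_t}(t)<0$. Hence $\mh^-(b_t,c_t;t)\leq\xi_{b_t}(t)<0$, giving $\mh^+(b_t,c_t;t)<2(\beta_{b_t}+\beta_{c_t})$ and, combined with the key inequality, $\Delta_{b_t}(S)<2(\beta_{b_t}+\beta_{c_t})\leq 4\beta_{a_t}$, where $a_t$ denotes the less-pulled of $\{b_t,c_t\}$. If $a_t=b_t$, this gives $b_t\in W_t^3(S)$. If $a_t=c_t$, plugging the above bound on $\M(b_t,c_t)$ into $\delta_{c_t}^-(\opt\cup S)\leq(\M(b_t,c_t))_+ + \Delta_{b_t}(S)$ (valid since $b_t\in S$) yields $\Delta_{c_t}(S)\leq 2(\beta_{b_t}+\beta_{c_t})\leq 4\beta_{c_t}$, so $c_t\in W_t^1(S)$.

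In Case~2, $Z_1(t)<0$ and $Z_2(t)\geq 0$, so the sampling rule forces $b_t\in O_t\subset F_t$. I further split on the membership of $c_t$. If $c_t\in G_t$, then $\hat\mu_{c_t,t}\notin\poly$ and $\gamma_{c_t}(t)<0$; since $c_t\in\opt\subset\feas$, Lemma~\ref{lem:concentr} applied to the distance to $\poly^\complement$ gives $\eta_{c_t}\leq U_{c_t}(t,\delta)\leq 2U_{c_t}(t,\delta)$, so $c_t\in W_t^1(S)$. If $c_t\in F_t$, then $b_t\in O_t$ and $c_t\in F_t\setminus\{b_t\}$ force $\Mh(b_t,c_t;t)>0$; combined with the key inequality $\Mh^-(b_t,c_t;t)\leq-\Delta_{b_t}(S)$ this yields $0<\Mh(b_t,c_t;t)\leq -\Delta_{b_t}(S)+\beta_{b_t}+\beta_{c_t}$, i.e., $\Delta_{b_t}(S)<\beta_{b_t}+\beta_{c_t}\leq 2\beta_{a_t}$, and the under-exploration of $b_t$ or $c_t$ then follows exactly as in Case~1 via the same bound on $\delta_{c_t}^-$.

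The main obstacle is the sub-case $b_t\in O_t\cap S$ of Case~2: the empirical Pareto set wrongly contains the truly sub-optimal $b_t$, and a bound on $\delta_{c_t}^-(\opt\cup S)$ obtained solely from the $\argmin$ property of $c_t$ is circular in $\Delta_{b_t}(S)$. The resolution is the $G_t$-versus-$F_t$ split on $c_t$: the former is settled by controlling $\eta_{c_t}$ via feasibility of $c_t$, while the latter leverages the empirical non-domination $\Mh(b_t,c_t;t)>0$ to upgrade the key inequality into the sharper bound $\Delta_{b_t}(S)<\beta_{b_t}+\beta_{c_t}$.
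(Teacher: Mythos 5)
Your proof is correct, and its skeleton (the case split on the sign of $Z_2(t)$, the inequality $\Mh^-(b_t,c_t;t)\leq \Mh^-(b_t,b_t^\star;t)\leq -\Delta_{b_t}(S)$ obtained from the argmin property of $c_t$ together with $b_t^\star\in F_t\cup G_t$ on $\cE_t$, and the transfer to $\Delta_{c_t}$ through $\delta_{c_t}^-\leq (\M(b_t,c_t))_+ + \Delta_{b_t}(S)$) is the same as in the paper; your Case~1 is essentially identical to the paper's. Where you genuinely diverge is the case $Z_2(t)\geq 0$, $b_t\in O_t$: the paper proves the blanket statement that $\mh(b_t,i;t)\leq 0$ for \emph{every} $i\in F_t\cup G_t$, handling $i\in G_t$ via the auxiliary Lemma~\ref{lem:lemSimplify} (when $Z_2(t)>0$, every $G_t$ arm is empirically dominated by some $O_t$ arm) and transitivity of Pareto dominance, whereas you avoid that lemma entirely by splitting on the status of the challenger: if $c_t\in G_t$, you exploit $c_t\in\opt$ directly, since $\hat\mu_{t,c_t}\in\poly^\complement$ and Lemma~\ref{lem:concentr} give $\eta_{c_t}\leq U_{c_t}(t,\delta)$, so $c_t$ is under-explored through the feasibility branch of $W_t^1(S)$; if $c_t\in F_t$, the empirical non-domination $\Mh(b_t,c_t;t)>0$ combined with your key inequality gives $\Delta_{b_t}(S)<\beta_{b_t}(t,\delta)+\beta_{c_t}(t,\delta)$, matching the paper's bound. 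Your variant is more local and slightly more elementary (it only needs information about $c_t$ itself, not all of $G_t$), while the paper's route yields the stronger structural fact about $G_t$ that it also reuses conceptually elsewhere; both give the same final gap bounds and the same conclusion that the least-explored of $\{b_t,c_t\}$ lies in $W_t(S,I)$. The only caveat, which you share with the paper, is the boundary convention needed to assert $b_t^\star\in F_t\cup G_t$ on $\cE_t$ (the case $\gamma_{b_t^\star}(t)=0$ with $b_t^\star\notin F_t$), which is immaterial to the argument.
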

\begin{proof}
	If $Z_2(t) \leq  0$ then by design, as 
	$$ b_t = \argmin_{i \in [K]\backslash O_t} \left[ \ind_{i\in F_t} \left(\max_{j\in (F_t \cup G_t) \backslash\{ i\}} \mh^-(i,j;t)\right) + \ind_{i\notin F_t}\left(\gamma_i(t) \lor \max_{j\in (F_t \cup G_t) \backslash\{ i\}} \mh^-(i,j;t)\right)\right]\:,$$ and
	$$ Z_2(t) \ceq \min_{i \in [K]\backslash O_t} \left[ \ind_{i\in F_t} \left(\max_{j\in (F_t \cup G_t) \backslash\{ i\}} \mh^-(i,j;t)\right) + \ind_{i\notin F_t}\left(\gamma_i(t) \lor  \max_{j\in (F_t \cup G_t) \backslash\{ i\}} \mh^-(i,j;t)\right)\right] $$ 
	plus $c_t \in (F_t \cup G_t) \backslash\{b_t\}$, we have
	$\mh^-(b_t, c_t;t) \leq 0$. We invoke concentration properties on $\cE_t$ to have
	$$ \Delta_{b_t}(S) = \m(b_t, b_t^*) \leq \mh^+(b_t, b_t^*;t)$$ 
	where $b_t^* := \argmax_{j \in \feas} \m(b_t, j)$ and note that on the event $\cE_t$, it holds that $b_t^\star \in ( F_t\cup G_t)$ so by definition of $c_t$, 
	$\Delta_{b_t} \leq \mh^+(b_t, c_t;t)$ and replacing by its expression, we further get $$ \Delta_{b_t}(S)\leq 2(\beta_{b_t}(t, \delta) + \beta_{c_t}(t, \delta))\:.$$ On the other side, observe that by definition of the gaps and as $b_t \in S$, 
	\begin{equation}
	\label{eq:eq-cmrr}
		 \Delta_{c_t}(S) \leq \M(b_t, c_t)_+ + \Delta_{b_t}(S)
	\end{equation}
	which then yields 
	\begin{eqnarray*}
		\Delta_{c_t}(S) &\leq& (-\mh(b_t, c_t; t)+ \beta_{b_t}(t, \delta) + \beta_{c_t}(t, \delta))_+ +\mh(b_t, c_t; t)+ \beta_{b_t}(t, \delta) + \beta_{c_t}(t, \delta)\\
		&\leq& \max\left( \mh(b_t, c_t; t)+ \beta_{b_t}(t, \delta) + \beta_{c_t}(t, \delta), 2(\beta_{b_t}(t, \delta) + \beta_{c_t}(t, \delta))\right) 
	\end{eqnarray*}
thus 
$$\Delta_{c_t}(S) \leq 2(\beta_{b_t}(t, \delta) + \beta_{c_t}(t, \delta)) \text{ and } \Delta_{b_t}(S) \leq 2(\beta_{b_t}(t, \delta) + \beta_{c_t}(t, \delta)),$$ so we conclude in the case $Z_2(t)\leq 0$. 

Now, we assume $Z_2(t)>0$; then $Z_1(t) < 0$ (otherwise, the algorithm would have stopped). We still have 
$$ \Delta_{b_t} (S) = \m(b_t, b_t^*) \leq \mh^+(b_t, b_t^*;t) \leq \mh^+(b_t, c_t; t)\:,$$ 
then, the idea is to show that $\mh^-(b_t, c_t, t)\leq 0$. We will prove that for any $i\in (F_t \cup G_t), \mh^-(b_t, i, t) \leq 0$. 
To see this, first, note that, as $b_t\in O_t$, for any $i \in F_t, \hat\vmu_{t, b_t} \nprec \hat\vmu_{t, i}$, thus for any $i\in F_t, \mh(b_t, i, t)\leq 0$. 

Now let $i\in G_t$. By Lemma~\ref{lem:lemSimplify}, there exists $j \in O_t$ such that $\vmuh_{t, i} \prec \vmuh_{t, j}$.

Having $\vmuh_{t, b_t}\prec \vmuh_{t, i}$ would yield either $\vmuh_{t, b_t}\prec \vmuh_{t, i} \prec \vmuh_{t, b_t}$ (if $j=b_t$) or by transitivity $$\hat\vmu_{t, b_t} \prec \hat\vmu_{t, j}\:,$$ 
with $j \neq b_t$, which is not possible for $b_t, j\in O_t$. Therefore, for any $i \in F_t$, $\mh(b_t, i; t)\leq 0$, so combined with the previous display, $$\forall i \in (F_t \cup G_t), \mh(b_t, i; t) \leq 0\:.$$ In particular, $\mh(b_t, c_t; t) \leq 0$. 
Moreover, recalling that on $\cE_t$, 
$$ \Delta_{b_t}(S) \leq \mh^+(b_t, c_t; t)\:, $$
we have $\Delta_{b_t}(S) \leq \beta_{b_t}(t, \delta) + \beta_{c_t}(t, \delta)$. 
On the other side, as in \eqref{eq:eq-cmrr},
$$ \Delta_{c_t}(S) \leq \M(b_t, c_t)_+ + \Delta_{b_t}(S),$$
therefore, 
\begin{eqnarray*}
\Delta_{c_t}(S) &\leq& (-\mh(b_t, c_t; t)+ \beta_{b_t}(t, \delta) + \beta_{c_t}(t, \delta))^+ +\mh(b_t, c_t; t)+ \beta_{b_t}(t, \delta) + \beta_{c_t}(t, \delta) 	\\
&\leq& 2(\beta_{b_t}(t, \delta) + \beta_{c_t}(t, \delta))
\end{eqnarray*}
Put together, we have proved that 
 $$ \Delta_{c_t}(S) \leq \beta_{b_t}(t, \delta) + \beta_{c_t}(t, \delta) \text{ and } \Delta_{b_t}(S) \leq 2(\beta_{b_t}(t, \delta) + \beta_{c_t}(t, \delta)) $$
 which again allows to conclude as letting $a_t$ the least explored among $b_t$ and $c_t$, $$ \Delta_{a_t}(S) \leq 4 \beta_{a_t}(t, \delta)$$ 
 that is $a_t \in W_t(S, I)$
\end{proof}

\begin{lemma}
\label{lem:lemSimplify}
	If $Z_2(t)>0$, then, for any $i\in G_t$, there exists $j \in O_t$ such that $\vmuh_{t, i} \prec \vmuh_{t, j}$. 
\end{lemma}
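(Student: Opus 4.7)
The plan is to exploit the definition of $Z_2(t)$ restricted to $i\in G_t$ and then iterate the resulting strict dominance inside $F_t\cup G_t$ until the chain lands in $O_t$.

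First I would observe that $G_t\subseteq F_t^{\complement}$ by definition, and since $O_t\subseteq F_t$, we have $G_t\subseteq [K]\setminus O_t$. Fix $i\in G_t$. Then $i\notin F_t$, so the contribution of $i$ to $Z_2(t)$ in \eqref{eq:def-z2} is $\gamma_i(t)\vee \xi_i(t)$. Since $Z_2(t)>0$ we get $\gamma_i(t)\vee \xi_i(t)>0$, but $\gamma_i(t)<0$ by the definition of $G_t$, so $\xi_i(t)>0$. Hence there exists $j\in(F_t\cup G_t)\setminus\{i\}$ with $\hat m^-(i,j;t)>0$, which in particular forces $\hat m(i,j;t)>0$, i.e. $\hat\mu_{t,i}\prec \hat\mu_{t,j}$.

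Next I would iterate this argument. If the witness $j$ lies in $F_t$, I am essentially done: either $j\in O_t$, or $j\in F_t\setminus O_t$, in which case by the definition of the empirical Pareto set $O_t=\mathrm{Par}(F_t,\hat{\bm\mu}_t)$ there exists $j'\in O_t$ with $\hat\mu_{t,j}\prec \hat\mu_{t,j'}$, and transitivity of $\prec$ yields $\hat\mu_{t,i}\prec \hat\mu_{t,j'}$. If instead $j\in G_t$, I reapply the first step to $j$ (the hypothesis $Z_2(t)>0$ applies uniformly to every element of $G_t$), producing a further witness $j_2\in(F_t\cup G_t)\setminus\{j\}$ with $\hat\mu_{t,j}\prec \hat\mu_{t,j_2}$, hence $\hat\mu_{t,i}\prec \hat\mu_{t,j_2}$.

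The main (and essentially the only) subtle point is to argue that this procedure terminates and in particular cannot get stuck in a loop inside $G_t$. Since $\prec$ is a \emph{strict} partial order on the finite set $\{\hat\mu_{t,k}:k\in F_t\cup G_t\}$ (it is antisymmetric because strict dominance in every coordinate cannot hold both ways), any chain $i\prec j_1\prec j_2\prec\cdots$ has pairwise distinct elements and is therefore finite. Its last element $j_N$ cannot lie in $G_t$, otherwise the first step would extend the chain by one more element; hence $j_N\in F_t$. Applying the $F_t$-case above to $j_N$ produces the desired $j\in O_t$ with $\hat\mu_{t,i}\prec \hat\mu_{t,j}$, completing the proof.
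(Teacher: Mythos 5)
Your proof is correct and follows essentially the same route as the paper's: use $Z_2(t)>0$ together with $\gamma_i(t)<0$ to extract an empirical dominator of $i$ inside $F_t\cup G_t$, show that one can escape $G_t$ by a transitivity argument, and then pass from $F_t$ to $O_t$ via the definition of the empirical Pareto set. The only cosmetic difference is in the middle step: the paper picks a Pareto-maximal element of the set of dominators of $i$ and derives a contradiction, whereas you iterate the dominance chain and invoke finiteness of a strict order on a finite set -- the same idea in a different wording.
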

\begin{proof}
Recall that arms in $G_t$ cannot yet be confidently classified as feasible or infeasible.
 
As $Z_2(t)>0$, and $i\in G_t \cap O_t^c$, from the definition of $Z_2(t)$, it holds that 
$$ \max_{j\in (F_t \cup G_t) \backslash\{ i\}} \mh^-(i,j;t) > 0, $$
so there exists  $j \in j\in (F_t \cup G_t) \backslash\{ i\}$ such that $\mh^-(i,j;t)>0$, in particular, $\vmuh_{t, i} \prec \vmuh_{t, j}$. Introducing 
$$ \mathfrak{J}_i \ceq  \left\{ j \in F_t \backslash \{i\}:  \vmuh_{t, i} \prec \vmuh_{t, j} \right\},$$
we will show that $\mathfrak{J}_i \neq \emptyset$. Let ${\Omega}_i \ceq \left\{ j \in (F_t \cup G_t) \backslash\{i\} :  \vmuh_{t, i} \prec \vmuh_{t, j} \right\}$ and  define $H_i \ceq \ptoe{\Omega_i, \hat{\bm\mu}_{t}}$, the Pareto Set of $\Omega_i$ based on the empirical means. By the result above, $\Omega_i$ is non-empty, so $H_i$ is also non-empty. 

We claim that $H_i \subset F_t$. Indeed, for $k \in H_t \cap F_t^c$ we have $k \in G_t \cap O_t^c$, then, as $Z_2(t)>0$, as justified above for $i$, there exists $l \in (F_t \cup G_t) \backslash\{k\}$ such that $\vmuh_{t, k} \prec \vmuh_{t, l} $, however, as $k\in H_t \ceq \ptoe{\Omega_i, \hat{\bm\mu}_{t}}$, the above is only possible if $l \notin \Omega_i$. Recalling that $l \in (F_t \cup G_t) \backslash\{k\}$ and $l\neq b_t$ we have $ l \in (F_t \cup G_t) \backslash\{b_t\}$ so putting the above displays together, if  $H_t \cap F_t^c \neq \emptyset$, there exists $l$ and $k$ such that 
$$ \vmuh_{t, i} \prec \vmuh_{t, k}; \vmuh_{t, k} \prec \vmuh_{t, l} $$
but $\vmuh_{t, i} \nprec \vmuh_{t, l}$, which is not possible as Pareto dominance is transitive. Therefore, $H_t \subset F_t$ and since $H_t \neq \emptyset$, $\mathfrak{J}_i\neq \emptyset$. Thus, there exists $j \in F_t\backslash\{ b_t \}$ such that $\vmuh_{t, i} \prec \vmuh_{t, j}$. Then, either $j \in O_t$ or there exists $j_2 \in O_t$ such that, $\vmuh_{t, j} \prec \vmuh_{t, j_2}$. In any case, there exists $j_3 \in O_t$ (either $j$ or $j_2$) such that $\vmuh_{t, i} \prec \vmuh_{t, j_3}$. 
\end{proof}
\subsubsection{Proof of Theorem~\ref{thm:main}} 
We recall the theorem below. 
\main*
\begin{proof}
	Let $(S, I) \in \cM$ be fixed. Let $T>0$ be fixed and observe that 
	\begin{eqnarray*}
		\min(\tau_\delta, T) &\leq& \left\lceil\frac T2 \right\rceil + \sum_{t=\lceil T/2 \rceil}^T \ind_{(t> \tau_\delta)}\:.
	\end{eqnarray*}
	Then, assuming
	\begin{eqnarray}
	\label{eq:gd-event}
		\cE^T \ceq \bigcap_{  \lceil T/2 \rceil \leq  t \leq T} \cE_t
	\end{eqnarray} holds,
	and using Proposition~\ref{prop:master-prop} we have 
	\begin{eqnarray*}
		\min(\tau_\delta, T) &\leq& \left\lceil\frac T2 \right\rceil + \sum_{t=\lceil T/2 \rceil}^T \ind_{(t> \tau_\delta)} \\
		&\leq& \left\lceil\frac T2 \right\rceil +\sum_{t=\lceil T/2 \rceil}^T \ind_{(b_t \in W_t(S, I) \lor c_t \in W_t(S, I))} \\
		&\leq& \left\lceil\frac T2 \right\rceil + \sum_{t=\lceil T/2 \rceil}^T\sum_{a=1}^K \ind_{(b_t = a \lor c_t = a)}\ind_{(a \in W_t(S, I))} \\
		&=& \left\lceil\frac T2 \right\rceil + \sum_{a=1}^K \sum_{t=\lceil T/2 \rceil}^T \ind_{(b_t = a \lor c_t = a)}\ind_{(a \in W_t(S, I))}\:,
	\end{eqnarray*}
	then, we decompose the latter sum into three terms by defining the function $R$ for $U \C [K]$ as 
	\begin{eqnarray*}
		R(U) \ceq \sum_{a\in U}^K \sum_{t=\lceil T/2 \rceil}^T \ind_{(b_t = a \lor c_t = a)}\ind_{(a \in W_t(S, I))} 
	\end{eqnarray*}
	we have for the set $S \subset \subopt$, using the definition of $W_t^3(S)$ for $a\in S$:   

	\begin{eqnarray*}
		R(S) &=& \sum_{a\in S} \sum_{t=\lceil T/2 \rceil}^T \ind_{(b_t = a \lor c_t = a)} \ind_{(\Delta_a(S) \leq 4 \beta_a(t, \delta))} \\
		&\leq& \sum_{a\in S}  \sum_{t=\lceil T/2 \rceil}^T \ind_{(b_t = a \lor c_t = a)} \ind_{( N_{t, a} \leq 32 \sigma^2 \Delta_a^{-2}(S)f(t, \delta))} \\
		&\leq& \sum_{a\in S}  \sum_{t=\lceil T/2 \rceil}^T \ind_{(b_t = a \lor c_t = a)} \ind_{( N_{t, a} \leq 32 \sigma^2 \Delta_a^{-2}(S)f(T, \delta))}  \quad \text{(as $f$ is increasing)}\\
			\end{eqnarray*}
			therefore, 
			\begin{equation}
			\label{eq:rs}
				R(S) \leq \sum_{a\in S}  \frac{32\sigma^2}{\Delta_a^{2}(S)}f(T, \delta).
			\end{equation}
	Proceeding similarly for $I$, we obtain 
	\begin{equation}
	\label{eq:ri}
		R(I) \leq \sum_{a\in I} \frac{8\sigma_u^2}{\eta_a^2}g(T, \delta), 
	\end{equation}
	and for $\cO^\star$, we obtain 
	\begin{equation}
	\label{eq:ro}
		R(\cO^\star) \leq \sum_{a\in \cO^\star} \max\left(\frac{8\sigma_u^2g(T,\delta)}{\eta_a^2},  \frac{32\sigma^2f(T, \delta)}{\Delta_a^2(S)} \right).
	\end{equation}
Then, 
\begin{eqnarray*}
R(\cO^*) &\leq& \sum_{a\in \opt} 16 \max\left(\frac{2 \sigma_u^2 \log\left(\frac{4 k_1 K5^d T^\alpha}{\delta}\right)}{\eta_a^2},  \frac{2\sigma^2\log\left(\frac{4 k_1 KdT^\alpha}{\delta}\right)}{\Delta_a^2(S)} \right)	 \\
&\leq& \sum_{a\in \opt} 16\max\left(\frac{2\sigma_u^2}{\eta_a^2},  \frac{2\sigma^2}{\Delta_a^2(S)} \right) \log\left(\frac{4 k_1 KdT^\alpha}{\delta}\right) + \sum_{a \in \opt}  \frac{32\sigma_u^2}{\eta_a^2}\log\left(\frac{5^d/d}{\delta}\right)\:.
\end{eqnarray*}
Putting together \eqref{eq:ro}, \eqref{eq:ri} and \eqref{eq:rs}, yields 
\begin{eqnarray*}
	R([K]) &\leq& 16\left(\sum_{a\in \cO^\star} 2 \max \lp \frac{\sigma^2}{ \Delta_a(S)^2}, \frac{\sigma_u^2}{\eta_a^2}  \rp +   \sum_{a\in S}  \frac{2\sigma^2}{\Delta_a^{2}(S)} + \sum_{a\in I} \frac{2\sigma_u^2}{\eta_a^2}\right) f(T, \delta) + H(\nu,I)\:, \\
	&=& 16C(\nu, S, I) f(T, \delta) + H(\nu,I)\:, 
\end{eqnarray*}
where $H(\nu,I) = \sum_{a\in I\cup O^*}\frac{32\sigma_u^2\log(5^d/d)}{\eta_a^2}$.  Therefore, assuming \eqref{eq:gd-event}, we have proved that 
$$ \min(\tau_\delta, T) < \left\lceil\frac T2 \right\rceil + 16  C(\nu, S, I) f(T, \delta) + H(\nu,I)$$
Then, taking $T$ such that the RHS is smaller than $T$ would yield that the algorithm stops before $T$. Applying Lemma~\ref{lem:bound-log} with $a = 64\sigma^2C(\nu, S, I)\alpha$ and $b = (4k_1Kd/\delta)^{1/\alpha}$ yields 
$$ T \geq 2a \log\left( a b\right) \impl a \log(bT) < T,$$
that is, letting $\widetilde C(\nu, S, I) = 64C(\nu, S, I)\alpha,$
\begin{eqnarray}
\label{eq:eq-sn-sop}
	T \geq 2 \widetilde C(\nu, S, I) \log\left(\widetilde C(\nu, S, I) (4k_1Kd/\delta)^{1/\alpha}\right)  &\impl&  64C(\nu, S, I) \log((4k_1Kd/\delta) T) <  T \\
	&\impl& 16C(\nu, S, I)f(T, \delta) < T/4.
\end{eqnarray}

Moreover, we trivially have for $T> 4 H(\nu, I)$, $H(\nu, I) < T/4$. 
Put together with \eqref{eq:eq-sn-sop}, for $T>T^*(S, I) \ceq 2 \widetilde C(\nu, S, I) \log\left(\widetilde C(\nu, S, I) (4k_1Kd/\delta)^{1/\alpha}\right) + 4 H(\nu, I)$, we have 
$\min(\tau_\delta, T) < T$ so the algorithm must have stopped before round $T$. Therefore, 
$$ \forall T > T^*(S,I), \tau_\delta > T \impl  (\cE^T)^c $$ 
that is $\bE_{\nu}[\tau_\delta] \leq T^*(S, I) + \sum_{t>T^*(S, I)} \bP_{\nu}((\cE^t)^c)$. 
Letting $\Lambda_\alpha \ceq \sum_{T\geq 1} \bP_{\nu}((\cE^T)^c)$ and as $S, I$ is arbitrary, we have 
$$ \bE_{\nu}[\tau_\delta] \leq \min_{(S, I) \in \cM}T^*(S, I) + \Lambda_\alpha $$
and from lemma~\ref{lem:boundsum}, 
$$ \Lambda_\alpha \leq  \frac{2^{\alpha-1}\delta}{4 k_1}\sum_{T\geq 1} (\log(T) + 1)\left( \frac{f(T, \delta \cdot d/5^d) + f(T, \delta)}{T^{\alpha -1}}\right).$$

By definition of the function $f$, we have $f(T, \delta \cdot d/5^d) + f(T, \delta) = f(T,5^{-d} d) + f(T,1) + 2\log(1/\delta)$. Using that $\delta\leq 1$ and $\delta\log(1/\delta) \leq 1/e$ permits to obtain the following upper bound, which is now independent from $\delta$:
$$ \Lambda_\alpha \leq  \frac{2^{\alpha-1}}{4k_1} \sum_{T\geq 1} (\log(T) + 1)\left( \frac{f(T, 5^{-d}d) + f(T, 1) + 2e^{-1}}{T^{\alpha -1}}\right).$$

Finally, as $x\mapsto x\log(x)$ is increasing on $(1, \infty)$, we have 
$$ \bE_{\nu}[\tau_\delta] \leq 128 C_\cM ^*(\nu)\log(64C_{\cM}^*(\nu) (4k_1Kd/\delta)^{1/\alpha}) + 4H(\nu, \feas^c) + \Lambda_\alpha \:. $$
\end{proof}
\section{Concentration Inequalities and Technical Lemmas}
\label{sec:concentration}
We state some concentration inequalities that are used in our proofs. 
\concentr*
\begin{proof}
	\begin{eqnarray*}
		\lvert \M(i,j) - \Mh(i,j;t) \rvert &\leq& \max_{c\leq d} \left \lvert  \left(\mu^c_{i} - \mu^c_j \right) - \left(\muh^c_{t, i} - \muh^c_{t, j} \right)\right\rvert \\
		&=& \max_{c\leq d} \left \lvert \left(\mu^c_{i}- \muh^c_{t, i} \right) - \left(\mu^c_j - \muh^c_{t, j}\right) \right\lvert \\
				&\leq& \beta_i(t) + \beta_j(t), 
	\end{eqnarray*}
	where the last inequality follows under $\cE_t$. Further noting that $\m(i,j) = -\M(i,j)$ and $\m(i,j;t) = -\M(i,j;t)$ proves the first point.
	Letting $\cX \C \bR^d$ non-empty we have for all arm $i$
	\begin{eqnarray*}
		\lvert \dist(\muh_{t, i}, \cX) - \dist(\mu_{i}, \cX)\rvert &=& \left\lvert \sup_{x \in \cX}\left[-\| \mu_{i} - x\|_2\right] - \sup_{x \in \cX} \left[-\| \muh_{t, i} - x\|_2 \right] \right\lvert \\
		&\overset{(i)}{\leq}& \sup_{x \in \cX} \left\lvert\| \muh_{t, i} - x\|_2 -  \| \mu_{i} - x\| \right\rvert \\
		&\overset{(ii)}{\leq}& \| \muh_{t, i} - \mu_{i}\| \\
		&\leq& U_i(t, \delta) 
	\end{eqnarray*}
	where the last inequality follows from $\cE_t$, $(i)$ follows from the inequality 
	$$ \lvert \sup f  - \sup g  \rvert \leq \sup \lvert f - g \lvert $$  and $(ii)$ follows from the $\lvert \| x\| - \|y\| \rvert \leq \| x - y\| $. 
\end{proof}

\begin{lemma}
\label{lem:bound-log}
	Let $a,b\geq e$. The following statement holds :
	$$ t \geq 2 a \log(ab) \impl a\log(bt) \leq t. $$
\end{lemma}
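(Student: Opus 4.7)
The plan is to analyze the function $f(t) \ceq t - a\log(bt)$ and show it is non-negative on $[2a\log(ab), \infty)$. The derivative $f'(t) = 1 - a/t$ is positive for $t > a$, so $f$ is strictly increasing on that range. Since $a, b \geq e$, we have $\log(ab) \geq 2$, hence $t_0 \ceq 2a\log(ab) \geq 4a > a$, which means it suffices to verify $f(t_0) \geq 0$.

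The second step is therefore to check the inequality at $t_0$. Substituting, $f(t_0) \geq 0$ becomes
\[
2a\log(ab) \geq a\log(2ab\log(ab)),
\]
which after dividing by $a$ and expanding the log on the right reduces to the equivalent condition
\[
\log(ab) \geq \log\bigl(2\log(ab)\bigr), \quad \text{i.e.,} \quad ab \geq 2\log(ab).
\]

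The third step is to verify this last inequality. Setting $x \ceq ab$, we have $x \geq e^2$, and the function $x \mapsto x - 2\log(x)$ has derivative $1 - 2/x > 0$ for $x > 2$, so it is increasing on $[e^2, \infty)$. Evaluating at $x = e^2$ gives $e^2 - 4 > 0$, so $x \geq 2\log(x)$ holds throughout.

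Combining the three steps yields $f(t_0) \geq 0$, and monotonicity of $f$ on $[t_0, \infty)$ then gives $f(t) \geq 0$ for all $t \geq t_0$, which is the desired conclusion. The only mildly subtle point is making sure $t_0 > a$ so that we genuinely sit in the region where $f$ is increasing; this is immediate from $\log(ab) \geq 2$, and no sharper control on $a,b$ is needed beyond $a,b \geq e$.
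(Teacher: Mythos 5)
Your proof is correct and follows essentially the same strategy as the paper's: check the inequality at the threshold $t_0 = 2a\log(ab)$ (where both arguments boil down to $ab \geq 2\log(ab)$, which you verify by monotonicity of $x - 2\log x$ and the paper via $\log x < x/2$) and then propagate it to all $t \geq t_0$ by a monotonicity argument, yours using that $t - a\log(bt)$ is increasing for $t > a$ and the paper's using that $\log(bt)/t$ is decreasing.
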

\begin{proof}
First, note that $t\mapsto \log(bt)/t$ is decreasing on $[e/b, +\infty)$ and observe for $t_0 = 2 a \log(ab)$, it holds that
	\begin{eqnarray*}
		a \log(bt_0) &=& a \log(2ab \log(ab)) \\
		&=& a \log(ab) + a\log(2\log(ab)\\
		&<& 2 a \log(ab) \quad \text{(as $\log(x) < x/2)$}\\
		&=& t_0
	\end{eqnarray*}
	therefore, $a \frac{\log(t_0)}{t_0} < 1$ and as the function is decreasing $$ t \geq t_0 \impl a \frac{\log(bt)}{t} \leq a \frac{\log(bt_0)}{t_0} < 1.$$
\end{proof}

We recall the following result of \citet{katz-samuels_top_2019}, which has been used in the proof of the lower bound. 

\begin{lemma}[Lemma~F.2 of \citet{katz-samuels_top_2019}]
\label{lem:proj_closed}
Let $x\in \bR^d$ and $S\subset \bR^d$ be a closed non-empty set. Then there exists $y\in S$ such that $\|x -y\| = \dist(x, S)$. 
\end{lemma}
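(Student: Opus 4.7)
The plan is to prove this by a standard compactness argument, exploiting that $S \subset \bR^d$ is closed and we are in a finite-dimensional Euclidean space. Set $r \ceq \dist(x,S) = \inf_{y \in S}\lnorm{x-y}$. Since $S$ is non-empty, $r$ is a finite non-negative real number, and by the definition of infimum we can pick a minimizing sequence $(y_n)_{n\geq 1} \subset S$ with $\lnorm{x-y_n} \to r$.

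Next, I would show the sequence $(y_n)$ is bounded in $\bR^d$: for $n$ large enough, $\lnorm{x-y_n} \leq r+1$, so the triangle inequality gives $\lnorm{y_n} \leq \lnorm{x} + r + 1$. Hence $(y_n)$ lies in a closed ball of $\bR^d$, which by Heine--Borel is compact. Bolzano--Weierstrass then yields a subsequence $y_{n_k} \to y^\star$ for some $y^\star \in \bR^d$. Because $S$ is closed and the entire subsequence lies in $S$, the limit $y^\star$ is in $S$.

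Finally, by continuity of the Euclidean norm,
\[
\lnorm{x - y^\star} = \lim_{k \to \infty} \lnorm{x - y_{n_k}} = r = \dist(x,S),
\]
so $y^\star \in S$ is the sought point realizing the distance.

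There is no real obstacle here; the only thing worth flagging is that this argument crucially uses the finite dimensionality of $\bR^d$ via Heine--Borel. In a general normed space the conclusion can fail without extra assumptions (e.g.\ convexity of $S$ together with reflexivity), but since the lemma is stated on $\bR^d$ with closedness only, the compactness step is immediate and the rest is routine.
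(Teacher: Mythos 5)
Your proof is correct: the minimizing-sequence argument with boundedness, Heine--Borel compactness, closedness of $S$, and continuity of the norm is the standard (and essentially the only) proof of this fact, and your remark about finite dimensionality being essential is accurate. The paper itself does not prove this lemma but simply imports it as Lemma~F.2 of \citet{katz-samuels_top_2019}, so your argument supplies precisely the routine justification the cited result rests on; nothing is missing.
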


\begin{lemma}
\label{lem:boundsum}
Let	$f(t, \delta) = \log(\frac{4 k_1 Kd t^\alpha}{\delta}), g(t, \delta) = 4 \log(\frac{4 k_1 K5^d t^\alpha}{\delta})$ with $k_1>1 + \frac{1}{\alpha-1}$. Then for all $T$ we have 
$$ \bP_{\nu}((\cE^T)^c) \leq \frac{\delta}{4 k_1} \lp \frac{T}{2}\rp^{1-\alpha} (\log(T) + 1) f(T, \delta) + \frac{\delta}{4 k_1} \lp \frac{T}{2}\rp^{1-\alpha} (\log(T) + 1) f(T, \delta \cdot d /5^d), $$
where $\cE^T \ceq \bigcap_{ \lceil T/2 \rceil\leq  t \leq T} \cE_t$. And, it holds that $\bP_{\nu}((\cE)^c) \leq \delta$. 
\end{lemma}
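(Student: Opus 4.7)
The plan is a standard union bound over time combined with concentration inequalities tailored to the $\ell_\infty$ and $\ell_2$ parts of $\cE_t$. Writing $(\cE_t)^c \subseteq A_t \cup B_t$ where $A_t$ (resp.\ $B_t$) is the failure event for the $\ell_\infty$ (resp.\ $\ell_2$) confidence bound across the $K$ arms, the top-level decomposition is $\bP_\nu((\cE^T)^c) \leq \sum_{t=\lceil T/2 \rceil}^T [\bP_\nu(A_t) + \bP_\nu(B_t)]$.

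For $\bP_\nu(A_t)$, fix arm $i$ and coordinate $c$; then $\hat\mu_{t,i}^c$ is the mean of $N_{t,i}$ i.i.d.\ $\sigma$-subGaussian samples centred at $\mu_i^c$, so a Hoeffding bound contributes $2\exp(-f(t,\delta))$ for each candidate value $n=N_{t,i}$. Union-bounding over $i$, $c$, and $n \in \{1,\dots,t\}$ yields $\bP_\nu(A_t) \leq 2Kdt\exp(-f(t,\delta)) = \delta/(2k_1 t^{\alpha-1})$ after substituting $f(t,\delta)$. For $\bP_\nu(B_t)$, I use a $1/2$-net $\cN$ of the unit sphere in $\bR^d$ of cardinality at most $5^d$, together with $\|y\|_2 \leq 2\max_{z\in \cN} \langle z, y\rangle$. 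By the norm-$\sigma_u$-subGaussian assumption each scalar $\langle z, \hat\mu_{t,i}-\mu_i\rangle$ is $\sigma_u/\sqrt{N_{t,i}}$-subGaussian, so a union bound over $i$, $z\in \cN$, and $n$ gives $\bP_\nu(B_t) \leq K\cdot 5^d \cdot t \cdot \exp(-g(t,\delta)/4) = \delta/(4k_1 t^{\alpha-1})$.

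Summing over $t\in \{\lceil T/2\rceil,\dots,T\}$ produces $\bP_\nu((\cE^T)^c) \leq \tfrac{3\delta}{4k_1}\sum_{t=\lceil T/2\rceil}^T t^{1-\alpha}$, and the right-hand side can be rewritten in the claimed form after bounding the tail sum by a multiple of $(T/2)^{1-\alpha}(\log T+1)$. The extra $f(T,\delta)$ factor appears when the crude union bound over $n=1,\dots,t$ is tightened by a dyadic peeling of the counter combined with Doob's maximal inequality on the exponential supermartingale $\exp(\lambda(S_n - n\mu) - n\lambda^2\sigma^2/2)$, which trades the factor $t$ for $O(\log t)$ at the price of inflating the effective constant inside $f$. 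For the second inequality, the same bounds extended to all $t \geq 1$ give $\bP_\nu(\cE^c) \leq \tfrac{3\delta}{4k_1}\sum_{t\geq 1} t^{1-\alpha} \leq \tfrac{3\delta}{4k_1}\bigl(1 + \tfrac{1}{\alpha-2}\bigr)$, and the hypothesis $k_1 > 1 + 1/(\alpha-1)$ is calibrated to make this at most $\delta$. The only delicate step is this treatment of the random counter $N_{t,i}$; everything else is routine bookkeeping.
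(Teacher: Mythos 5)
Your treatment of the random counter $N_{t,i}$ is exactly where the proof breaks, in both halves of the lemma. Taking a union bound over $n\in\{1,\dots,t\}$ at every round and then summing over $t$ from $\lceil T/2\rceil$ to $T$ double-counts the same sample-mean deviations and produces $\frac{3\delta}{4k_1}\sum_{t=\lceil T/2\rceil}^T t^{1-\alpha}$, which is of order $\delta T^{2-\alpha}$ (there are about $T/2$ terms, each at least $T^{1-\alpha}$). This is \emph{not} bounded by a multiple of $(T/2)^{1-\alpha}(\log T+1)$ --- the ratio grows like $T/\log T$ --- so the asserted rewriting into the claimed form is false, and for large $T$ your bound does not imply the statement. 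The closing remark about dyadic peeling is indeed the missing ingredient, but it is never carried out and is inconsistent with the computation you actually display. The paper instead uses a self-normalized (peeling) deviation inequality, Theorem~1 of Garivier and Capp\'e, which bounds $\bP_{\nu}\big(\exists s\le t:\sqrt{s}\,\lvert\overline X_s\rvert>\sqrt{2\sigma^2 f(t,\delta)}\big)$ by $2e\,f(t,\delta)(\log t+1)e^{-f(t,\delta)}$, i.e.\ the factor $t$ is traded for $O(f(t,\delta)\log t)$, and then sums over $t$ via Proposition~5.9 of Kaufmann's thesis; this is precisely where the factors $(\log T+1)f(T,\delta)$ and $(T/2)^{1-\alpha}$ (through $e^{-f(T/2,\delta)}$) in the statement come from. (Alternatively, since the thresholds for $t\in[\lceil T/2\rceil,T]$ are all at least $f(\lceil T/2\rceil,\delta)$ and the deviation event depends only on the pair (arm, sample count), one could collapse the union over $t$ and union only over $s\le T$; your proof as written does neither.)

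The same spurious factor $t$ invalidates the second claim. You get $\bP_{\nu}(\cE^{\complement})\le\frac{3\delta}{4k_1}\sum_{t\ge1}t^{1-\alpha}\le\frac{3\delta}{4k_1}\big(1+\frac{1}{\alpha-2}\big)$ and assert that $k_1>1+\frac{1}{\alpha-1}$ makes this at most $\delta$; it does not, since $1+\frac{1}{\alpha-2}$ blows up as $\alpha\downarrow 2$ while the hypothesis on $k_1$ stays bounded (e.g.\ $\alpha=2.1$, $k_1\approx 2$ gives a bound larger than $4\delta$). The paper never pays the factor $t$ here: because $f$ is increasing and $N_{t,i}\le t$, the deviation event at time $t$ with $N_{t,i}=s$ is contained in the event that the mean of $s$ samples deviates by more than $\sqrt{2\sigma^2 f(s,\delta)/s}$, so the union runs over the sample count $s$ alone, giving $2Kd\sum_{s\ge1}e^{-f(s,\delta)}=\frac{\delta}{2k_1}\sum_{s\ge1}s^{-\alpha}\le\frac{\delta}{2k_1}\big(1+\frac{1}{\alpha-1}\big)\le\frac{\delta}{2}$, and similarly $\frac{\delta}{2}$ for the norm-$2$ part via the $5^d$ covering; this is exactly what the hypothesis $k_1>1+\frac{1}{\alpha-1}$ is calibrated for. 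Your net argument for the $\ell_2$ part (cardinality $5^d$, factor $2$ from the $1/2$-net, threshold $g/4$) is otherwise consistent with the paper's.
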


\begin{proof}
Let $\overline X_s$ denote the sample of a centered $\sigma$-subgaussian variable $X$ based on $s$ \iid observations. 
We have by Theorem~1 of \citet{Garivier_2013},  
\begin{eqnarray*}
	\bP_{\nu}\left( \exists s \leq t: \sqrt{s}\lvert \overline X_s\rvert >  \sqrt{2\sigma^2f(t, \delta)}\right) &\leq& 2e \lceil f(t, \delta) \log(t) \rceil \exp(-f(t, \delta)) \\
	&\leq& 2e f(t, \delta) (\log(t) + 1 )\exp(-f(t, \delta))
\end{eqnarray*}
and by the proof of Proposition 5.9 in \citet{kaufmann:tel-01413183}, it holds that for any $T$
$$ \sum_{t=1}^T 2e f(t, \delta) (\log(t) + 1 )\exp(-f(t, \delta)) \leq T f(T, \delta) (\log(T) + 1) \exp(-f(T/2, \delta))$$
therefore, 
\begin{eqnarray*}
	\bP_{\nu}\left(\exists t\leq T,  \exists s \leq t: \sqrt{s}\lvert \overline X_s\rvert >  \sqrt{2\sigma^2 f(t, \delta)}\right) &\leq&  T f(T, \delta) (\log(T) + 1) \exp(-f(T/2, \delta))  \\
	&= &  \frac{\delta}{4 k_1 Kd} \lp \frac{T}{2}\rp^{1-\alpha} (\log(T) + 1) f(T, \delta).
\end{eqnarray*}
Thus, by union bounds over $K, d$ we derive for vector valued subgaussian random variables, 
$$ \bP_{\nu}\left(\exists t\leq T, \exists i \in [K]:  \|\hat \vmu_{t, i} - \vmu_i \|_\infty >  \sqrt{2\sigma^2 f(t, \delta)/ N_{t, i}}\right) \leq \frac{\delta}{4 k_1} \lp \frac{T}{2}\rp^{1-\alpha}   (\log(T) + 1) f(T, \delta).$$

For the second part of the event $\cE^T$, note that by Lemma~F.4 of \citet{katz-samuels_top_2019}, for any $\bR^d$ centered valued random vector $X$
\begin{eqnarray*}
	\bP_{\nu}( \| X \|_2 > g(t, \delta) ) \leq \bP_{\nu}\left( \exists z \in \cN(\veps) : \frac{1}{1-\veps} X^\T z > g(t, \delta) \right)
\end{eqnarray*}
where $\cN(\veps)$ is a subset of the unit $\bR^d$ sphere and is an $\veps$-net or $\veps$-cover of the unit $\bR^d$ sphere with $\lvert \cN(\veps) \lvert \leq (2/\veps +1)^d$ (cf Lemma~F.5 of \citet{katz-samuels_top_2019}). Then, noting that
$X^\T z$ is $\sigma_u$-norm-subgaussian and taking $\veps=1/2$, we have 
\begin{eqnarray*}
	\bP_{\nu}( \| X \|_2 > U(t, \delta) ) \leq \bP_{\nu}\left( \exists z \in \cN(1/2) :  \lvert X^\T z\lvert > \sqrt{2\sigma_u^2f(t, \delta \cdot d/5^d)} \right).
\end{eqnarray*}
Thus, by what precedes and applying a union bound over $\cN(1/2)$ and $ [K]$ we have 
$$ \bP_{\nu}\left(\exists t\leq T, \exists i \in [K]:  \| \hat\vmu_{t, i} - \vmu_i \|_2 >  \underbrace{\sqrt{2\sigma_u^2f(t, \delta \cdot d/5^d)/N_{t, i}}}_{U_i(t, \delta)} \right) \leq \frac{\delta}{4 k_1} \lp \frac{T}{2}\rp^{1-\alpha}  (\log(T) + 1) f(T, \delta \cdot d /5^d),
 $$ where we used the fact that $\lvert \cN(1/2) \rvert \leq 5^d$. 
 Therefore, 
 $$\bP_{\nu}((\cE^T)^c) \leq \frac{\delta}{4 k_1} \lp \frac{T}{2}\rp^{1-\alpha}   (\log(T) + 1) f(T, \delta) + \frac{\delta}{4 k_1} \lp \frac{T}{2}\rp^{1-\alpha}  (\log(T) + 1) f(T, \delta \cdot d /5^d).$$
 It remains to prove the last claim of the Lemma. By Hoeffding's sub-gaussian concentration (cf e.g. \citet{lattimore_bandit_2020}), we have 
 \begin{eqnarray*}
 	\bP_{\nu}\left(\exists t\geq 1,  \exists i \in [K], \exists c \in [d]: \lvert\hat \mu_{t, i}^c - \mu_i^c\rvert > \beta_i(t, \delta) \right)& \leq& 2 K d \sum_{t \geq 1} \exp(-f(t, \delta)) \\
 	&\leq& \frac{\delta}{2k_1} \sum_{t\geq 1}\frac{1}{t^\alpha}\:,
 \end{eqnarray*}
 then, remark that $\sum_{t\geq 1}\frac{1}{t^\alpha} \leq \int_{1}^\infty t^{-\alpha}  dt = \frac{1}{\alpha - 1}$ and $k_1> 1+ \frac{1}{\alpha - 1}$, so that 
 $$ \bP_{\nu}\left(\exists t\geq 1,  \exists i \in [K], \exists c \in [d]: \lvert \hat \mu_{t, i}^c - \mu_i^c\rvert > \beta_i(t, \delta) \right) \leq \frac \delta{2}.$$
 Showing that 
$$ \bP_{\nu}\left(\exists t\geq 1,  \exists i \in [K], \| \hat \vmu_{t, i} - \vmu_i \|_2 >  U_i(t, \delta) \right) \leq \frac \delta{2}\:, $$ follows similarly by using the covering technique explained above. Combining the two displays we conclude that $\bP_{\nu}((\cE)^c) \leq \delta$. 
\end{proof}
\section{Gap-Based Lower Bound on the Sample Complexity of \ecpsi{} }
\label{sec:appx-lbd}
In this section, we prove a lower bound on the sample complexity of any $\delta$-correct algorithm for the constrained PSI problem. As this is a pure exploration problem with multiple correct answers, it might be tempting to apply Theorem~1 of \citet{degenne_pure_2019}. However, their result does not directly apply to our setting as the authors studied the regime $\delta\rightarrow 0$ and considered answers that are singletons of $[K]$ while we consider partitions of $[K]$. 

We state below the result we will prove in this section. We denote by $\wt\cD$ a family of bandit instances that will become explicit by the end of the section.  

\begin{theorem}
	Let $\poly \ceq \Set{ x \in \bR^d \given \bm Ax \prec b }$ and let $\nu \in \wt \cD^K$ with means $\bm \mu \in \cI^K$. The following holds for any $\delta$-correct algorithm 
	$$ \bE_{\nu}\left[\tau\right] \min_{(S', I') \in \cM} C(\nu, S', I') \frac{g(\delta)}{4}, \text{where } g(\delta)\ceq \kl((1-\delta)/2, \delta) \geq \frac{(1-\delta)}{2}\log(1/\delta) - \log(2).$$
	 	 In particular, 
	 	$$ \liminf_{\delta \to 0} \frac{\bE_{\nu}[\tau]}{\log(1/\delta)} \geq \frac{C^*_\cM(\nu)}{8}.$$ 
\end{theorem}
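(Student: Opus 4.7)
The plan is to apply the classical change-of-measure lower bound technique for pure exploration. We define the class $\wt\cD$ to consist of multivariate Gaussian distributions with identity covariance, restricted to those bandit instances $\bm\mu$ satisfying $\subopt(\bm\mu) \cap \feas(\bm\mu)^\complement = \emptyset$ (no arm is simultaneously suboptimal and infeasible). Under this structural restriction, $\cM(\poly, \bm\mu)$ reduces to the singleton $\{(\subopt(\bm\mu), \feas(\bm\mu)^\complement)\}$; in particular, $\min_{(S',I') \in \cM} C(\nu, S', I') = C(\nu, \subopt, \feas^\complement)$, which is the quantity the lower bound has to match.

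The key ingredient is the transportation lemma: for any $\bm\lambda \in \cI^K$ such that $\cM(\poly, \bm\lambda) \cap \cM(\poly, \bm\mu) = \emptyset$, every $\delta$-correct algorithm satisfies $\sum_i \bE_\nu[N_{\tau, i}] \cdot \tfrac{1}{2}\|\mu_i - \lambda_i\|^2 \geq \kl(1-\delta, \delta)$, where the left-hand side is the total KL divergence between the two Gaussian instances with identity covariance. Because $\cM(\poly, \bm\mu)$ is a singleton, it is enough that $\bm\lambda$ invalidate this unique answer. Restricting to alternatives that coincide with $\bm\mu$ on every coordinate except the $i$-th collapses the left-hand side to $\bE_\nu[N_{\tau, i}] \cdot \tfrac{1}{2}\|\mu_i - \lambda_i\|^2$, giving the per-arm bound $\bE_\nu[N_{\tau, i}] \geq 2 g(\delta)/\|\mu_i - \lambda_i\|^2$.

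For each arm I then construct the cheapest one-coordinate perturbation that invalidates the unique correct answer. For $i \in \opt$, two competing perturbations are available: pushing $\mu_i$ to its closest point in $\poly^\complement$ (L2 cost $\eta_i$, by Lemma~\ref{lem:proj_closed} applied to the closed set $\poly^\complement$), which turns $i$ infeasible; or shifting $\mu_i$ along a single coordinate until it becomes dominated by an arm $j \in \opt$ witnessing $\Delta_i(\opt)$ (L2 cost $\Delta_i$), which removes $i$ from $\opt(\bm\lambda)$. Invoking the transportation lemma once for each alternative and retaining the stronger bound yields $\bE_\nu[N_{\tau,i}] \geq \max\bigl(2\sigma^2/\Delta_i^2,\, 2\sigma_u^2/\eta_i^2\bigr) \cdot g(\delta)/2$. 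For $i \in \subopt$ (which lies in $\feas$ by the structural assumption), a single-coordinate shift of cost $\Delta_i(\opt)$ makes $i$ no longer dominated by its optimal witness; and for $i \in \feas^\complement$, a shift of cost $\eta_i$ pulls $\mu_i$ into $\poly$. Each such perturbation supplies the corresponding gap-based lower bound on $\bE_\nu[N_{\tau, i}]$.

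Summing the per-arm bounds over $i \in [K]$ yields $\bE_\nu[\tau] = \sum_i \bE_\nu[N_{\tau, i}] \geq C(\nu, \subopt, \feas^\complement) \cdot g(\delta)/4$, where the factor $1/4$ absorbs the constant losses incurred by replacing $\kl(1-\delta, \delta)$ with $g(\delta)$ and by combining the two competing alternatives for arms in $\opt$. The asymptotic statement then follows from $g(\delta) \geq (1-\delta)\log(1/\delta)/2 - \log 2$, giving $\liminf_{\delta\to 0} \bE_\nu[\tau]/\log(1/\delta) \geq C^*_\cM(\nu)/8$. The main technical obstacle is the treatment of arms in $\opt$: the transportation lemma has to be invoked twice (once per alternative), and one must argue that both lower bounds hold simultaneously so that their \emph{maximum}, not their minimum, governs the per-arm sample complexity. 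A secondary difficulty is verifying that the single-coordinate shifts constructed for $i \in \subopt$ genuinely change the unique correct answer; this hinges on the structural assumption $\subopt \cap \feas^\complement = \emptyset$, which ensures that any alteration of a single arm's classification translates directly into a change in $\cM$.
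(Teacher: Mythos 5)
Your overall architecture matches the paper's: restrict to a hand-picked Gaussian class, apply the change-of-measure (transportation) lemma to single-arm perturbations, do a case analysis over $I$, $\subopt$ and $\opt$, take the max of the two alternatives for optimal arms (which is indeed automatic, not an obstacle, since both transportation inequalities bound the same $\bE_\nu[N_{\tau,i}]$), and sum. Making $\cM$ a singleton via the restriction $\subopt\cap\feas^\complement=\emptyset$ is a legitimate way to sidestep the multiple-answer issue (the paper instead constructs the "most likely" answer, paying $\kl((1-\delta)/2,\delta)$ instead of $\kl(1-\delta,\delta)$; its final class $\wt\cD$ in fact also has a unique answer, so your shortcut is in the same spirit).

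The genuine gap is in the per-arm perturbations: the single-coordinate shifts you propose do not invalidate the unique correct answer, so the claimed bounds in terms of the sup-norm gaps $\Delta_i$ do not follow. For $i\in\subopt$, making $i$ non-dominated only by the witness $j^\star$ achieving $\Delta_i=\max_{j\in\opt}\m(i,j)$ leaves $i$ dominated by every other optimal arm in $\Omega_i\ceq\{j\in\opt:\mu_i\prec\mu_j\}$, so $\opt(\bm\lambda)=\opt$, $\subopt(\bm\lambda)=\subopt$ and $(\subopt,\feas^\complement)$ remains a correct answer; one must escape \emph{all} dominating arms, which is exactly the paper's vector $s_i$ with Euclidean cost $\|s_i\|_2$, up to $\sqrt{\min(\lvert\Omega_i\rvert,d)}$ larger than $\Delta_i$. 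Similarly, for $i\in\opt$, becoming dominated by a neighbour $j$ requires subtracting $(\mu_i-\mu_j)_+$ in \emph{every} coordinate where $\mu_i^c>\mu_j^c$, at Euclidean cost $\|(\mu_i-\mu_j)_+\|_2$, which can exceed $\M(i,j)$ by a factor $\sqrt d$; a single-coordinate move of size $\M(i,j)$ does not produce domination. Consequently your factor $1/4$ bookkeeping is unsupported: within your class $\wt\cD$ the per-arm bounds can be off by a factor growing with $d$ (or with $\lvert\Omega_i\rvert$), not a universal constant. The paper closes exactly this $\ell_2$-versus-$\ell_\infty$ discrepancy by restricting $\wt\cD$ further (means varying along only two coordinates, each suboptimal arm dominated by a single optimal arm), for which the Euclidean transport costs equal the gaps up to a factor $2$, whence the constants $1/4$ and $1/8$; your proof needs an analogous restriction or an explicit dimension-dependent loss. (A secondary point: you never address the $\delta_i^-$ component of $\Delta_i(\opt\cup S)$ for $i\in\opt$, though the paper itself only treats this informally via Remark 18 of Auer et al.)
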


 In what follows, we consider $\cA_\delta$, a $\delta$-correct algorithm (cf Definition~\ref{def:delta-corr}) and $\bm\vmu$, a bandit instance, fixed and unknown to the algorithm. We assume that $\tau < \infty$ almost surely (otherwise the lower bound trivially holds). Let $\W O, \W S, \W I$ be the recommendation of the algorithm after $\tau$ (random) steps and $\cF_\tau$ the natural filtration of the stochastic process $(A_t, X_t)$ stopped at round $\tau$. Since $\cA$ is $\delta$-correct, the following statement holds 
\begin{equation}
\label{eq:delta-corr}
	\bP_{\nu}\left( \widehat O =  \opt \text{ and }(\widehat S, \widehat I) \in \cM \right) \geq 1-\delta. 
\end{equation} 
We define 
$$ \widetilde S \ceq \Set{ i\in [K] \given \vmu_i \in \poly \text{ and } \exists j \in \opt \given \vmu_i \prec \vmu_j } $$ 
the set of arms that are feasible but sub-optimal.  Note that alternatively, $\widetilde S = \subopt \cap \feas$. Similarly, we introduce 
$$ \widetilde I \ceq \Set{ i \in [K]\given \vmu_i\notin \poly \text{ and } \nexists j \in \opt \given \vmu_i \prec \vmu_j}, $$
the set of arms that are unsafe but not uniformly worse than any other optimal arm, which rewrites as 
$\widetilde I = \feas^c \cap \subopt^c$. In particular, $\widetilde S \cap \widetilde I = \emptyset$ and  it can be observed that for any valid answer $(S, I) \C \cM$, 
$\widetilde S \C S$ and $\widetilde I \C I$. Thus, for any $\delta$-correct algorithm it holds that 
\begin{equation}
\label{eq-xx2}
	\forall i \in \widetilde I,\;  \bP_{\nu}( i \in \W I ) \geq 1-\delta \quad\text{and}\quad  \forall i \in \widetilde S,\;  \bP_{\nu}( i \in \W S ) \geq 1-\delta.
\end{equation} 
Then by definition, every arm of $\cB \ceq (\opt)^c \cap (\widetilde S \cup \widetilde I)^c $ can be (correctly) classified either in $\W S$ or $\W I$, resulting in $2^{\lvert \cB \lvert}$ correct answers, which can be up to $2^{K-1}$ correct answers. 

\subsection{A high probability correct answer for $\cA_\delta$} 
We describe a particular correct answer under bandit $\nu$ with mean $\bm\mu$ that is very likely to be recommended by $\cA_\delta$. 

 We build on $(\widetilde S, \widetilde I)$ to construct a partition with $(S, I)$ for which a property similar to \myeqref{eq-xx2} holds for every arm of $(S, I)$.  To construct such instance, note that as $\cA_\delta$ is $\delta$-correct, for any $i\in \cB = (\opt)^c \cap (\widetilde S \cup \widetilde I)^c $, it holds that 
$ \bP_{\nu}( i \in \W S \cup \W I) \geq 1-\delta$, which yields 
\begin{equation}
\label{eq:xw-prob}
\max \left(\bP_{\nu}(i\in \W S), \bP_{\nu}(i \in \W I) \right) \geq \frac{1-\delta}{2}.	
\end{equation}
Then, we define $S, I$ as 
$$S\ceq \widetilde S \cup \left\{ i \in C: \bP_{\nu}(i\in \W S) > \bP_{\nu}(i \in \W I) \right\}$$  and we similarly define the set 
$$ I \ceq \widetilde I \cup \lb  i \in C \mid \bP_{\nu}(i \in \W S) \leq \bP_{\nu}(i \in \W I) \rb .$$
By construction, $(S, I)$ satisfies $S \cap I = \emptyset$ and $S\cup I = (\opt)^c$, i.e $(S, I)$ is a valid answer. Moreover, one can verify that 
\begin{enumerate}[\it a)]
	\item $\bP_{\nu}(\W O = \opt) \geq 1-\delta$
	\item for all $i\in S$, $\bP_{\nu}(i\in \W S) \geq \frac{1-\delta}{2}$ and 
	\item for all $i\in I$, $\bP_{\nu}(i \in \W I) \geq \frac{1-\delta}{2}$;
\end{enumerate}
that is this particular $(\opt, S, I)$ is a  likely response for the algorithm $\cA_\delta$. Using a change of distribution lemma will allow to derive a lower bound on the number of pulls of each arm for $\cA_\delta$ to identify $(S, I)$ as a correct answer. In the sequel, we restrict ourselves to bandit with multi-variate normal arms and identity covariance. Therefore, each arm will be identified with its mean vector. 

\begin{lemma}[\cite{kaufmann_complexity_2014}]
\label{lem:change-distrib}
	For all bandit models $\nu, \nu'$ and for any $\cF_\tau$-measurable event $\cE$, 
	$$\sum_{i=1}^K \bE_{\nu}[N_{\tau, i}] \KL(\nu_i, \nu'_i) \geq \kl(\bP_{\nu}(\cE), \bP_{\nu'}(\cE)),$$
	where $\KL(\nu_i, \nu'_i)$ is the Kullback-Leibler divergence between distributions $\nu_i, \nu'_i$ and $\kl$ is the relative binary entropy. 
\end{lemma}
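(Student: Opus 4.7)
The plan is to apply the change-of-measure inequality (Lemma~\ref{lem:change-distrib}) arm by arm, using the ``likely correct answer'' $(\opt, S, I) \in \cM$ built at the start of this appendix. I take $\wt\cD$ to be a class of multivariate Gaussian bandits with mean space $\cI = \bR^d$ and fixed covariance, so that each single-arm KL reduces to a squared $\ell_2$ distance between the means (up to scalings that absorb $\sigma, \sigma_u$). For each arm $i$, I will exhibit an alternative bandit $\nu^{(i)}$ differing from $\nu$ only in the mean of $i$ and under which $(\opt, S, I)$ is no longer a valid answer; the event $\cE_i$ attached to $i$'s role in the partition---which has probability at least $(1-\delta)/2$ under $\nu$ by the construction of the likely answer, and at most $\delta$ under $\nu^{(i)}$ by $\delta$-correctness---will then turn Lemma~\ref{lem:change-distrib} into a lower bound on $\bE_\nu[N_{\tau, i}]$.

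The three sub-cases mirror the three types of summands in $C(\nu, S, I)$. For $i \in I$, I push $\mu_i$ to its $\ell_2$-projection onto $\closure(\poly)$ (existence given by Lemma~\ref{lem:proj_closed}) and slightly inside, with perturbation of size at most $2\eta_i + \epsilon$, and pick $\cE_i = \{i \in \W I\}$. For $i \in S$, I perturb $\mu_i$ by at most $2\Delta_i(\opt \cup S) + \epsilon$ so that $i$ becomes Pareto-optimal among the feasible arms of $\nu^{(i)}$, forcing $i \notin \subopt(\nu^{(i)})$, and pick $\cE_i = \{i \in \W S\}$. For $i \in \opt$, I take the cheaper of two moves, either pushing $\mu_i$ just outside $\poly$ (cost $\leq 2\eta_i + \epsilon$) or making $\mu_i$ strictly dominated by another $\opt$ arm (cost $\leq 2\Delta_i(\opt \cup S) + \epsilon$), and use $\cE_i = \{i \in \W O\}$. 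In each case, letting $\epsilon \to 0$, Lemma~\ref{lem:change-distrib} delivers
\begin{equation*}
\bE_\nu[N_{\tau, i}] \geq \frac{\kl\bigl((1-\delta)/2, \delta\bigr)}{\KL(\nu_i, \nu_i^{(i)})} \geq \frac{g(\delta)}{4}\cdot(\text{corresponding summand of } C(\nu, S, I)),
\end{equation*}
where the factor $1/4$ stems from the two-fold slack in the perturbation size. Summing on $i$ gives $\bE_\nu[\tau] \geq (g(\delta)/4)\,C(\nu, S, I) \geq (g(\delta)/4)\,C^*_\cM(\nu)$, the last step being a trivial minimisation over $\cM$. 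The asymptotic claim $\liminf_{\delta \to 0}\bE_\nu[\tau]/\log(1/\delta) \geq C^*_\cM(\nu)/8$ then follows from $g(\delta)/\log(1/\delta) \to 1/2$.

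The main obstacle is the Pareto perturbation for $i \in S$ and for the dominated sub-case of $i \in \opt$: a shift of size $\Delta_i = \max_{j \in \opt}\m(i,j)$ in the single coordinate achieving the minimum in $\m(i, j^\star)$ breaks (or creates) domination against the worst arm $j^\star$, but may leave the dominance relation with respect to the other arms of $\opt$ unchanged. I will establish that an $\ell_2$ budget of at most $2\Delta_i$ is always enough to produce the intended change in the Pareto structure, exploiting the fact that $\Delta_i$ upper-bounds $\m(i,j)$ for every $j \in \opt$ by definition; this doubling is the source of the extra factor of $2$ in the perturbation size and hence of the constant $1/4$ in the final bound.
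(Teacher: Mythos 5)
Your proposal does not prove the statement at hand. Lemma~\ref{lem:change-distrib} is the change-of-measure inequality itself: for any two bandit models $\nu,\nu'$ and any $\cF_\tau$-measurable event $\cE$, one has $\sum_{i=1}^K \bE_{\nu}[N_{\tau,i}]\KL(\nu_i,\nu'_i)\geq \kl(\bP_{\nu}(\cE),\bP_{\nu'}(\cE))$. What you wrote is instead a (sketch of a) proof of the downstream gap-based lower bound of Appendix~\ref{sec:appx-lbd}, and your very first sentence announces that you will \emph{apply} Lemma~\ref{lem:change-distrib} arm by arm --- i.e., you are using the statement as a black box rather than establishing it. Nothing in your text addresses why the expected log-likelihood ratio at the stopping time equals $\sum_i \bE_{\nu}[N_{\tau,i}]\KL(\nu_i,\nu'_i)$, nor why this quantity dominates $\kl(\bP_{\nu}(\cE),\bP_{\nu'}(\cE))$. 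A proof of the lemma would have to introduce the log-likelihood ratio $L_\tau=\sum_{t\leq\tau}\log\frac{d\nu_{k_t}}{d\nu'_{k_t}}(X_t)$, invoke a Wald-type identity for the stopped process to get $\bE_{\nu}[L_\tau]=\sum_i\bE_{\nu}[N_{\tau,i}]\KL(\nu_i,\nu'_i)$, and then apply the data-processing inequality for relative entropy (or a direct Jensen argument conditioning on $\cE$ and $\cE^{\complement}$) to lower bound $\bE_{\nu}[L_\tau]$ by the binary relative entropy of the two probabilities of $\cE$. None of these ingredients appear in your proposal.

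For what it is worth, the paper itself does not reprove this lemma either: it is imported verbatim with a citation to \citet{kaufmann_complexity_2014}, where the Wald-plus-data-processing argument is carried out. The material you actually wrote corresponds to the paper's Section~\ref{sec:change_distrib} (the construction of the likely answer $(S,I)$, the three families of single-arm perturbations, and the summation over arms), and as a blind reconstruction of \emph{that} argument it is broadly in the right spirit, though your accounting of the constants differs from the paper's (the paper obtains perturbations of size $\eta_i$, $\|s_i\|$, or $\min(\wt\delta_i^+,\eta_i)$ directly, and the residual constant comes from comparing the $\ell_2$- and $\ell_\infty$-based gaps on a restricted class of instances, not from a uniform doubling of the perturbation radius). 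But since the task was to prove the change-of-measure lemma, the attempt misses its target entirely.
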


\subsection{Change of distribution} 
\label{sec:change_distrib}
The idea now is to apply the classical change of distribution lemma to different well chosen instances of bandit. We define instances with $\nu^{1},\dots, \nu^K$ with (matrix of) means (resp.) $\bm\lambda^1, \dots, \bm\lambda^K$ such that $\bm\lambda^i \ceq (\lambda_k^i)_{1\leq k \leq K}$ and $\bm\lambda^i$ differs from $\bm\mu$ only in the mean of arm $i$ i.e., the bandits $\nu$ and $\nu^i$ are identical except $\bE_{X\sim \nu_i^i}[X] = \lambda^i_i \neq \vmu_i = \bE_{X \sim \nu_i}[X]$. For such instances, applying Lemma~\ref{lem:change-distrib} yields 
\begin{equation} 
\label{eq-xx4}
	\bE_{\nu}[N_{\tau, i}] \KL(\nu_i, \nu_i^i) \geq \sup_{\cE \in \cF_\tau} \kl(\bP_{\nu}(\cE), \bP_{\nu^i}(\cE))
\end{equation}
where in our case, $\KL(\vmu_i, \lambda_i^i)$ is the Kullback-Leibler divergence between multivariate normals of means $\nu_i, \nu_i^i$ and identity covariance.  $\kl(x, y) \ceq x\log(x/y) + (1-x)\log((1-x)/(1-y))$, and noting that $x \mapsto \kl(x,y)$ is increasing on $\{x>y\}$ and decreasing on $\{ x < y\}$, making an event likely under $\nu$ unlikely under $\nu^i$ will increase the RHS of \myeqref{eq-xx4}. Going back to the instance $(S, I)$ constructed earlier, we introduce the event 
$$ \cE_i \ceq \begin{cases}
	\{ i \in \W S \} & \text{ if } i \in S \\
	\{ i \in \W I \} & \text{ if } i \in I, \\
	\{ \W O =  \opt \} & \text{ if } i \in \opt.
\end{cases}$$
We recall that $(\cO^\star, S, I)$ forms a partition of $[K]$. From the definition of $(S, I)$ we have shown that for any $\delta$-correct algorithm, 
$$ \bP_{\nu}(\cE_i) \geq \begin{cases}
	 1-\delta &\text{ if } i \in \cO \\
	 \frac{1-\delta}{2} &\text{ else,}
 \end{cases}$$
 that is such an event is very likely under bandit $\vmu$. We will choose each instance $\lambda^i$ such that $\cE_i$ is unlikely under bandit $\lambda^i$, i.e we assume that $\lambda^i$ is chosen such as for all $i$
 \begin{equation}
 \bP_{\nu^i}(\cE_i) \leq \delta. 
 \end{equation} 
 We now discuss the choice of $\lambda^i$ depending on $i$ to ensure such property. 
 Thus, assuming this property holds, we have for all arm $i$
 $$ \bE_{\nu}[N_{\tau, i}]\KL(\nu_i, \nu_i^i) \geq \kl\left(\frac{1- \delta}{2}, \delta\right)\:,$$
 which using the KL formula for multivariate normal yields 
 $$ \bE_{\nu}[N_{\tau, i}] \geq \frac{2}{\|\vmu_i - \lambda_i^i\|^2} \underbrace{\kl\left(\frac{1- \delta}{2}, \delta\right)}_{g(\delta)}.$$
 
 \paragraph{Case 1: $i \in I$ an unsafe arm.} In this case, $\vmu_i\notin \poly$. Since $\poly$ is a closed convex set,  by Hilbert projection onto closed convex sets, there exists  (cf Lemma~\ref{lem:proj_closed}) $z_i\in \poly$ such that $\| \vmu_i - z_i \|_2 = \dist(\vmu_i, \poly )$. We now define $\lambda_i^i = z_i$. In this bandit $\lambda_i$, $i$ is now a safe arm so, as $\cA_\delta$ is a $\delta$-correct algorithm it holds that 
 $$ \bP_{\nu^i}(i \in \W I) \leq \delta \quad \text{i.e} \quad \bP_{\nu^i}(\cE_i) \leq \delta,$$
  which by further noting that $\|\vmu_i - \lambda_i^i\|_2 = \dist(\vmu_i, \poly) = \eta_i$ results in 
 $$\bE_{\nu}[N_{\tau, i}] \geq \frac{2}{\eta_i^2} g(\delta).$$
 
 \paragraph{Case 2: $i\in S$ a sub-optimal arm.} 
 Note that in this case, there exists a subset $\Omega_i \C  \opt$ such that $\vmu_i \prec \vmu_j$ for all $j\in \Omega_i$: it is the set of optimal arms that dominate $\vmu_i$. For any $j\in \Omega_i$ we define 
 $$ c_j = \argmin_{c} \left[ \vmu_j(c) - \vmu_i(c)\right]$$
 the coordinate for which the margin between $i$ and $j$ is the lowest. In particular, $\m(i, j) = \vmu_j(c_j) - \vmu_i(c_j)$. Moreover, for any $\veps>0$, the vector defined by 
 $$ \tilde \vmu_i(c) = \begin{cases}
 	\vmu_i(c) + (1+\veps) \m(i,j) &\text{ if } c = c_j\\
 	\vmu_i(c) & \text{ else}
 \end{cases}$$ 
 is not dominated by $\vmu_j$. Now let us define the vector $s_i$ such that for any $c \in [d]$, 
 \begin{equation}
 \label{eq:def_si}
 	s_i(c) \ceq \max_{j \in \Omega_i: c_j = c} \m(i, j)
 \end{equation}
 with the convention that $\max_\emptyset = 0$. We argue that defining $$\lambda^i_i \ceq \vmu_i + (1+\veps) s_i,$$ we have 
 $$\bP_{\nu^i}(i \in \W S) \leq \delta.$$ 
 To see this, note that as only the mean of arm $i$ has changed between bandits $\nu$ and $\nu^i$, the set of feasible and infeasible arms of $[K] \backslash \{ i\}$ under $\nu$ and  $\nu^i$ are the same. Moreover, the change of distribution ensures that under the bandit $\nu^i$, $i$ is not dominated by any feasible arm. 
 Therefore, $\bP_{\nu^i}(i\in \hat S) \leq \delta$.  So applying what precedes and letting $\veps \rightarrow 0$ yield 
 \begin{equation} 
 \label{eq:gap_sub}
 	\bE_{\nu}[N_{\tau, i}] \geq \frac{2}{\| s_i\|^2} g(\delta). 
 \end{equation}
 
 \paragraph{Case 3: $i \in \opt$ the optimal set.} We recall that arms in $\opt$ are both feasible and Pareto optimal among feasible arms. We then build alternative instances where arm $i\in\opt$ is either made infeasible or sub-optimal among feasible arms under bandit $\nu^i$. 
 
 Letting $\eta_i = \dist(\mu_i, \poly^c) =\dist(\vmu_i, \partial \poly)$ (as $\mu_i \in \poly$, cf \eqref{eq:eq-dist-poly}) and since $\partial \poly$ is a closed,   
 using Lemma~F.2 of \citet{katz-samuels_top_2019} (recalled in Lemma~\ref{lem:proj_closed}), there exists $z_i \in \partial \poly$ such that $\dist(\vmu_i, \partial \poly) = \|\vmu_i - z_i\|_2 = \eta_i$. 
 Then, note that, as $z_i \notin \poly^\circ$ (the interior of $\poly$), for all $\veps>0, \exists z_i^\veps \in \poly^c$ such that $\|z_i^\veps - z_i\| \leq \veps$. Then, letting  $\lambda_i^i = z_i^\veps$ (for some $\veps$ fixed). It comes that $i$ is infeasible under $\lambda^i$, so that $\bP_{\nu^i}(\cE_i) \leq \delta$. Therefore, we have 
 \begin{eqnarray*}
 	\bE_{\nu}[N_{\tau, i}] &\geq& \frac{2}{\| \vmu_i - z_i^\veps \|^2}g(\delta)\\
 	&=& \frac{2}{\| \vmu_i - z_i +(z_i - z_i^\veps) \|^2}g(\delta).
 \end{eqnarray*}
Letting $\veps \rightarrow 0$ yields 
 $$ \bE_{\nu}[N_{\tau, i}] \geq \frac{2}{\| \vmu_i - z_i\|^2}g(\delta) = \frac{2}{\eta _i ^2}g(\delta).$$ 
 
 Alternative change of distribution will prove the dependency on the other quantities involved in the gaps. Assume the gap of $i$ is attained for an arm $j\in  \opt$ with $\Delta_i = \M(i, j)$. In this case, $j$ is close to dominating $i$ so that decreasing $i$ will make it dominated by $j$, as result, $i$ becomes either feasible but dominated or non-feasible, i.e $i\notin \opt$ in both cases. We define in this case 
 $$ \lambda_i^i \ceq \vmu_i - (1+\veps)(\vmu_i - \vmu_j)_+$$
 where $(x)_+$ is defined component-wise for $x\in \bR^d$. Then it can be easily checked that under bandit $\nu^i$, arm $i$ is now dominated by $j$ and $j$ is still feasible. Therefore, $\bP_{\nu^i}(\cE_i) \leq \bP_{\nu^i}(i \in \W O) \leq \delta$. So that proceeding as before we prove that 
 \begin{equation}
 \label{eq:Mijdim2}
 	\bE_{\nu}[N_{\tau, i}] \geq \frac{2}{\| (\vmu_i - \vmu_j)_+\|^2}g(\delta).
 \end{equation}
 Similarly, if we had $\Delta_i = \M(j, i)$ for some $j \in  \opt$. Then,  increasing $i$ will make it dominate $j$. Thus, defining 
 $$ \lambda_i^i \ceq \vmu_i + (1+\veps) (\vmu_j - \vmu_i)_+,$$
 on can observe that arm $j$ is now dominated by $i$ under bandit $\nu^i$. Note that $j$ is still feasible in bandit $\nu^i$ (since its mean has not changed). So two things may occur. Either $i$ is still feasible under $\nu^i$, in which case $j$ is feasible and sub-optimal in bandit $\nu^i$ or $i$ is now infeasible under bandit $\nu^i$ so that $i$ is no longer an optimal arm. In both cases, the optimal set has changed, so that $\bP_{\nu^i}(\cE_i) \leq \delta$. Reasoning as in the previous cases, 
 \begin{equation}
 \label{eq:Mjidim2}
 	\bE_{\nu}[N_{\tau, i}] \geq \frac{2}{\| (\vmu_j - \vmu_i)_+\|^2}g(\delta).
 \end{equation}
 Therefore, by defining the quantity we have proved that 
 
 \begin{equation*}
 	\bE_{\nu}[N_{\tau, i}] \geq \frac{2}{\min(\widetilde\delta_i^+, \eta_i)^2}g(\delta), 
 \end{equation*}
 where
 \begin{equation}
 \label{eq:def-del-norm2}
 	 \widetilde\delta_i^+ = \min_{j\in  \opt \backslash\{i\}} \min(\| (\vmu_j - \vmu_i)_+\|_2, \| (\vmu_i - \vmu_j)_+\|_2).
 \end{equation}
 Note that this is a strictly positive quantity as $i\in  \opt$ and from the definition of non-dominance for all $j\in  \opt$. 
  \subsection{Relation to the gaps in PSI}
  We recall the expressions of the PSI gaps (w.r.t $ \opt$) as introduced in \citet{auer_pareto_2016}. For a sub-optimal arm $i\in S$, 
\begin{equation}
\label{eq:def-gap-sub} 
	\Delta_i \ceq \Delta_i^* \ceq  \max_{j \in  \opt} \m(i,j), 
\end{equation} 
For an optimal arm $i \in \opt$, 
\begin{equation*}
    \Delta_i(S\cup  \opt) \ceq  
    \min (\delta_i^+, \delta_i^-(S))
\end{equation*}
where  
\begin{equation*}
	\delta_i^+ \ceq \min_{j\in  \opt \setminus \{i\}} \min(\M(i,j), \M(j,i))\; \text{ and }
	\delta_i^-(S) \ceq \min_{j\in S}[(\M(j,i))^+ + \Delta_{j}], 
\end{equation*} 
and $\M(i,j) = \max_{c\in [d]} [\mu_i(c) - \mu_j(c)]$.
Since $ \opt$ is fixed, we ease notation and write $\Delta_i(S\cup  \opt) = \Delta_i(S)$ for all $i\in  \opt$. 

As justified in Remark 18 of \citet{auer_pareto_2016}, $\delta^-_i(S)$, can be compensated by both $\delta_i^+(S)$ and $\Delta_j^*$ for some arms $j\in \subopt$ (in our case). So we focus on matching the gaps in $\delta_i^+$ for $i\in \opt $. 
Moreover, from the definition of $\M(i,j)$  for $i,j\in  \opt$ (and from the definition of Pareto dominance), it can be easily checked that for any $i\in  \opt$, 
\begin{equation}
\label{eq:eq:def-del-norminf}
	\delta_i^+ \ceq \min_{j\in  \opt \backslash\{i\}} \min(\| (\vmu_j - \vmu_i)_+\|_\infty, \| (\vmu_i - \vmu_j)_+\|_\infty).
\end{equation}
Intuitively, (in the case of independent objectives), \eqref{eq:def-del-norm2} suggests to measure the distance between Pareto optimal arms in Euclidean whereas \eqref{eq:eq:def-del-norminf} measures in $\sup$ norm. As these quantities are computed over Pareto optimal arms, it can be checked that for $d=2$, both terms are equal. 
In the worst case, the discrepancy between $\delta_i^+$ and $\widetilde \delta_i^+$ is at most $\sqrt{d}$. However, as we discuss later, there are classes of PSI/constrained PSI instances where these quantities are equal. 
  
  Similarly, one can check that 
for a sub-optimal arm, the discrepancy between $\Delta_i^\star$ and 
$\| s_i\|^2$ \eqref{eq:def_si} is at most $\min(\sqrt{\lvert \Omega_i\rvert}, \sqrt{d})$, where we recall that 
$$ \Omega_i \ceq \{j \in  \opt :  \vmu_i \prec \vmu_j \}.$$
Thus, in instances where $\Omega_i$ is singleton, that is each of \subopt{} is dominated by a unique feasible optimal arm, $\|s_i\|_2$ and $\Delta_i^\star$ are equal. To summarize, we have been justifying that under the following,
\begin{equation}
\label{eq:equiV}
\forall i\in  \opt, \widetilde \delta_i^+ =  \delta^+_i \quad \text{and}\quad \forall i\in \subopt, \|s_i\|_2 =  \Delta_i^*	
\end{equation}
  combined with the results of sub-section~\ref{sec:change_distrib} we would have 
\begin{equation}
\label{eq:final}
\bE_{\nu}[N_{\tau, i}]/ (2 g(\delta)) \gtrsim
	\begin{cases}
		\frac{1}{\eta_i^2} & \text{if } i \in I \\
		\frac{1}{(\Delta_i^*)^2} &\text{if } i \in S \\
		\frac{1}{\min(\eta_i, \Delta_i(S))^2} & \text{if } i \in  \opt. 
	\end{cases}
\end{equation}
Then, as  $\bE_{\nu}[\tau] =  \sum_{i=1}^K \bE_{\nu}[N_{\tau, i}]$,  
$$ \bE_{\nu}[\tau] \geq C(\nu, S, I) g(\delta).$$
where we recall that $$ C(\nu, S, I) = \sum_{i\in  \opt} \frac{2}{\min(\Delta_i(\opt \cup S),\eta_i)^2} + \sum_{i \in S} \frac{2}{(\Delta_i(\opt\cup S))^2} + \sum_{i\in I} \frac{2}{\eta_i^2} $$
Finally observing that $(S, I)$ is an arbitrary correct response yields 
\begin{equation}
\label{eq:lbd-dd}
	 \bE_{\nu}[\tau] \gtrsim \min_{(S', I') \in \cM} C(\nu, S', I')g(\delta)\:,
\end{equation}
and observe that $g(\delta) = \kl((1-\delta)/2, \delta) \geq \frac{(1-\delta)}{2}\log(1/\delta) - \log(2)$ (as $\kl(x, y)\geq x\log(1/y) - \log(2)$). 
\subsection{Conclusion}
We now give examples of families of instances constructed to satisfy \eqref{eq:equiV}. This includes, e.g, instances where the means vectors are close in many objectives except a few (2 or 3). To give more intuition, we build some examples below. We let $\wt \cD$ be the family of multivariate normals with identity covariance and whose mean vector $\mu_1, \dots, \mu_K$ is as below. 

Let $\alpha>0$ and let $\vmu_0 \in \poly$, and $d\geq 1$ (arbitrary). Define
$ \vmu_1 = \vmu_0$  and for any $1<i\leq K$, $\vmu_k^1 = \vmu_{k-1}^1 + \alpha$ and $\vmu_k^2 = \vmu_{k-1}^2 - \alpha$ and $\vmu_k^c = \vmu_0^c$ for all $c\notin \{1,2\}$. Such sequences of vectors are non-dominated by each other and direct computation yields for any $i\in [K]$
$$ \widetilde \delta_i^+ =  2 \alpha \quad \text{and} \quad \delta^+_i = \alpha.$$
Thus, for $i\in  \opt$ the discrepancy between $\widetilde \delta_i^+$ and $\delta_i^+$ is 2, irrespective of the dimension. As for such instances, for any $\poly$, we either have $i\in \opt$ or $\vmu_i\notin \poly$, so, \eqref{eq:lbd-dd} is satisfied with an extra multiplicative constant 1/4. 

Put together,  on this set of instances we have 
\begin{equation}
\label{eq:lbd-dd-fi}
	 \bE_{\nu}[\tau] \gtrsim \min_{(S', I') \in \cM} C(\nu, S', I')\frac{g(\delta)} 4,  
\end{equation}
which gives the claimed result as  
$$\liminf_{\delta \to 0} \frac{\bE_{\nu}[\tau]}{\log(1/\delta)} \geq \frac{C_{\cM}^*(\nu)}{8}.$$ 
\section{Racing Algorithm for Constrained Pareto Set Identification}
\label{sec:racing}

We describe the pseudocode of the racing algorithm, and we comment on its performance. As for racing algorithms, an active set is initialized with all the arms, and the algorithm uses confidence bounds to sequentially discard some arms. The reasons to discard an arm are : 
\begin{enumerate}[1)]
	\item it is confidently infeasible 
	\item it confidently dominated by another arm that itself is confidently feasible 
	\item it is confidently optimal (in the Pareto Set and feasible) and with high confidence does not dominate another active arm. 
\end{enumerate}
Three sets $\wh O, \wh S, \wh I$ are initialized empty, and when discarded, depending on the reason for discarding, it is added to one of them.  
\begin{algorithm}[htb]
\caption{Racing for constrained Pareto Set Learning}
\label{alg:racing_psi}
\begin{algorithmic}
\STATE \emph{Initialize}:  $A_1 \gets \emptyset$, $\wh O \gets \emptyset, \wh S \gets \emptyset, \wh I \gets \emptyset$
 \FOR{$t = 1, 2,\dots$}
  \IF{$A_t = \emptyset$}
  \STATE {\bf break} and {\bf return} $(\wh O, \wh S, \wh I)$
  \ENDIF 
  \STATE Pull each active arm once  
  \STATE Compute $\gamma_k(t)$ for each active  
  \STATE $\wh I \gets \wh I \cup \Set{i \in A_t \given \hat \mu_{t,i} \notin \poly \text{ and } \gamma_i(t) > 0 } $
  \STATE $\wh S \gets \wh S \cup \Set{ i \in A_t \given \exists j \in A_t \text{ s.t. } \hat \mu_{t, j} \in \poly, \gamma_j(t)>0 \text{ and } \mh^-(i, j; t)>0}$ 
  \STATE $p^1_t \gets \Set{ i \in A_t \given  \forall j \in A_t \backslash(\wh S \cup \wh I) , \Mh^-(i,j;t)>0 }$
  \STATE $p^2_t \gets \Set{ i \in p^1_t \given \forall j \in A_t \backslash(\wh S \cup \wh I), \Mh^-(j,i;t) > 0 }$
   \STATE $\wh O \gets \wh O \cup \Set{i \in p^2_t \given i \hat \mu_{t, i} \in \poly \text{ and } \gamma_i(t) > 0 }$
 \STATE $A_{t+1} \gets A_t \backslash(\wh O\cup \wh S\cup  \wh I)$
  \STATE  $t\gets t+1$
 \ENDFOR
 \end{algorithmic}
\end{algorithm}

The proof of the correctness of this algorithm follows the same lines as for e-cAPE using concentration inequalities on the quantities  $\m, \M$.  

\paragraph{Limitations of Racing for \cpsi{}} One major limitation of this racing algorithm for constrained PSI is linked to the second reason for discarding an arm (as listed above). Indeed, assume $k \in \feas \backslash \opt$. As the algorithm is $\delta$-correct, on most runs, $k$ will be added to $\wh S$. So before discarding, we ensure that it is dominated by an arm that itself is already known feasible. This is sub-optimal for an algorithm for constrained PSI. To see this, let us build an instance such that $\feas = [K]$ and $\opt= \{ 1\}$ (w.l.o.g). Further assume that $\pto([K]\backslash \{1\}, \bm \mu) = [K]\backslash \{1\}$, i.e., the remaining arms are not dominated by each other. In this instance, all the arms are feasible, and only $1$ is Pareto optimal.  Therefore, before correctly removing any arm $k \neq 1$, the algorithm will ensure that it is dominated by $1$ and $1$ is confidently feasible. Thus, all the arms would be pulled at least until $1$ appears confidently feasible. The sample complexity of the algorithm on this instance will thus scale with $$(2 K\sigma_u^2/\eta_1^2) + \sum_{i\neq 1} \frac{2\sigma^2}{\Delta_i^2([K])} $$ while the sample complexity of cAPE will scale with $$ \frac{2\sigma^2}{\Delta_i^2([K])}  + \frac{2\sigma_u^2}{\eta_1^2}.$$
This observation is further confirmed empirically in Appendix~\ref{sec:impl}, where both algorithms are benchmarked on similar instances. 
\section{Additional Experiments}
\label{sec:impl}

In this section, we present the results of some additional experiments and further discuss the computational complexity of our algorithms. All the algorithms were mainly implemented in \texttt{python 3.10} with some functions in \texttt{cython} for faster execution. 
The experiments were run on an ARM64 8GB RAM/8 core/256GB disk storage computer.

\begin{figure}[b]
\centering
		\includegraphics[width=0.3\linewidth]{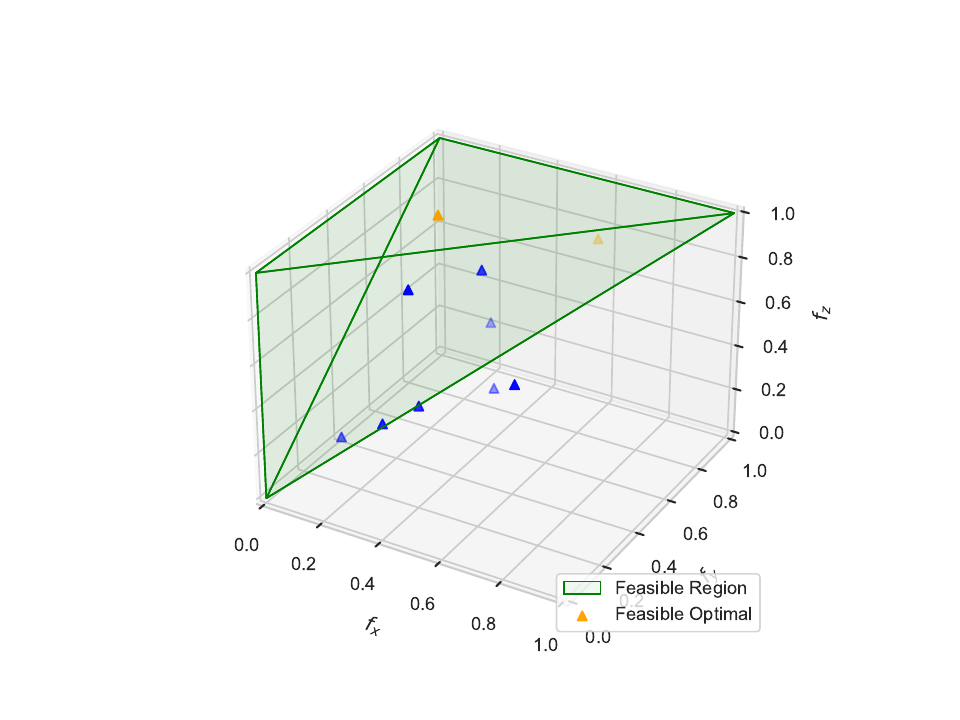}
		\includegraphics[width=0.3\linewidth]{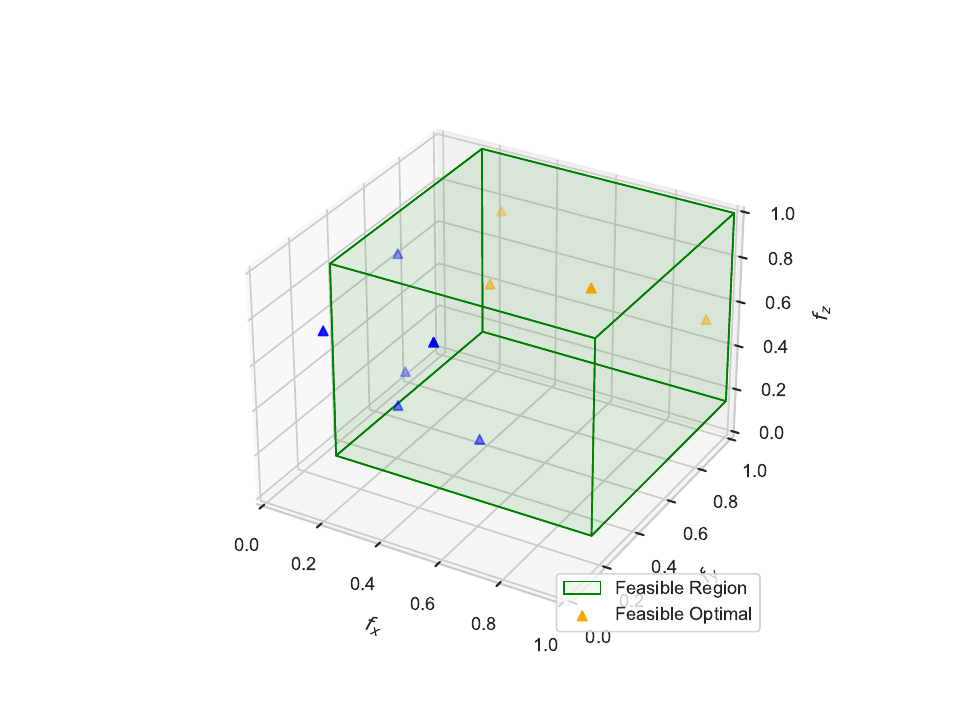}
		\includegraphics[width=0.32\linewidth]{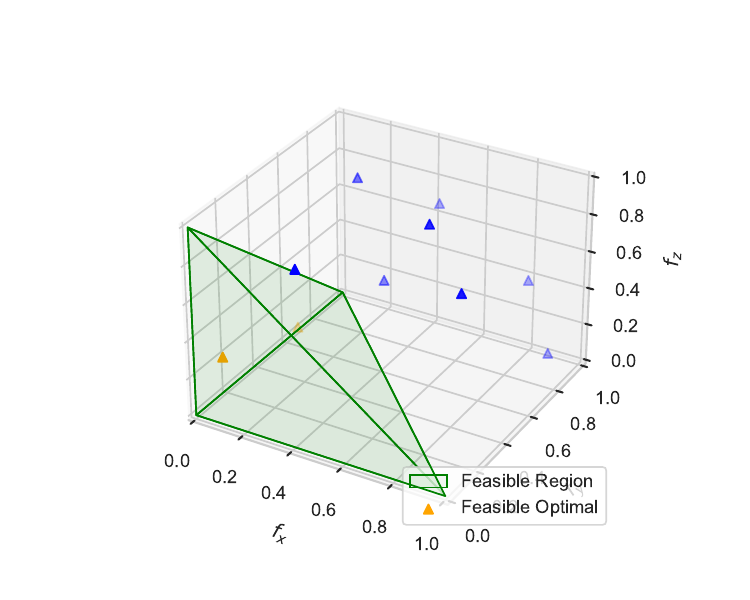}
\caption{Constrained PSI instances with (left to right): ordered polyhedron, cube, simplex. The green area denotes the feasible region.
} \label{fig:expe} 
\end{figure}

\subsection{Experiments on complex polyhedra}
\label{sec:add_expe}

We evaluate our algorithm on different polyhedra in $\bR^3$: an ordering polyhedron $\poly_1 := \{x \in \bR^3: x_i \leq x_{i+1},i \in  [2]\}$, a simplex $\poly_2 := \{x\in \bR^3: x_i \geq 0 \text{ and } \sum_{i} x_i \leq 1\}$ and finally a cube $\poly_3 :=\{x \in \bR^3 :0.15\leq x_i \leq 1, i \in [3]\}$. For each polyhedron, we define a multivariate bandit instance with means and feasible region depicted in \myfig{fig:expe}.

 We report in Table~\ref{tab:sample-complexity} the sample complexity of e-cAPE against that of the two-stage baseline and uniform sampling. We see that by focusing on responses with minimal cost, e-cAPE consistently achieves superior performance in our experiments. In some experiments, the uniform sampling approach combined with our stopping rule outperformed the two-stage algorithm. This outcome is anticipated, as our stopping rule is designed to trigger once any confidently correct answer can be identified. These findings underscore the competitiveness of both our sampling and stopping rules.

\begin{table}[h!]
\caption{Sample complexity of the algorithms averaged over 500 seeded runs.}
\label{tab:sample-complexity}
\centering
 \begin{tabular}{l c ccc c c} 
 \toprule
\multirow{3}{8em}{Experiment}&\multicolumn{6}{c}{Algorithms' empirical sample complexity}\\
  &\multicolumn{2}{c}{\bf e-cAPE}& \multicolumn{2}{c}{MD-APT+APE}& \multicolumn{2}{c}{Uniform}\\
  & Mean & Std &Mean & Std &Mean & Std \\
 \midrule
  Simplex &\bf 2679 &517 & 4886& 873 &12156 &2428\\
 \midrule
Hypercube  &\bf 19034 & 3158 & 59841& 10186 &56184 &14392 \\ 
 \midrule
 Ordered polyhedron & \bf 4313 &693 &7183& 792 &11895 &1932\\
 \bottomrule
 \end{tabular}
\end{table}

\paragraph{Time and memory complexity of e-cAPE} Since only the empirical means of the sampled arms $b_t, c_t$ change in each round, the Pareto set can be updated in time $\mathrm{O}(Kd)$ and only the feasibility gaps of $b_t, c_t$ are updated at round $t$. We recall that $\dist(\hat \mu_{t, i}, \partial \poly)$ has closed-form and computing $\dist(\hat \mu_{t, i}, \poly)$ is a quadratic program that can be solved efficiently with libraries like \texttt{CVXOPT}. The memory complexity is $\mathrm{O}(K^2)$, allocated only at initialization to store $\M(i,j; t)_{i,j \in [K]}$. In Table~\ref{tab:comp-algos} we report the runtimes of e-APE compared to that of the two-staged baseline and of uniform sampling.

 \begin{table}[htb]
  \caption{Runtimes of the algorithms averaged over 10 runs on the same instance.}
\centering
 \begin{tabular}{cc cccc c} 
\toprule
\multirow{3}{4.2em}{Experiment}&\multicolumn{6}{c}{Algorithms' runtimes (ms)}\\
  &\multicolumn{2}{l}{\bf e-cAPE}& \multicolumn{2}{c}{MD-APT+APE}& \multicolumn{2}{c}{Uniform}\\
  & Mean & Std &Mean & Std &Mean & Std \\[0.25ex]

  Dose finding (Figure~\ref{fig:secukinum}) &2759 &31 &54890& 381 &3787 &24\\
 \midrule 
 Hypercube &12300 &9 &18000& 2970 &2000 &28\\
 \midrule 
Simplex & 672 & 13 & 2890 & 52 & 7543 &  57  \\
 \midrule 
Ordered Polyhedron &4010 &7 &4460& 20 &10200 &40\\
  \bottomrule 
 \end{tabular}

  \label{tab:comp-algos}
\end{table}

\newcommand{\alga}{Algorithm~\ref{alg:safe_psi_exact}}

\subsection{The cost of explainability}\label{sec:cost_explainability}

In this experiment, we present the result of a preliminary experiment to ``compare'' e-cAPE with Game-cPSI (Algorithm~\ref{alg:safe_psi_exact}) our asymptotically optimal for the cPSI objective. 

The two algorithms are not designed for the same objectives, but we can start to compare them for the simplest cPSI one, by evaluating e-cAPE only on its final guess $\widehat{O}_{\tau}$ for $O^\star(\bm\mu)$.   
 Since e-cAPE is designed for the explainable case, it is expected to have a larger sample complexity. We evaluate both algorithms on a $5$-armed Gaussian instance in dimension 2 with means given in Figure~\ref{fig:appx_cost} (left). We set $\delta = 0.01$ and we report in Figure~\ref{fig:appx_cost} (right) the distribution of the sample complexity over $250$ seeded runs.

\begin{figure}[H]
\centering
		\includegraphics[width=0.34\linewidth]{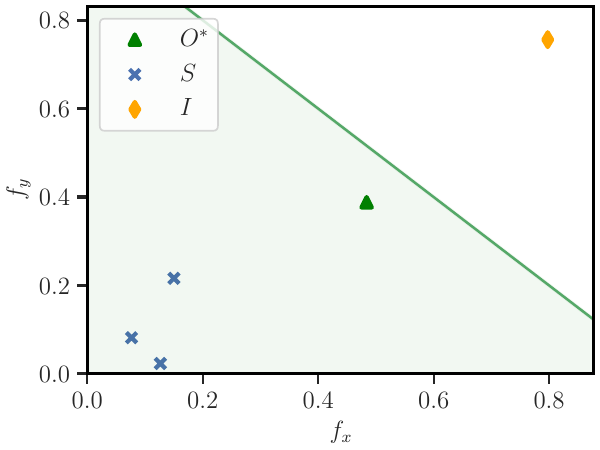}
		\includegraphics[width=0.34\linewidth]{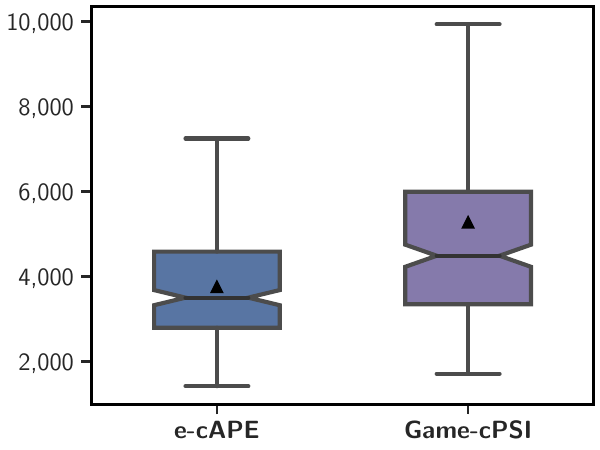}
\caption{Constrained PSI instance (left) and distribution of the empirical sample complexity (right) averaged over 250 runs. } \label{fig:appx_cost} 
\end{figure}

\paragraph{Sample complexity} Quite surprisingly, on that instance the sample complexity of e-cPSI is actually (slightly) smaller than of cPSI, despite the fact that the latter is asymptotically optimal for cPSI. There could be several explanations to this phenomenon: the performance of cPSI could be not so great in the moderate confidence regime, that is, for a fixed value of $\delta$ not tending to zero. Or, on this particular instance, the complexities of cPSI and e-cPSI could be close. This phenomenon will be investigated in future work by considering a larger benchmark. 

\paragraph{Error probabilities} Additionally to the sample complexity, it is interesting to look at the empirical error, for both the cPSI and the e-cPSI objective.  We let $\hat\delta_1$ denote the empirical correctness for the \ecpsi{} problem and $\hat\delta_2$ for the \cpsi{} problem. By definition, we always have $\hat\delta_1 \leq \hat\delta_2$. Although Game-cPSI is not designed for \ecpsi{} we can use its empirical infeasible set and empirical feasible sub-optimal to form a complete recommendation for \ecpsi{}. For this heuristic variant of Game-cPSI, we have reported $(\hat\delta_1, \hat\delta_2)=(0.39, 0)$  while for e-cPSI we have reported  $(\hat\delta_1, \hat\delta_2)=(0.01, 0)$. We remark that both algorithms are correct for their respective task. e-cPSI is by design also correct for \cpsi{}, while the heuristic version of Game-cPSI does not seem $\delta$-correct for e-cPSI, as suggested by the empirical results.  This suggest that there exist an intermediate regime between \cpsi{} and \ecpsi{} where the learner identifies the feasible Pareto Set before classifying the remaining arms.

\subsection{A Hard Instance for Racing} 
\label{sec:hard_for_racing}

In this experiment, we illustrate the limitations of Racing algorithms for constrained PSI as argued in Appendix~\ref{sec:racing}. We run the experiment with $K=5$ and $K=10$ in dimension 2 on Bernoulli instances. In both cases, we keep the same polyhedron, which corresponds to a half-space delimited by the \texttt{y} axis. The instance is chosen such that each arm is feasible and there is only one optimal arm. We set $\delta=0.01$ and we reported a negligible error. Each experiment is averaged over 250 runs. 

\begin{minipage}{\linewidth}
	\begin{minipage}{0.48\linewidth}
	\begin{figure}[H]
\centering
		\includegraphics[width=0.46\linewidth]{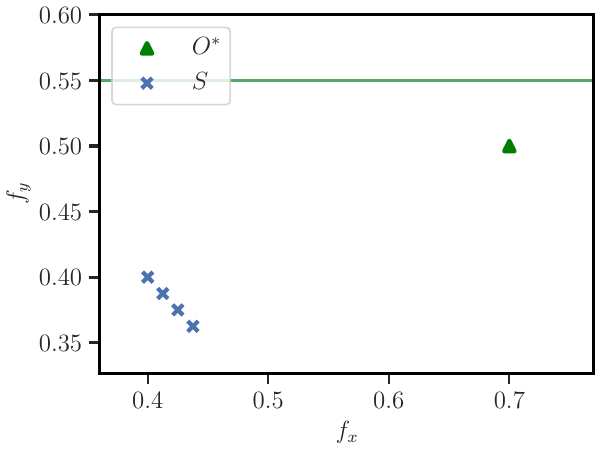}
		\includegraphics[width=0.46\linewidth]{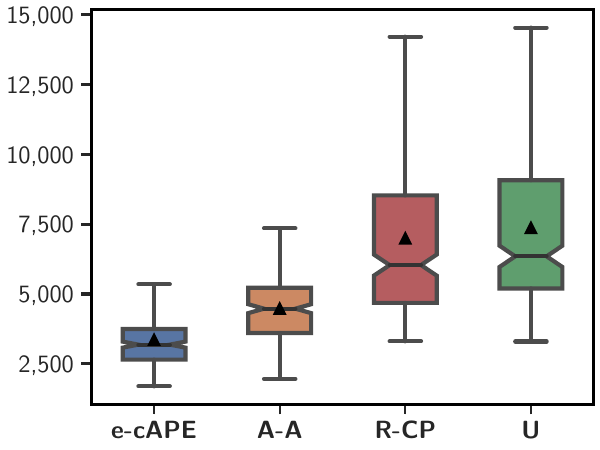}
\caption{Constrained $5$-armed PSI instance on the left and empirical distribution of the sample complexity (right). } \label{fig:appx_hard} 
\end{figure}
	\end{minipage}
	\begin{minipage}{0.48\linewidth}
	\begin{figure}[H]
\centering
		\includegraphics[width=0.46\linewidth]{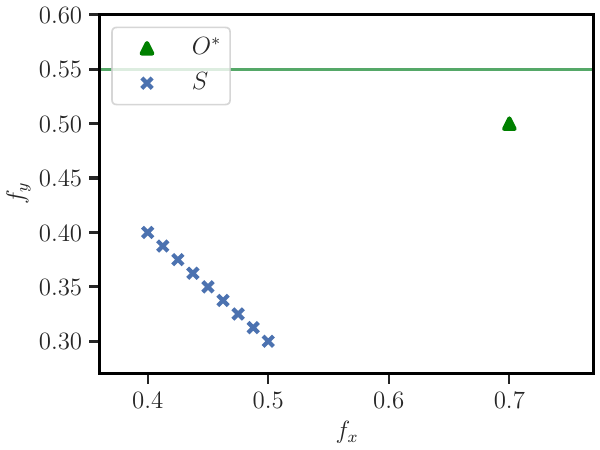}
		\includegraphics[width=0.46\linewidth]{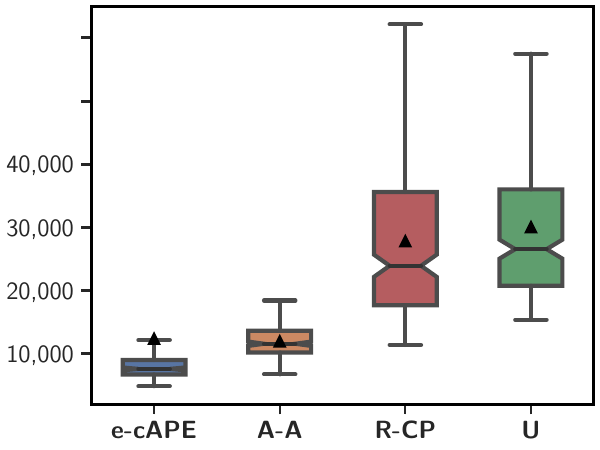}
\caption{Constrained $10$-armed PSI instance on the left and empirical distribution of the sample complexity (right). } \label{fig:appx_hard} 
\end{figure}
	\end{minipage}
\end{minipage}

First, we mention that in this kind of instance, both e-cAPE and the two-stage approach have a closed theoretical sample complexity (in particular because the feasible sub-optimal arms are far from the borders). The results of Figure~\ref{fig:appx_hard} show in this regime, \alga{} is largely outperformed by e-cAPE as expected. Even the two-stage algorithm outperforms \alga{} in this regime.

\end{document}